\def\BibTeX{{\rm B\kern-.05em{\sc i\kern-.025em b}\kern-.08em
		T\kern-.1667em\lower.7ex\hbox{E}\kern-.125emX}}
\newtheorem{lemma}{Lemma}
\newtheorem{corollary}{Corollary}
\newtheorem{remark}{Remark}
\newtheorem{definition}{Definition}
\newtheorem{theorem}{Theorem}
\newtheorem{proposition}{Proposition}
\newcommand{\verts}{S}
\newcommand{\edges}{\mathcal{E}}
\newcommand{\neighbor}{N}
\newcommand{\rad}{\text{rad}}
\newcommand{\pr}{\text{Pr}}
\newcommand{\gb}{\mathcal{GB}}
\DeclareMathOperator*{\argmax}{arg\,max}
\newcommand{\modified}[1]{{\color{black}{#1}}}
\newcommand\ucbcomparewidth{0.3\linewidth}
\newcommand\benchmarkwidth{0.3\linewidth}
\newcommand{\ifshortpaperthenelse}[2]{\ifthenelse{\boolean{short_paper}}{#1}{#2}}
\title{Multi-armed Bandit Learning on a Graph}
\author{Tianpeng Zhang$^{\dag1}$ \quad Kasper Johansson$^{\dag2}$ \quad Na Li$^{1}$
	\thanks{$^\dag$ T. Zhang and K. Johansson  contributed to this work equally.}
\thanks{$^{1}$ T. Zhang and N. Li are with Harvard School of Engineering and Applied Sciences, Cambridge, MA, USA. Emails: tzhang@g.harvard.edu, nali@seas.harvard.edu.  Zhang and Li are supported by ONR YIP N00014-19-1-2217, NSF AI institute 2112085, and NSF CNS 2003111. }
\thanks{$^{2}$ K. Johansson is from the Department of Electrical Engineering at Stanford University, Stanford, CA, USA. The work was done during K. Johansson's visit to Harvard for his master thesis project at KTH, Sweden, supported by the Lars Magnus Ericsson Research Foundation and the Rudolph Carl Norberg Foundation. Email: kasperjo@stanford.edu. }
}
\begin{document}

\maketitle

\begin{abstract}
	
	The multi-armed bandit(MAB) problem is a simple yet powerful framework that has been extensively studied in the context of decision-making under uncertainty. In many real-world applications, such as robotic applications, selecting an arm corresponds to a physical action that constrains the choices of the next available arms (actions). Motivated by this, we study an extension of MAB called the graph bandit, where an agent travels over a graph to maximize the reward collected from different nodes. The graph defines the agent's freedom in selecting the next available nodes at each step. We assume the graph structure is fully available, but the reward distributions are unknown. Built upon an offline graph-based planning algorithm and the principle of optimism, we design a learning algorithm, \texttt{G-UCB}, that balances long-term exploration-exploitation using the principle of optimism. We show that our proposed algorithm achieves $O(\sqrt{|S|T\log(T)}+D|S|\log T)$ learning regret, where $|S|$ is the number of nodes and $D$ is the diameter of the graph, which matches the theoretical lower bound $\Omega(\sqrt{|S|T})$ up to logarithmic factors. To our knowledge, this result is among the first tight regret bounds in non-episodic, un-discounted learning problems with known deterministic transitions. Numerical experiments confirm that our algorithm outperforms several benchmarks. 
\end{abstract}

\section{Introduction} \label{sec:intro}

The multi-armed bandit~(MAB) problem is a popular framework for studying decision-making under uncertainty~\cite{slivkins2019introduction,lattimore2020bandit}. The MAB consists of $n$ independent arms in the simplest setting, each providing a random reward from their corresponding probability distributions. Without knowing these distributions, an agent picks one arm for each round and tries to maximize the total expected reward in $T$ rounds. The agent needs to balance between \textit{exploring} arms to learn the unknown distributions and \textit{exploiting} the current knowledge by selecting the arm that has provided the highest reward. The MAB problem enjoys a wide range of applications, such as digital advertising~\cite{schwartz2017customer}, portfolio selection~\cite{huo2017risk}, optimal design of clinical trials~\cite{villar2015multi}, etc. Many online learning algorithms have been developed and analyzed, such as confidence bound (UCB) methods and Thompson Sampling~\cite{slivkins2019introduction,lattimore2020bandit}.

One limiting assumption of the classical MAB is that the agent has access to \textit{all} the arms at each time and can \textit{freely} switch between different arms, which is not the case for many real-world applications. For instance, when a robot explores an unknown physical environment, the location of the robot will influence the locations that the robot can visit next. Such constraints on the available arms can often be modeled as a graph not captured in the MAB framework. This paper aims to understand how the agent could balance exploration and exploitation under such graph constraints.

Specifically, we consider the problem that an agent traverses an undirected, connected graph. Upon arrival at node $s$, the agent receives a random reward $r_s$ from a probability distribution $P(s)$. The objective is to maximize the expected accumulated reward over $T$ steps. The agent \textit{knows} the graph structure but \textit{not} the reward distributions. We refer to this problem as the \textit{graph bandit} problem. There are many applications of the graph bandit problem, including street cleaning robots trying to maximize the trash-collection efficiency for a neighborhood ~\cite{Rayguru2021
}, a mobile sensor moving between different spots to find the place that receives the strongest signals, or a drone that provides internet access flying over a network of rural/suburban locations to maximize the use of the communication channels\cite{Qiu2019
}. The robot typically has a map (or a graph) of the environment in these applications. Still, each location's quality, i.e., the signal strength in the mobile sensor application or the demand for internet access/communication in the drone application, is often stochastic with unknown distributions.  

The graph bandit problem can be formulated as a reinforcement learning (RL) problem by modeling the underlying process as a Markov Decision Process (MDP) where the nodes are the states, and the next available nodes are the actions. \modified{Nevertheless, our setting is of additional interest because many robotic learning problems can be formulated as graph bandit problems, like those described previously.} \modified{ In these applications, directly applying generic RL algorithms may ignore the structure of the graph bandit problem, which could lead to high computational costs, high sample complexity, and/or high regrets, especially for large-size problems. Therefore, it is worth considering whether any improvement in computational efficiency and learning performance can be achieved by leveraging the graph bandit problem structure.} 

\textit{Contributions.} 
In this paper, we propose a learning algorithm, \texttt{G-UCB}, that adopts the principle of optimism by establishing Upper Confidence Bounds (UCB) for each node and carefully controlling the number of visits of the destination nodes under the planned policy. We note that \texttt{G-UCB} is a \textit{single-trajectory} learning scheme without forcing the agent to return to initial nodes and restarting the process. Compared with state-of-the-art UCB-based generic RL algorithms, like UCRL2\cite{Jaksch2010a}, we make two innovations that improve computational efficiency and learning performance using graph bandit structures: \textbf{i)} We propose a much more efficient offline planning algorithm by formulating offline planning as the shortest path problem (Remark~\ref{rem:computation}). \textbf{ii)} We propose a different UCB definition leading to prominent empirical performance improvement (Remark~\ref{rem:UCB}). 



We establish a minimax regret bound $O(\sqrt{|S|T\log(T)}+D|S|\log(T))$ that matches the theoretical lower bound $\Omega(\sqrt{|S|T})$ up to logarithmic factors (Remark~\ref{rem:MAB}). To our knowledge, this result is among the first tight regret bounds in non-episodic, un-discounted learning problems with known deterministic transitions. We also establish an instance-dependent bound $O(\frac{|S|\log(T)}{\Delta})$, where $\Delta$ is the `gap' in mean rewards between the top two nodes. Our results are tighter than the results in \cite{Jaksch2010a} under the graph bandit setting (Remarks~\ref{rem:RL} and \ref{rem:thm-Delta}). 
\ifshortpaperthenelse{Given the space limit, detailed proofs of the theorems are included in our technical report\cite[Appendices A and B]{OurTechnicalReport}.}{The detailed proofs can be found in Appendices \ref{appendix:thm-G-UCB} and \ref{appendix:thm-UCB-DEST-DELTA}.} 

Lastly, we show in simulations that our proposed algorithm achieves lower learning regret than several benchmarks, including state-of-the-art algorithms like UCRL2\cite{Jaksch2010a} and UCB-H\cite{ISQLPROVABLYEFFICIENT} (Section~\ref{sec:benchmark-on-grid}). We also demonstrate the applicability of our framework in a synthetic robotic internet provision problem (Section~\ref{sec:robot-application}). 
More numerical studies can be found in \ifshortpaperthenelse{our technical report\cite[Appendix F]{OurTechnicalReport}}{Appendix \ref{append:additional-simulations}}.

\subsection{Related work}

\noindent\textbf{MAB.} The graph bandit problem is an extension of MAB to model the case where an arm pulled at one time will constrain the next available arms. 
Note that if the graph is fully connected, the graph bandit problem becomes the classical MAB problem. Recent MAB work has considered the effect among arms pulled at different times. For instance, \cite{
	simchi2019phase,chen2020minimax} assume there is a cost or constraint for switching from one arm to a different arm. We could view this as a soft constraint for switching between arms, while the graph bandit models the hard constraints in switching arms. 

\vspace{5pt}

\noindent\textbf{Reinforcement Learning.} 
The graph bandit problem considered in this paper can be formulated as a \textit{non-episodic, un-discounted} RL problem with deterministic state-transition dynamics but random rewards. There has been a quickly growing literature in designing efficient RL algorithms, especially for the \textit{episodic} or \textit{infinite discounted} MDPs, e.g., \cite{
	NIPS2013_68053af2,
	ISQLPROVABLYEFFICIENT,
	he2021nearly,
	Azar2017
}. 
In contrast, we focus on the non-episodic, un-discounted MDPs. Under this setting, \cite{Jaksch2010a, agrawal2017optimistic} develop RL algorithms with near-optimal regret bounds, but our lower regret bound is lower, especially concerning the problem size, since we exploit the known transitions in graph bandit.  \modified{The line of work by \cite{neu2010online} and \cite{NIPS2013_68053af2} also consider RL problems under known transitions, but these studies are under the episodic setting, which is different from us. We are aware of only one work assuming both known transition and the non-episodic setting\cite{NIPS2010_7bb06076}.
	However, they assume the reward distribution changes adversarially in time, which is more complex than our setting. Their $O(S^{1/3}T^{2/3})$ regret is understandably worse than our result due to the additional complexity. }

\vspace{5pt}

\noindent\textbf{Stochastic shortest path.} 
Our problem is also closely related to the stochastic shortest path (SSP) problems~\cite{bertsekas2015dynamic}. Indeed, our offline planning algorithm is built upon a deterministic shortest-path problem. However, SSP work mainly considers the episodic setting~\cite{rosenberg2020near,tarbouriech2021stochastic}. Moreover, our graph bandit problem has unique features (deterministic node transitions but random rewards) that could facilitate the online learning algorithm design, making it different from SSP problems.

\vspace{5pt}
\modified{
	
	\noindent\textbf{Other related work.}
	The Canadian Traveller Problem could be an interesting extension to our work\cite{
		nikolova2008route}, where the graph is not fully known in advance but is gradually discovered as the agent travels. UCT\cite{coquelin2007bandit} and AMS\cite{chang2005adaptive} also use optimism to guide exploration-exploitation. They can be viewed as special cases of graph bandit, where the graph is a tree. One crucial difference is that we consider the infinite horizon undiscounted reward, while UCT/AMS consider the finite horizon or discounted reward. These studies are rather different from this paper but are good inspirations for future work.
}

\vspace{5pt}
\textit{Notation:} Let $x_{l:h}$ denote the ordered sequence of entities $(x_l,x_{l+1},...,x_{h})$. If $x_{l:h}$ are real numbers, $x_{l:h}\leqq x$~(or $\geq$) means $x_i\leqq x$~(or $\geq$) for all $i = l,l+1,...,h$. For $n\leq m$, $[n:m]$ denotes the set of integers $\{i:n\leq i\leq m\}$, and for $m\geq 0$, $[m]=[0:m]$. Both $[\mathbf{p,q}]$ and $(\mathbf{p,q})$ denote the concatenation of two finite sequences $\mathbf{p}$ and $\mathbf{q}$. Given a set $S$, $|S|$ denotes its carnality. For two finite sets $V$ and $U$, let $V\times U =\{(u,v): u\in U, v\in V\}$ denote the Cartesian product between $U$ and $V$.  

\section{Problem formulation}\label{sec:problem-formualtion}
We consider an agent traveling over an undirected, connected graph, denoted by $G=(S,\edges)$ with vertices $S=\{1,2,...,n\}$ and edges $\edges\subseteq S\times S$. For each $s\in S$, denote $\neighbor_s=\{v\in S:(s,v)\in \edges\}\cup\{s\}$ as its neighboring nodes. Note that $s\in \neighbor_s$ for all $s\in S$.
Each $s\in S$ is associated with a reward distribution $P(s)$. Whenever the agent visits a node $s$, it receives a random reward, $r_s$, sampled independently from $P(s)$.  
We assume all rewards are bounded in $[0,1]$. Let $\mu_s$ be the expected reward at node $s$, i.e., $\mu_s=\mathbb{E}[r_s]$ and denote $\mu^*=\max_s \mu_s$ the highest expected reward in the graph. Without loss of generality, we assume that there is a unique node $s^*$ such that $\mu_{s^*}=\mu^*$. Given a path of nodes $s_{0:T}=(s_0,s_1,...,s_{T})$ with $T\geq 1$, we say that the path is \textit{admissible} if $(s_t,s_{t+1})\in\mathcal{E}$ for all $t=0,1,...T-1$ and define its \textit{length} as $T$. If a path contains only one node, the path is also admissible with \textit{length} $0$. For any two nodes $(s,s')$, let $l(s,s')$ be the length of the shortest path connecting them, i.e., $l(s,s'):=\min\{T: s_{0:T} \text{ is an admissible path with } s_0=s, s_T=s'\}.$ We denote the diameter of the graph as 
\begin{equation}\label{eq:diameter}
	D:=\max_{s,s'\in S} l(s,s') .
\end{equation}

We let $V(s_{0:T})$ be the expected cumulative reward for an admissible path $s_{0:T}$.
\begin{equation}
	V(s_{0:T}) :=\mathbb{E}[\sum_{t=0}^T r_{t}]= \sum_{t=0}^T \mu_{s_t},
\end{equation}
where $r_t$ is the reward collected at node $s_t$ at time $t$. 

Motivated by the applications in Section~\ref{sec:intro}, we assume the agent knows the graph $G$ but does not know the reward distributions $P$. The goal of the agent is to travel on the graph to maximize the expected reward $V(s_{0:T})$ over the period $[0:T]$. To do so, the agent must balance the exploration and exploitation while following the graph constraints. At each time $t$, a learning algorithm $\mathcal{A}$ decides the next admissible node that the agent should visit based on the travel and reward histories. To evaluate the performance of the learning algorithm, we define the regret of an algorithm $\mathcal{A}$ as follows:
\begin{equation}\label{def:learning-regret}
	\begin{aligned}
		&R^*(\mathcal{A},s_0,T):= \mathbb{E}[T\mu^* - \sum_{t=1}^T r_{t}],\\
	\end{aligned}
\end{equation}
where $r_t$ is the reward at time step $t$ resulting from executing the algorithm $\mathcal{A}$ and $s_0$ is the initial node. The definition of $R^*(\mathcal{A},s_0,T)$ follows the standard definition of regret in the un-discounted, infinite horizon RL literature\cite{Jaksch2010a,agrawal2017optimistic}. It measures the difference between the highest attainable expected reward, $T\mu^*$, and the expected accumulated reward of the executed path.
We want to highlight that we consider the online learning setting, where the learning is over a \textit{single} trajectory without restarting the learning process by forcing the agent to return to $s_0$. Also, the regret definition is \textit{un-discounted} over the trajectory, and  $T$ could be any positive integer.


\section{Offline Planning}
This section studies the offline planning problem, where the goal is to find the optimal action policy given that the mean rewards $\{\mu_s:s\in S\}$ are known. 
State-of-the-art RL algorithms\cite{Jaksch2010a,agrawal2017optimistic} often use value iteration methods in their planning components. However, it is well-known that value iterations are computationally expensive since the number of iterations can be very large. In this section, we propose a much more efficient planning algorithm by formulating the offline planning problem as a shortest path (SP) problem. This planning algorithm will be used in our learning algorithm.

Define the expected loss/cost of visiting a node $s$ as 
$$c_s:=\mu^*-\mu_s=\mathbb{E}(\mu^*-r_s).$$ 


Let $\pi: S\rightarrow S$ be a policy that maps one node $s$ to a neighboring node $\pi(s)\in N_s$. We consider an infinite-time-horizon un-discounted offline planning problem as follows.
\vspace{-6pt}
\begin{equation}\label{eq:def_optim_problem_c_policy_inft}
	\begin{aligned}			
		\min_{\pi}&\  R_\infty(\pi,s_0):=\lim_{T\rightarrow \infty} \sum_{t=0}^T c_{s_t} \\
		~\text{such that}~& s_{t+1}=\pi(s_{t})\in N_{s_t},~\forall t=0,1,2,...
	\end{aligned}
\end{equation}
Note that any policy $\pi$ that transits from $s_0$ to $s^*$ in a finite number of steps has a finite $R_\infty(\pi,s_0)$ value, so Problem~\eqref{eq:def_optim_problem_c_policy_inft} has a finite optimal value. Furthermore, for any path that does not end up at $s^*$, $R_\infty(\pi,s_0)$ is infinite. Hence, the optimal path ends up at $s^*$. We could bound the optimal value of $R_\infty (\pi, s_0)$ by $D$ as shown in the following Lemma.
\begin{lemma}\label{lem:bound_Rinfty}
	The optimal value of Problem~\eqref{eq:def_optim_problem_c_policy_inft} is upper bounded by $D$ where $D$ is the diameter of the graph $G$ as in \eqref{eq:diameter}.
\end{lemma}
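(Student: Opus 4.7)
The plan is to prove the upper bound by exhibiting an explicit feasible policy whose cost is at most $D$, which then bounds the optimum from above.

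First, I would record two elementary facts about the costs $c_s = \mu^* - \mu_s$. Since the rewards lie in $[0,1]$, so do the mean rewards $\mu_s$, and hence $c_s \in [0,1]$ for every $s \in S$. Moreover, at the optimal node we have $c_{s^*} = \mu^* - \mu_{s^*} = 0$. The second fact is the crucial one, because it means that once the agent reaches $s^*$ it incurs zero cost per step, so an infinite tail sum can contribute nothing, provided we can stay at $s^*$. Staying is allowed: by the definition of $N_s$ in the problem formulation, $s^* \in N_{s^*}$, so the self-loop $\pi(s^*)=s^*$ is admissible.

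Next, I would construct the candidate policy. Let $s_0, s_1, \dots, s_k$ be a shortest admissible path from $s_0$ to $s^*$, where $k = l(s_0, s^*) \leq D$ by the definition of the diameter in \eqref{eq:diameter}. Since this path is a shortest path, no node is repeated, so we may consistently define $\pi$ by $\pi(s_t) = s_{t+1}$ for $t = 0, 1, \dots, k-1$, and $\pi(s^*) = s^*$. On nodes not appearing in the path, $\pi$ may be extended arbitrarily (for example by a self-loop), since only the trajectory starting from $s_0$ affects $R_\infty(\pi, s_0)$.

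Finally, I would evaluate $R_\infty(\pi,s_0)$. Under this policy, the trajectory visits $s_0, s_1, \dots, s_{k-1}$ and then stays at $s^*$ forever. Therefore
\begin{equation*}
R_\infty(\pi, s_0) = \sum_{t=0}^{k-1} c_{s_t} + \sum_{t=k}^{\infty} c_{s^*} = \sum_{t=0}^{k-1} c_{s_t} \leq k \leq D,
\end{equation*}
where the first inequality uses $c_s \leq 1$ for each $s$ and the second uses $k = l(s_0,s^*) \leq D$. Since this feasible $\pi$ achieves value at most $D$, the minimum in \eqref{eq:def_optim_problem_c_policy_inft} is at most $D$, completing the proof. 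There is no serious obstacle; the only point requiring a moment's care is ensuring the policy is well-defined (no conflicting assignments on repeated nodes), which is handled automatically by the non-repetition of shortest paths.
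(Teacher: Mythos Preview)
Your proof is correct and follows essentially the same approach as the paper: exhibit the shortest-path-to-$s^*$ policy (with self-loop at $s^*$), note that each step cost $c_s\in[0,1]$ since rewards lie in $[0,1]$, and conclude that the total cost is at most the path length, which is at most $D$. You simply spell out in more detail points the paper leaves implicit, such as the well-definedness of the policy and the vanishing of the tail sum at $s^*$.
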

\begin{proof}
	The proof is straightforward by noticing that the shortest path policy $\pi$ with $s^*$ being the destination node incurs a bounded cost $R_\infty (\pi, s_0)\leq D$ regardless of $s_0$. This is because $r_s\in[0,1]$ for every node $s$ and $D$ is the graph diameter.
\end{proof}

By noting that the cost at $s^*$ is $0$, i.e., $c_{s^*}=0$, it is not difficult to see that Problem~\eqref{eq:def_optim_problem_c_policy_inft} could be viewed as a deterministic shortest-path problem~\cite{bertsekas2015dynamic} with the destination node being $s^*$. Algorithm \ref{alg:DP-offline} shows the pseudo-code for solving this shortest-path problem.
We define a weighted directed graph $\hat{G}=(S,\hat{\mathcal{E}},\mathcal{D})$ based on the original graph $G$, where $\hat{\mathcal{E}}$ are the directed edges converted from the un-directed edges in $E$. $\mathcal{D}$ stands for the distances for $(s,s')\in\hat{\mathcal{E}}$, where the distance from node $s$ to a neighboring node $s'\in N_s$ is $a_{ss'}:=c_{s'}$. We define the \textit{total distance} of an admissible path as the sum of distances between the consecutive nodes. 
The optimal policy $\pi_{\textbf{SP}}$ to Problem~\eqref{eq:def_optim_problem_c_policy_inft} is then the shortest path policy to $s^*$ on $\hat{G}$, which can be computed using any shortest-path algorithm such as Dijkstra's algorithm and Bellman-Ford algorithm \cite{cormen2022introduction}.  It is worth noting that $\pi_{\textbf{SP}}$ must take the agent to $s^*$ in no greater than $|S|$ steps since it cannot revisit any sub-optimal nodes.

\begin{algorithm}
	\caption{Offline SP planning algorithm 
	}
	\label{alg:DP-offline}
	\textbf{Input:} Graph $G=(S,\edges)$, mean reward vector $\mu=(\mu_1,\mu_2,\dots,\mu_{|S|})$
	
	\textbf{Output:} Policy $\pi_{\textbf{SP}}: S\rightarrow S$ 
	\begin{algorithmic}[1]
		\STATE $\mu^*\leftarrow\max(\mu_1,\ldots,\mu_{|s|})$,  \STATE $s^*\leftarrow \arg\max_s (\mu_1,\ldots, \mu_{|S|})$
		\STATE Define distance $a_{ss'}:=c_{s'}=\mu^*-\mu_{s'}$ for all $(s,s')\in \edges$
		\STATE Let $\hat{G}=(S,\hat{\edges},\mathcal{D})$ be a directed version of $G$ where $\mathcal{D}$ stands for the distances for $(s,s')\in \hat{\mathcal{E}}$.
		
		\STATE $\pi_{\textbf{SP}}\leftarrow \texttt{Dijkstra}(\hat{G},s^*)$, or $ \texttt{Bellman-Ford}(\hat{G},s^*)$ \\
		where $\pi_{\textbf{SP}}(s)$ is the node that leads to the path with the shortest total distance from $s$ to $s^*$.
		
	\end{algorithmic}
\end{algorithm}
\begin{remark}\label{rem:computation}
	\textup{The computation complexity of SP algorithms is $O(|S|\cdot|\edges|)$ in the worst case, but in practice, the total computation is often much less. 
		\ifshortpaperthenelse{The technical report\cite[Appendix F-B]{OurTechnicalReport}}{Appendix \ref{append:Computation}} shows that our algorithm under SP planning could run 1.6--11 times faster than the UCRL2 algorithm under value iteration.
	}
\end{remark}

\section{Optimistic Graph Bandit Learning}
This section presents a learning algorithm in graph bandit, where the agent knows the graph structure but not the reward distributions. The algorithm is built on the offline SP planning algorithm and the principle of optimism. 
We demonstrate that harnessing the graph bandit problem structure \textit{properly} makes learning more efficient than generic RL algorithms. 

\subsection{The Algorithm}

\begin{algorithm}[ht]
	\caption{\texttt{G-UCB}: Graph-UCB Algorithm }\label{alg:G-UCB}
	\begin{algorithmic}[1]
		\renewcommand{\algorithmicrequire}{\textbf{Input:}}
		\renewcommand{\algorithmicensure}{\textbf{Result:}}
		
		\REQUIRE
		The initial node $s_0$. The offline SP planning algorithm $\textbf{SP}$ in Algorithm~\ref{alg:DP-offline} that computes the optimal policy defined given the graph $G$ and a set of reward values $\{\hat{\mu}_s\}_{s\in S}$ for each node: $\hat{\pi} \gets \textbf{SP}(G,\{\hat{\mu}_s\}_{s\in S})$.
		
		\STATE $m\gets 0$
		\STATE  Follow any path that visits all nodes at least once.\texttt{//Initialize $U_0$.}
		
		\STATE Place the agent at $s_0$. $s_{curr}\gets s_0$.
		
		\WHILE{The agent hasn't received a stopping signal}
		
		\STATE $m\gets m+1$
		
		\STATE Calculate the UCB values $U_{m-1}(s)$ for all $s\in S$ according to \eqref{eq:UCB}.
		
		\STATE $\pi_m\gets \textbf{SP}(G,\{U_{m-1}(s)\}_{s\in S})$

		\WHILE{$U_{m-1}(s_{curr})< \max_s U_{m-1}(s)$}
		\STATE Execute $\pi_m$ for one step. $s_{curr}\gets \pi_m(s_{curr})$
		
		\STATE Collect reward at $s_{curr}$.
		\ENDWHILE
		
		\STATE $dest_m\gets s_{curr}$. Keep collecting rewards at node $dest_m$ until the number of its samples doubles compared to its sample number at the beginning of episode $m$.
		\texttt{//This operation ensures $n_m(dest_m) = 2n_{m-1}(dest_m)$.}

		\ENDWHILE
	\end{algorithmic}
\end{algorithm}

Algorithm \ref{alg:G-UCB} presents the pseudo-code for the online learning algorithm. Although the algorithm runs in a single trajectory without restarts, we divide the time steps into episodes with growing lengths for conceptual convenience. We define the UCB (upper-confidence-bound) value for each node according to the following rule, 
\vspace{-6pt}
\begin{equation}\label{eq:UCB}
	U_{m-1}(s) = \bar{\mu}_{m-1}(s)+ \sqrt{\frac{2\log(t_{m})}{n_{m-1}(s)}},
\end{equation}
where $t_{m}$ is the total number of time steps taken by the agent from episode $0$ to $m-1$ (episode $0$ is the initialization), $\bar{\mu}_{m-1}(s)$ is the average reward at $s$ and $n_{m-1}(s)$ is the number of samples of at node $s$ during the same period. 

The algorithm initializes the UCB values by visiting all nodes at least once. One possible method is to visit the nodes one by one, following paths with the shortest \textit{lengths}. 
At the beginning of each episode $m\geq 1$ (the end of episode $m-1$) after the initialization, the algorithm updates the UCB values with Eq.\eqref{eq:UCB}. Then the algorithm calls the offline SP planning algorithm $\bf{SP}$ in Algorithm~\ref{alg:DP-offline} to compute an optimal policy $\pi_m$, using the latest UCB values as the surrogates of the true expected rewards. The agent executes $\pi_m$ until it arrives at a node with the highest UCB value in the graph and then keeps sampling this node until the number of its samples doubles. 
Finally, the agent repeats the process by moving to the next episode. We denote the online learning algorithm as $\texttt{G-UCB}$.

\subsection{Main Theorems}
Theorem \ref{thm:G-UCB} states the theoretical guarantee on Algorithm \ref{alg:G-UCB}'s  learning regret. 
\begin{theorem}\label{thm:G-UCB}
	Let $T\geq 1$ be any positive integer. Recall that $|S|$ is the number of nodes, and $D$ is the diameter of the graph $G$ defined in \eqref{eq:diameter}.
	The regret of \texttt{G-UCB} (Algorithm \ref{alg:G-UCB}), after taking $T$ steps beyond the initialization is bounded by
	\begin{equation} \label{eq:thm-regret-1}
		\begin{aligned}
			R^*(\textup{\texttt{G-UCB}},s,T)
			\leq 
			O(\sqrt{|S|T\log(T)}+D|S|\log(T))
		\end{aligned}
	\end{equation}
	Note if furthermore $T\geq D^2|S|\log(T)$, the above reduces to
	$$
	\begin{aligned}
		&R^*(\textup{\texttt{G-UCB}},s,T)\leq O(\sqrt{|S|T\log(T)})
	\end{aligned}
	$$ 
\end{theorem}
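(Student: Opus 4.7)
The plan is to decompose the total regret by episodes, then control each episode via the optimism of the UCB construction together with the doubling schedule. Write
\[
R^*(\texttt{G-UCB},s_0,T) = \mathbb{E}\Bigl[\sum_{t=1}^T (\mu^* - r_{t})\Bigr],
\]
and, for each episode $m\geq 1$, split the steps into a \emph{transient phase} during which the agent follows $\pi_m$ toward the UCB-maximizing node $dest_m$, and a \emph{sampling phase} during which the agent stays at $dest_m$ until $n_m(dest_m)=2n_{m-1}(dest_m)$. The initialization contributes an additive $O(D|S|)$ term, which is absorbed below. The first analytical step is to define the good event
\[
\mathcal{G}=\bigcap_{m\geq 1}\bigcap_{s\in S}\bigl\{|\bar\mu_{m-1}(s)-\mu_s|\leq \sqrt{2\log(t_m)/n_{m-1}(s)}\bigr\},
\]
and use Hoeffding's inequality together with a union bound to show $\Pr(\mathcal{G}^c)\leq O(T^{-c})$ for a suitable constant $c$, so its contribution to the expected regret is $O(1)$. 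On $\mathcal{G}$, $U_{m-1}(dest_m)=\max_s U_{m-1}(s)\geq U_{m-1}(s^*)\geq \mu^*$; combined with the Hoeffding upper bound on $U_{m-1}(dest_m)$, this yields the key gap inequality
\[
\mu^*-\mu_{dest_m}\leq 2\sqrt{2\log(t_m)/n_{m-1}(dest_m)}.
\]

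For the transient regret, since $\pi_m$ is a shortest-path policy on the UCB-weighted directed graph produced by Algorithm~\ref{alg:DP-offline}, the agent reaches $dest_m$ in at most $l(s_{curr},dest_m)\leq D$ steps; with per-step regret bounded by $1$, each episode contributes at most $D$ to the transient regret. The doubling rule caps the total number of episodes by $M\leq |S|\log_2 T$, since each node can serve as destination at most $\lceil\log_2 T\rceil$ times before its count would exceed $T$, yielding total transient regret $O(D|S|\log T)$. For the sampling regret, episode $m$ contributes $n_{m-1}(dest_m)(\mu^*-\mu_{dest_m})\leq 2\sqrt{2\log(T)\,n_{m-1}(dest_m)}$. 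Grouping episodes by destination node $s$ and using the geometric growth $n_{m_i-1}(s)=2^{i-1}n_{m_1-1}(s)$, the per-node sum telescopes to $O(\sqrt{N(s)\log T})$, where $N(s)$ is the total sampling time at $s$. Cauchy-Schwarz then gives $\sum_s\sqrt{N(s)\log T}\leq\sqrt{|S|\log T\cdot\sum_s N(s)}\leq\sqrt{|S|T\log T}$. Summing the transient and sampling contributions produces \eqref{eq:thm-regret-1}, and when $T\geq D^2|S|\log T$ the term $D|S|\log T$ is dominated by $\sqrt{|S|T\log T}$, giving the second bound.

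The main obstacle I expect is the sampling-regret telescoping: without the doubling schedule, the per-step UCB sum $\sum_t n_t(s)^{-1/2}$ would interleave across episodes and fail to collapse cleanly to a $\sqrt{N(s)}$ bound; and because $\log(t_m)$ inside the confidence radius itself grows with $m$, one must uniformly bound $\log t_m\leq\log T$ so that the geometric series over $i$ is not spoiled by a varying logarithmic prefactor. A secondary subtlety is calibrating the Hoeffding tail exponent so that the union bound over $(m,s)$—with $m$ ranging up to $O(|S|\log T)$—still gives $\Pr(\mathcal{G}^c)=O(T^{-c})$; and verifying that the transient visit to $dest_m$ at the end of the travel phase does not break the alignment between $n_{m-1}(dest_m)$ used in the UCB and the number of samples actually collected during the sampling phase, so that the per-episode sampling regret is indeed $O(\sqrt{\log(T)\,n_{m-1}(dest_m)})$ up to constants.
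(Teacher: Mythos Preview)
Your sampling-phase argument (gap inequality at $dest_m$, geometric telescoping per node, Cauchy--Schwarz across nodes) is correct and is essentially the paper's ``price of optimism'' term specialized to the destination node. The episode count $M\le |S|\log_2 T$ is also right. But there is one genuine gap.

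The claim ``since $\pi_m$ is a shortest-path policy on the UCB-weighted directed graph, the agent reaches $dest_m$ in at most $l(s_{curr},dest_m)\le D$ steps'' is false. The policy $\pi_m$ minimizes the \emph{UCB-cost} $\sum_t(U^*_{m-1}-U_{m-1}(s_t))$, not the hop count; the UCB-optimal route can deliberately detour through many high-UCB nodes to avoid a single low-UCB node, so the transit length is only guaranteed to be cycle-free, i.e.\ at most $|S|-1$, not $D$. With your crude per-step bound of $1$ this yields $O(|S|^2\log T)$ for the total transient regret, not the claimed $O(D|S|\log T)$, and the theorem as stated does not follow. The paper obtains the $D$ factor by a different mechanism: it does \emph{not} bound the number of transit steps, but instead writes the per-step regret on the clean event as $(\mu^*-U_{m-1}(s_t))+2\,\mathrm{rad}_{m-1}(s_t)$ and bounds the first sum over the \emph{entire} episode by comparing the executed path to a reference path that reaches $dest_m$ in $D_m\le D$ hops and then stays. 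UCB-optimality of $\pi_m$ gives $\sum_{t=1}^{H_m}U_{m-1}(s_{t_m+t})\ge(H_m-D_m)U^*_{m-1}$, and since $U^*_{m-1}\ge\mu^*$ on the clean event the whole sum collapses to $D_m\mu^*\le D$. This comparison is where the diameter actually enters, and it is not recoverable from a hop-count argument.

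A smaller inaccuracy: with the confidence radius $\sqrt{2\log(t_m)/n_{m-1}(s)}$ fixed by the algorithm, a global union bound over $(m,s,n)$ does not give $\Pr(\mathcal{G}^c)=O(T^{-c})$ for any $c>0$; the per-event tail is $2t_m^{-4}$, and after union over $n\in[1,t_m]$, $s\in S$, and $m\le M$ the sum is only $O(1)$. The paper handles the bad event episode by episode, showing $H_m\,P_m[{}^\neg\mathcal{E}_{m-1}]\le 4t_m^{-1}$ and summing to $O(\log M)$. Your conclusion that the bad-event contribution is negligible is still correct, but the route you state needs this adjustment.
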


\textit{Proof Sketch:}
We describe the outline of the proof below. The detailed proof is in \ifshortpaperthenelse{the technical report\cite[Appendix A]{OurTechnicalReport}}{Appendix \ref{appendix:thm-G-UCB}}. Define the random variable $M$ as the number of episodes at the cutoff time $T$. The doubling operation in line $12$ ensures $M$ is logarithmic in $T$ almost surely, i.e., $M\leq O(|S|\log(T))$,

For episode $m\geq 0$, we define the clean event $\mathcal{E}_m=$\{$\forall s\in S$, $\mu_s\in [\bar{\mu}_{m}(s)-\rad_{m}(s), \bar{\mu}_{m}(s)+\rad_{m}(s)]$\}, where $\rad_{m}(s) =  \sqrt{\frac{2\log(t_{m+1})}{n_m(s)}}$ are the confidence radii. Define  $^\neg\mathcal{E}_m$ as the bad event. We can decompose the regret via the clean/bad event distinction and use standard arguments involving Hoeffding's inequality(such as in ~\cite{slivkins2019introduction}) to show that the probabilities of the bad events are so small that the regret contributed by the bad events is bounded by $ O(\log(M)) = O(\log(|S|)+\log\log(T))$.

The regret under 
the clean events consists of two terms:
\vspace{-6pt}
\begin{equation}
	\begin{aligned}  \underbrace{2 \sum_{m=1}^M\sum_{t=1}^{H_m} \rad_{m-1}(s_{t_m+t})}_\textrm{price of optimism}+
		\underbrace{\sum_{m=1}^M\sum_{t=1}^{H_m} \mu^*-U_{m-1}(s_{t_m+t})}_{\textrm{cost of destination switch}} 
	\end{aligned}
\end{equation} 
The first component, `price of optimism', measures the regret from using the UCB as a surrogate for the actual mean rewards. 
\modified{To bound this component, we start by changing the summation over $t\in [1:H_m]$ to be a summation over $s\in S$.
	\begin{equation*}
		\begin{aligned}
			\sum_{m=1}^{M}\sum_{t=1}^{H_m}\rad_{m-1}(s_{t_m+t})=&
			\sum_{m=1}^{M}\sum_{t=1}^{H_m}\sqrt{\frac{2\log(t_{m})}{n_{m-1}(s_{t_m+t})}}\\
			\leq &\sqrt{2\log(T)}\sum_{m=1}^{M}\sum_{s\in S}\frac{c_m(s)}{\sqrt{n_{m-1}(s)}}
		\end{aligned}
	\end{equation*}
	Here, $c_m(s)$ denotes the number of samples at node $s$ during episode $m$. The summation 
	$\sum_{s\in S}\frac{c_m(s)}{\sqrt{n_{m-1}(s)}}$
	can be bounded using the inequality in \cite[Appendix C.3]{Jaksch2010a}, which states that for any sequence of numbers $z_1,z_2,...,z_n$ with $0\leq z_k\leq Z_{k-1}:=\max\{1,\sum_{i=1}^{k-1} z_i\}$, there is
	\begin{equation}\label{ineq:jacksh2010_main_text}
		\sum_{k=1}^n \frac{z_k}{\sqrt{Z_{k-1}}}\leq (\sqrt{2}+1)Z_{n}
	\end{equation}
	The proof of the eq. \eqref{ineq:jacksh2010_main_text} is in \ifshortpaperthenelse{the technical report\cite[Appendix C]{OurTechnicalReport}}{Appendix \ref{append:jacksh2010}}. In our case, 
	$c_{i-1}(s)$ corresponds to $z_i$ while $n_{i-1}(s)$ corresponds to $Z_i$. Since the \textbf{SP} policy does not visit a suboptimal node twice, while the doubling operation ensures the samples at $dest_m$ exactly double, there must be $c_m(s)\leq n_{m-1}(s)$. Therefore we can apply eq. \eqref{ineq:jacksh2010_main_text} and Jensen's inequality to get
	\begin{equation*}
		\begin{split}
			\sum_{m=1}^{M}&\sum_{s\in S}\frac{c_m(s)}{\sqrt{n_{m-1}(s)}} = \sum_{s\in S}\sum_{m=1}^{M}\frac{c_m(s)}{\sqrt{n_{m-1}(s)}}\\
			\leq& \sum_{s\in \verts} (\sqrt{2}+1)\sqrt{n_{M}(s)}\leq (\sqrt{2}+1) \sqrt{|\verts|T}\\
		\end{split}
	\end{equation*}
	
	So the first term is upper bounded by $O(\sqrt{|S|T\log(T)})$. 
}

The second component, `the cost of destination switch', measures the regret from visiting sub-optimal nodes during the transit to $dest_m$ in episode $m$. It can be shown that such transit induces at most $O(D)$ regret per episode.
So the total cost of the destination switch is bounded by $O(MD)=O(D|S|\log(T))$. The bound in the theorem is reached after combining all the components.\qed


\begin{remark}[Tightness of the result]\label{rem:MAB}
	\textup{In classical MAB, it is known that the regret is lower bounded by $\Omega(\sqrt{KT})$~\cite{slivkins2019introduction,Auer2002}, where $K$ is the number of arms.
		\modified{Note that on a fully connected graph, the graph bandit becomes equivalent to the MAB problem. Therefore, $\Omega(\sqrt{|S|T})$ is a lower bound to the graph bandit problem, and our regret $O(\sqrt{|S|T\log(T)})$ matches the lower bound up to logarithmic factors. This means our regret is the best possible.}
	}
\end{remark}

\begin{remark}[Comparison to state-of-the-art RL results]\label{rem:RL}
	\textup{Our result shows the benefit of utilizing the graph structure compared to state-of-the-art generic RL algorithms.
		A $O(D|S|\sqrt{|A|T\log(T)})$ regret bound is established in the general RL setting in \cite{Jaksch2010a}, where $|S|$, $|A|$ are the number of states and actions. This regret has a considerably worse dependence on $|S|$ than our regret bound $O(\sqrt{|S|T\log(T)})$, which is because their analysis has not utilized the graph bandit problem structures.  
		We found that, if assuming known transitions, their regret can be improved to $O((D+\sqrt{|S|A})\sqrt{T\log(T)})$ since the first term in their regret Eq. (19) vanishes. 
		This regret is still worse than our $O(\sqrt{|S|T\log(T)}$ regret due to the additional $\sqrt{|A|}$ factor, but we believe it could be further improved if their analysis carefully utilizes other graph bandit properties.
		Meanwhile, \cite{agrawal2017optimistic} establishes a $O(D\sqrt{|S||A|T\log(T)})$ regret. But in contrast to our work, they assume known rewards and unknown transition dynamics thus it is unclear what their regret will become under our settings.
	}
\end{remark}

\begin{remark}[Choice of Bonus function in UCB]\label{rem:UCB}
	\textup{
		The UCB in \cite{Jaksch2010a} is in defined as \begin{equation}\label{eq:UCRL2-UCB}
			\begin{aligned}
				\tilde{U}_{m-1}(s) &= \bar{\mu}_{m-1}(s)+\sqrt{\frac{7\log(SA t_m/\delta)}{2n_{m-1}(s)}},
			\end{aligned}
		\end{equation}  where $S, ~A$ are the number of states and actions, and $\delta\in(0,1]$ is a confidence parameter. The UCB definition above has an explicit dependence on $S$ and $A$, which is absent in our UCB(Eq. \eqref{eq:UCB}). This dependence on $S$ and $A$ arises in the confidence radius due to the analysis under the general RL setting, but our analysis under the graph bandit setting does not hint at such dependence. Directly applying this UCB to graph bandit should result in more exploration and higher regret than our UCB. This intuition is confirmed in the numerical experiments of \ifshortpaperthenelse{our technical report\cite[Appendix F-C]{OurTechnicalReport}}{Appendix \ref{append:improvement-by-UCB}}.}
\end{remark}
We also derive a regret bound that depends on the gap $\Delta$ between the best and second-best mean rewards; see Theorem \ref{thm:UCB-DEST-DELTA}. If all mean rewards are not the same, we define $\Delta:=\mu^* - \max_{s\in S,\mu_s<\mu^*}\mu_s$. \footnote{Regret bounds depending on $\Delta$ as in Theorem \ref{thm:UCB-DEST-DELTA} are known as instance-dependent bounds, while $\Delta$-independent bounds as in Theorem \ref{thm:G-UCB} are known as minimax bounds.}
\begin{theorem}[Instance-dependent bound]\label{thm:UCB-DEST-DELTA}
	Let $\Delta:=\mu^*-\hat{\mu}$ where $\hat{\mu}$ is the second best mean reward.  We also define $L$ as the maximal length of cycle-free paths in the graph. Note that $D\leq L\leq |S|$.
	Under the same assumptions as Theorem \ref{thm:G-UCB}, the regret of \textup{\texttt{G-UCB}} (Algorithm \ref{alg:G-UCB}) is bounded by.
	$$
	\begin{aligned}
		&R^*(\textup{\texttt{G-UCB}},s,T)\\
		\leq& O\bigg( |S|(L-1)[\log\log(T)+\log(\frac{1}{\Delta})] +\frac{|S|\log(T)}{\Delta}\bigg)
	\end{aligned}
	$$
\end{theorem}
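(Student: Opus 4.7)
\textit{Proof proposal.} The plan is to mirror the structure of the proof of Theorem \ref{thm:G-UCB} but sharpen the estimates using the gap $\Delta$. First, I would condition on the same clean events $\mathcal{E}_m$ used in Theorem \ref{thm:G-UCB}, so that the contribution of bad events to the regret is negligible (at most $O(\log\log T)$). Next, I would decompose the per-episode regret into a \emph{transit} piece (regret accumulated while the agent walks from its current location to $\mathrm{dest}_m$, at most $L-1$ steps of cost at most $1$ each, since the \textbf{SP} policy produces a cycle-free path of length at most $L-1$) and a \emph{doubling} piece (regret accumulated while dwelling at $\mathrm{dest}_m$ to double the sample count).

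The key lemma is a bound on how often a suboptimal node can be chosen as the destination. Under the clean event, if $\mathrm{dest}_m = s \neq s^*$, then $U_{m-1}(s) \geq U_{m-1}(s^*) \geq \mu^*$, while $U_{m-1}(s) \leq \mu_s + 2\,\mathrm{rad}_{m-1}(s)$. Combining gives $\mathrm{rad}_{m-1}(s) \geq \Delta/2$, hence
$$
n_{m-1}(s) \;\leq\; \frac{8\log(t_m)}{\Delta^{2}} \;\leq\; \frac{8\log(T)}{\Delta^{2}}.
$$
Because the doubling operation forces $n_{m}(s) = 2 n_{m-1}(s)$ on every episode in which $s$ serves as destination, the number $K_s$ of such episodes satisfies $2^{K_s - 1} \leq 8\log(T)/\Delta^{2}$, i.e. $K_s \leq O(\log\log T + \log(1/\Delta))$. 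Summing over the at most $|S|-1$ suboptimal nodes yields a total count of suboptimal-destination episodes of order $|S|\,[\log\log T + \log(1/\Delta)]$.

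From there, the two terms of the bound come out cleanly. For the transit piece, a switch to a new destination only occurs when either the previous or current destination differs, so the number of episodes carrying a nonzero transit cost is at most twice the number of suboptimal-destination episodes; multiplying by the per-episode transit budget $L-1$ gives the first term $O(|S|(L-1)[\log\log T + \log(1/\Delta)])$. For the doubling piece, only suboptimal destinations incur cost; for each such $s$ the doubling-telescoped visits add up to at most $2 \cdot 8\log(T)/\Delta^{2} = O(\log(T)/\Delta^{2})$, which multiplied by the per-visit cost $\Delta_s \leq 1$ and summed over $s$ gives $O(|S|\log(T)/\Delta)$, using $\Delta_s \cdot (\log(T)/\Delta^{2}) \leq \log(T)/\Delta$ since $\Delta \leq \Delta_s$.

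The main obstacle I expect is the careful episode bookkeeping: ensuring that (i) the doubling argument $2^{K_s-1} \leq 8\log(T)/\Delta^{2}$ correctly lower-bounds $n_{m_{K_s}-1}(s)$ despite additional visits that may accrue during transits, and (ii) the count of transit-bearing episodes is correctly bounded in terms of suboptimal-destination episodes (including the initial transit from $s_0$, the "arrival" transits into $s^*$ that follow each suboptimal destination, and sequences of consecutive suboptimal destinations). Once these accounting steps are in place, the rest is essentially the same machinery as in Theorem \ref{thm:G-UCB}, and the final bound follows by summing the clean-event contributions with the $O(1)$ bad-event contribution.
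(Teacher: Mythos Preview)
Your proposal follows essentially the same route as the paper's proof: condition on the clean events, split each episode's regret into a transit piece (bounded by $L-1$) and a doubling piece at $\mathrm{dest}_m$, use the clean-event inequality to cap $n_{m-1}(s)$ for any suboptimal destination $s$, invoke doubling to show each such $s$ can be destination only $O(\log\log T + \log(1/\Delta))$ times, and sum. The paper handles the episode bookkeeping you flag in (ii) exactly as you suggest, via the indicator $\mathbf{1}\{\mathrm{dest}_{m-1}\neq s^* \text{ or } \mathrm{dest}_m\neq s^*\}\leq \mathbf{1}\{\mathrm{dest}_{m-1}\neq s^*\}+\mathbf{1}\{\mathrm{dest}_m\neq s^*\}$.

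One genuine slip to fix in your doubling-piece estimate: you bound $n_{m-1}(s)\leq 8\log(T)/\Delta^2$ with the \emph{global} gap $\Delta$, and then write ``$\Delta_s\cdot(\log(T)/\Delta^2)\leq \log(T)/\Delta$ since $\Delta\leq\Delta_s$.'' That inequality goes the wrong way: $\Delta\leq\Delta_s$ gives $\Delta_s/\Delta^2\geq 1/\Delta$, not $\leq$. The repair is precisely what the paper does---retain the per-node gap $\Delta_s=\mu^*-\mu_s$ in the sample-count bound, i.e.\ $n_{m-1}(s)\leq O(\log T)/\Delta_s^{2}$, so that after multiplying by the per-visit cost $\Delta_s$ you obtain $O(\log T)/\Delta_s\leq O(\log T)/\Delta$. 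With this correction your argument matches the paper's.
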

See the \ifshortpaperthenelse{technical report\cite[Appendix B]{OurTechnicalReport}}{Appendix \ref{appendix:thm-UCB-DEST-DELTA}} for detailed proof. 

\begin{remark}\label{rem:thm-Delta}
	\textup{
		Instance-dependent bounds have been previously investigated under similar settings. In particular, the UCRL2 algorithm achieves regret $O(\frac{D^2|S|^3\log(T)}{\Delta})$\cite{Jaksch2010a} on connected graphs. If ignoring the $\log\log(T)$ and $\log(\frac{1}{\Delta})$ terms,  our results have much better  dependence on $|S|$ and $D$ than \cite{Jaksch2010a}.}
\end{remark}

\section{Numerical Experiments}\label{sec:numerical-experiments}
\subsection{Benchmark Experiments}\label{sec:benchmark-on-grid}
The simulation code can be found at \cite{Zhang_Graph-Bandit_2022}. Due to the space limit, we defer most numerical experiments to \ifshortpaperthenelse{our technical report\cite[Appendix F]{OurTechnicalReport}}{Appendix \ref{append:additional-simulations}}. Here, we only use the experiment on a $10\times 10$ grid graph as an example of demonstration, while the results for other graph types are similar and can be found in \ifshortpaperthenelse{the technical report\cite[Appendix F-A]{OurTechnicalReport}}{Appendix \ref{append:all-benchmark-all-graph}}. 
We compare our proposed algorithm to the following benchmark algorithms.

\begin{itemize}
	\item 
	Local UCB and Local TS. These benchmarks always move to the node with the highest UCB or posterior sampling value in the current neighborhood.
	
	\item
	QL-$\epsilon$ greedy\cite{sutton2018reinforcement} and QL-UCB-H\cite{ISQLPROVABLYEFFICIENT}. $Q$-learning algorithms with $\epsilon$-greedy exploration and state-of-the-art Hoeffding-style exploration bonus.
	
	\item
	UCRL2. For a fair comparison, we adjusted the original UCRL2 algorithm \cite{Jaksch2010a} so that this benchmark knows the deterministic transition dynamics.
	
\end{itemize}
We run 100 simulations for each algorithm on the $10\times 10$ grid graph, with $T=2\times 10^4$ steps per simulation. We initialize the mean rewards from $\mu_s \sim\mathcal{U}(0.5,9.5)$, for each node $s\in S$. In the simulations the reward distributions are defined as $P(s)=\mathcal{U}(\mu_s-0.5,\mu_s +0.5)$, $\forall s\in S$. Fig. \ref{fig:benchmark-on-grid} demonstrates a clear advantage of \texttt{G-UCB} over the benchmarks. 
UCRL2 performs much better than other benchmarks but is still not as good as \texttt{G-UCB}. \ifshortpaperthenelse{The technical report\cite[Appendix F-C]{OurTechnicalReport}}{Appendix \ref{append:improvement-by-UCB}} shows that the advantage of \texttt{G-UCB} over UCRL2 is mostly due to our more proper definition of UCB as in Remark \ref{rem:UCB}.

\begin{figure}[ht]
	\centering
	\includegraphics[width=0.6\linewidth]{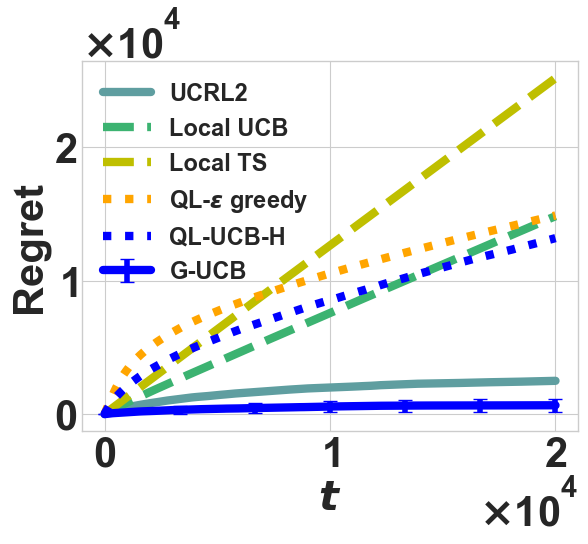}
	\caption{Learning regrets of the benchmarks on the grid graph. All regret curves in the figure indicate the average across $100$ simulations, and error bars indicate one standard deviation. We plot the error bars only for \texttt{G-UCB} for visualization clarity.}\label{fig:benchmark-on-grid}
\end{figure}

\subsection{Synthetic robotic application}\label{sec:robot-application}
This section presents a synthetic robotic application modeled as a graph bandit problem. Consider a drone traveling over a network of rural/suburban locations to provide internet access, as illustrated in Fig. \ref{fig:application}. The robot serves one location per period. The reward for serving the location is the communication traffic carried by the robot in gigabytes, sampled independently from a distribution associated with the location. The robot may stay at the current location or move to a neighboring location for the next period. It has a map of its service area but does not know the reward distributions. The goal is to maximize the total reward before the robot is called back and recharged. 
We perform numerical experiments on the graph defined in Fig. \ref{fig:application}, representing ten counties with lower than $50\%$ broadband coverage in AR, USA. We run our algorithm for $100$ simulations on this graph with the reward distributions initialized as in Fig. \ref{fig:benchmark-on-grid}. Fig. \ref{fig:applicationRegret} shows the robot consistently achieves sub-linear regrets in the simulations. \begin{figure}[ht]
	\centering
	\subfloat[Illustration]{\includegraphics[width=0.4\linewidth]{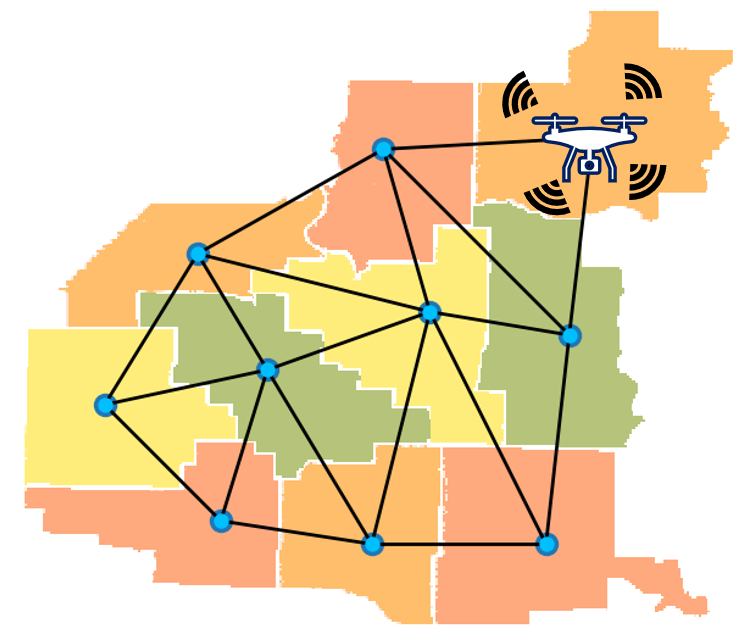}    \label{fig:application}}\quad
	\subfloat[Simulations]{\includegraphics[width=0.48\linewidth]{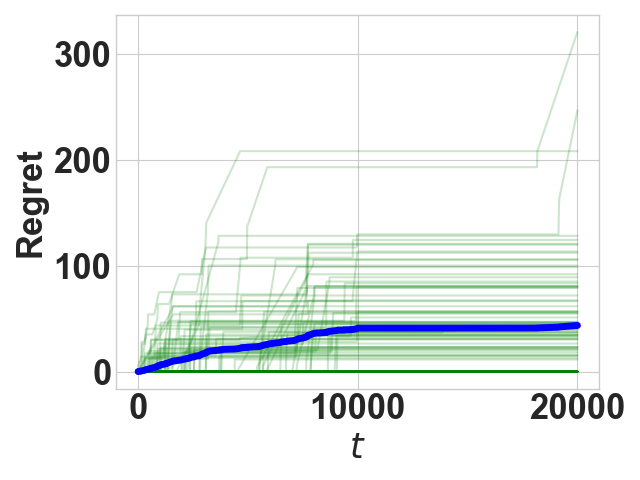}\label{fig:applicationRegret}}
	\caption{Robotic Application. The realized regrets of 100 simulations are in light green, and their average is in blue.
	}
\end{figure}

\section{Conclusion and Future Work}
This paper studies an extension of the MAB problem: the graph bandit problem, where the current selection determines the set of available arms through a graph structure. We show that offline planning can be converted into a shortest-path problem and solved efficiently. Based on the offline planning algorithm and the principle of optimism, we develop a UCB-based graph bandit learning algorithm, \texttt{G-UCB}, and establish a minimax regret bound for the algorithm that matches the theoretical lower bound.
Extensive numerical experiments verify our theoretical findings and show that our algorithm achieves better learning performance than the benchmarks. Our synthetic robotic application illustrates the real-world applicability of the graph bandit framework. Future directions include introducing time-varying reward distributions and/or extensions to the multi-agent settings.

\bibliographystyle{IEEEtran}
\bibliography{citations}

\clearpage
\ifshortpaperthenelse{\iffalse}{\iftrue}

\appendices
\section{Proof of Theorem \ref{thm:G-UCB}}\label{appendix:thm-G-UCB}
The proof of Theorem \ref{thm:G-UCB} is largely inspired by the techniques of the multi-armed bandit literature~\cite{slivkins2019introduction,lattimore2020bandit}. 
The relevant quantities in the proof are defined below. Let the initialization be episode $0$ of the algorithm. Denote $H_m$ as the number of steps taken in episode $m$ for any $m\geq 0$. Denote $\{t_m\}_{m\geq 1}$ as the times when the algorithm updates its mean reward estimation and confidence radii,
\begin{equation*}\label{eq:single-traj-tm}
	\begin{aligned}
		t_m &:= \sum_{i=0}^{m-1} H_i = t_{m-1}+H_{m-1},\forall m\geq 1\\
	\end{aligned}
\end{equation*}
, so that episode $m$ of the algorithm consists of the time steps $[t_{m}+1:t_{m+1}]$. Define $c_m(s)$ as the number of times $s$ is visited in episode $m$, and $n_m(s)$ counts the same number up to the end of episode $m$. 
\begin{equation*}\label{eq:single-traj-c-and-n}
	\begin{aligned}
		n_m(s) & := |\{i:s_i=s,0\leq i\leq t_{m+1}\}|,\forall m\geq 0\\
		c_m(s) & := |\{i:s_i=s,t_m+1\leq i\leq t_{m+1}\}|,\forall m\geq 0\\
	\end{aligned}
\end{equation*}

Define the confidence radius and estimated mean reward for node $s$ based on the data collected in episodes $[0:m]$ as
\begin{equation*}
	\begin{aligned}
		\rad_m(s)& := \sqrt{\frac{2\log(t_{m+1})}{n_m(s)}}\\
		\bar{\mu}_m(s)&= \frac{\sum_{i=1}^{n_m(s)}r_s^i}{n_m(s)}
	\end{aligned}
\end{equation*}
where $r_s^i$ is the reward received after visiting $s$ for the $i$'th time.

Define the UCB for node $s$ based on the observations up to the end of episode $m$ as
\begin{equation*}
	U_{m}(s) = \bar{\mu}_{m}(s)+\rad_m(s)
\end{equation*}

Define the random variable $M = \max\{m:t_m\leq T+t_1\} $ as the last episode before time step $T+t_1$, see Fig. \ref{fig:def-iterations} for an illustration. Let $dest_m$ be the terminal node of episode $m$, that is $dest_m=s_{t_{m+1}}$. 

\begin{figure}[ht]
	\centering
	\includegraphics[width=0.45\textwidth]{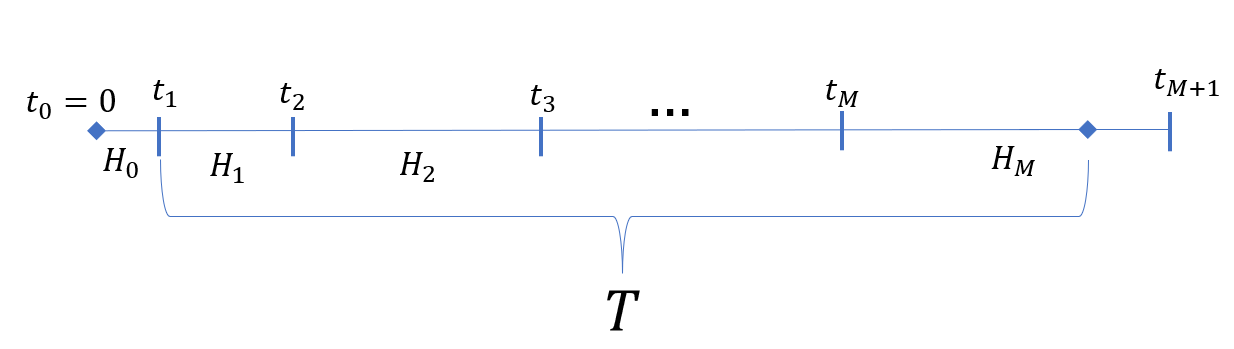}
	\caption{Relationship of the episodes and $M$ with respect to $T$.}
	\label{fig:def-iterations}
\end{figure}

\begin{proof}

	\hfil
	
	\textbf{Clean/bad event decomposition}
	
	\hfil
	
	The foundation of the proof is the distinction between the clean and bad events, which are defined as follows: let the clean event $\mathcal{E}_m$ be the event that the expected rewards fall into their corresponding confidence intervals for episode $m$, $$\begin{aligned}
		\mathcal{E}_m=&\{~\forall s\in S,\\
		&\mu_s\in [\bar{\mu}_m(s)-\rad_m(s),\bar{\mu}_m(s)+\rad_m(s)] \}
	\end{aligned}$$, so that the bad event $^\neg \mathcal{E}$ is the complement of the clean event, $$^\neg \mathcal{E}_m=\cup_{s\in S} \{|\bar{\mu}_m(s)-\mu_s|> \rad_m(s)\}.$$
	
	We use $\mathcal{F}_m = \sigma(H_{0:m})$ to denote the smallest $\sigma$-algebra where $H_{0:m}$ are measurable.  We will next consider the regret induced in a particular episode $m\geq 1$ condition on the episode lengths in $\mathcal{F}_m$.
	\begin{equation}\label{eq:cond-iteration-regret}
		\begin{aligned}
			&E[H_m\mu^*-V(s_{t_m+1:t_{m+1}})|\mathcal{F}_{m}]\\
		\end{aligned}
	\end{equation}
	Note that  $H_m$ and $t_m$ are measurable in $\mathcal{F}_{m}$.
	
	In what follows, we use $E_m[\cdot]$ as the equivalent of $E[\cdot|\mathcal{F}_{m}]$, and $P_m[\cdot]$ as the equivalent of $\pr[\cdot|\mathcal{F}_{m}]$ for notational simplicity. We proceed by decomposing eq. \eqref{eq:cond-iteration-regret} via the clean/bad event distinction.
	\begin{equation}
		\begin{aligned}
			&E_m[H_m\mu^*-V(s_{t_m+1:t_{m+1}})]\\
			=&E_m[H_m\mu^*-V(s_{t_m+1:t_{m+1}})|\mathcal{E}_{m-1}]P_m[\mathcal{E}_{m-1}]\\
			+&E_m[H_m\mu^*-V(s_{t_m+1:t_{m+1}})|^\neg \mathcal{E}_{m-1}]P_m[^\neg\mathcal{E}_{m-1}]\\
			\leq& E_m[H_m\mu^*-V(s_{t_m+1:t_{m+1}})|\mathcal{E}_{m-1}] + H_m P_m[^\neg\mathcal{E}_{m-1}]
		\end{aligned}
	\end{equation}
	
	The term $H_m P_m[^\neg\mathcal{E}_{m-1}]$ can be bounded using Hoeffding's inequality, as shown below.
	\begin{equation}
		\begin{aligned}
			&P_m[^\neg\mathcal{E}_{m-1}]\\
			\leq&\sum_{s\in S}P_m[|\bar{\mu}_{m-1}(s)-\mu_s|> \rad_m(s)],\text{ union bound.}\\
			\leq &\sum_{s\in S}\sum_{j=1}^{t_{m}}P_m[|\bar{\mu}_{m-1}(s)-\mu_s|> \sqrt{\frac{2\log(t_m)}{n_{m-1}(s)}}|n_{m-1}(s)=j]
		\end{aligned}
	\end{equation}
	In the second inequality, we use union bound over possible values of $n_{m-1}(s)$, and also the fact that $n_{m-1}(s)\in [1:t_{m}]$ for $m\geq 1$. Then note that
	\begin{equation}
		\begin{aligned}
			&P_m[|\bar{\mu}_{m-1}(s)-\mu_s|> \sqrt{\frac{2\log(t_{m})}{n_{m-1}(s)}}|n_{m-1}(s)=j]\\
			=& P_m[|\frac{\sum_{\tau=1}^j r_s^\tau}{j}-\mu_s|> \sqrt{\frac{2\log(t_{m})}{j}}]\\
			\leq & 2\exp(-2\frac{2j\log(t_{m})}{j}),\text{ Hoeffing's inequality. }\\
			=&2t_{m}^{-4}
		\end{aligned}
	\end{equation}
		
		
	
	Therefore,
	\begin{equation*}
		\begin{aligned}
			&H_mP_m[^\neg\mathcal{E}_{m-1}]       \leq  2H_m|S|t_{m}t_{m}^{-4}\leq 2H_m|S|t_{m}^{-3}
		\end{aligned}
	\end{equation*}
	
	Since the optimal path leading up to $dest_m$ is cycle-free, $dest_m$ must be reached after executing $\pi_m$ for $|S|$ steps, after which it is sampled for $n_{m-1}(dest_m)$. Therefore,
	\begin{equation}\label{eq:Hm_Bound}
		\begin{aligned}
			H_m\leq &|S|+n_{m-1}(dest_m)\leq |S|+t_m
		\end{aligned}
	\end{equation}
	Further using the fact that $t_m\geq t_1 \geq |S|$ due to initialization, we have
	
	\begin{equation}\label{eq:bad-event}
		\begin{aligned}
			H_m P_m[^\neg \mathcal{E}_{m-1}] \leq 4t_{m}^{-1}
		\end{aligned}
	\end{equation}
	Therefore, the unconditional regret induced in episode $m$ is bounded by:
	\begin{equation}
		\begin{aligned}
			&E[H_m\mu^*-V(s_{t_m+1:t_{m+1}})]\\
			=&E_{H_{0:m}}[E[H_m\mu^*-V(s_{t_m+1:t_{m+1}})|\mathcal{F}_m]]\\
			\leq &E[H_m\mu^*-V(s_{t_m+1:t_{m+1}})|\mathcal{E}_{m-1}]+4E[t^{-1}_{m}]
		\end{aligned}
	\end{equation}
	where $E_{H_{0:m}}$ means the expectation is taken over all possible values of $H_{0:m}$.
	
	Next, we re-write the component conditioned on $\mathcal{E}_{m-1}$ above as follows.
	\begin{equation}
		\begin{aligned}
			&E[V(s_{t_m+1:t_{M}})|\mathcal{E}_{m-1}]\\=&E[\sum_{t=1}^{H_m} \mu_{s_{t_m+t}}|\mathcal{E}_{m-1}]\\
			\geq& E[\sum_{t=1}^{H_m} \bar{\mu}_{m-1}(s_{t_m+t})-\rad_{m-1}(s_{t_m+t})|\mathcal{E}_{m-1}]\\
			,&\text{ definition of }\mathcal{E}_{m-1}.\\
			=&E[-2\sum_{t=1}^{H_m} \rad_{m-1}(s_{t_m+t})		+	\sum_{t=1}^{H_m} U_{m-1}(s_{t_m+t})|\mathcal{E}_{m-1}]\\
		\end{aligned}
	\end{equation}
	Therefore, conditioned on the the clean event $\mathcal{E}_{m-1}$, there is
	\begin{equation}\label{eq:85}
		\begin{aligned}
			&E[H_m\mu^* - V(s_{t_m+1:t_{m+1}})|\mathcal{E}_{m-1}]\\
			\leq &E[2\sum_{t=1}^{H_m} \rad_{m-1}(s_{t_m+t})+ \sum_{t=1}^{H_m}(\mu^*-U_{m-1}(s_{t_m+t}))|\mathcal{E}_{m-1}]
		\end{aligned}
	\end{equation}
	
	So the total regret is bounded by
	\begin{equation}
		\begin{aligned}
			&E[T\mu^*-V(s_{t_1:t_1+T})] \\
			\leq & E[(t_{M+1}-t_1)\cdot \mu - V(s_{t_1:t_{M+1}})]\\
			=&E\bigg( \sum_{m=1}^{M}H_m\mu^*-V(s_{t_m+1:t_{m+1}})\bigg)\\
		\end{aligned}
	\end{equation}
	
	Further combining with \eqref{eq:bad-event}, \eqref{eq:85} we have 
	\begin{equation}\label{eq:R*-decomposition}
		\begin{aligned}
			&E[T\mu^*-V(s_{t_1:t_1+T})] \\
			\leq & 2E[\sum_{m=1}^{M}E[\sum_{t=1}^{H_m}\rad_{m-1}(s_{t_m+t})|\mathcal{E}_{m-1}]] \\
			+&  E[\sum_{m=1}^{M}E[\sum_{t=1}^{H_m}(\mu^*-U_{m-1}(s_{t_m+t}))|\mathcal{E}_{m-1}]]\\
			&+4E[\sum_{m=1}^{M} t_{m}^{-1}]
		\end{aligned}
	\end{equation}
	
	We will bound each of the three terms in \eqref{eq:R*-decomposition}. Before that, we establish some important facts below.
	
	\begin{lemma}\label{lemma:tM+1}
		$t_{M+1}$ is at most $3(T+t_1)$.
	\end{lemma}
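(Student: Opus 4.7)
The plan is to bound $t_{M+1}$ by bounding the length $H_M$ of the last completed episode and then invoking the definition of $M$. First I would recall the definition $M=\max\{m: t_m \leq T+t_1\}$, which immediately gives $t_M \leq T+t_1$ and $t_{M+1} = t_M + H_M$, so the whole task reduces to controlling $H_M$ in terms of $t_M$.

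Next, I would reuse the bound on the episode length established earlier in Eq.~\eqref{eq:Hm_Bound}, namely $H_m \leq |S| + n_{m-1}(dest_m) \leq |S| + t_m$ for any $m \geq 1$. The two ingredients here are that the SP policy reaches $dest_m$ within $|S|$ steps (because the optimal cycle-free path has at most $|S|$ nodes) and that the doubling operation adds exactly $n_{m-1}(dest_m)$ samples at $dest_m$, which is trivially bounded by the total number of steps $t_m$ taken so far. Applying this with $m = M$ yields $H_M \leq |S| + t_M$.

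Chaining the two bounds gives $t_{M+1} \leq 2 t_M + |S| \leq 2(T+t_1) + |S|$. The final step is to absorb the $|S|$ term using the initialization guarantee $t_1 \geq |S|$ (the algorithm follows a path visiting every node at least once in episode $0$), which implies $|S| \leq T + t_1$ and therefore $t_{M+1} \leq 3(T+t_1)$, as desired. I expect no real obstacle: this is essentially an accounting argument, and the only subtle input is the doubling-based bound $H_m \leq |S| + t_m$ that has already been proved.
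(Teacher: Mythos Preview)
Your proposal is correct and follows essentially the same argument as the paper: both use the episode-length bound $H_M \leq |S| + t_M$ from Eq.~\eqref{eq:Hm_Bound}, combine it with $t_M \leq T + t_1$, and then absorb the $|S|$ term via the initialization guarantee $|S| \leq t_1$. There is no meaningful difference between the two proofs.
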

	\begin{proof}
		Consider episode $M$, where the agent travels for $H_M$ steps. By \eqref{eq:Hm_Bound}, we know $H_M\leq |S|+t_M$. Therefore
		\begin{equation*}
			\begin{aligned}
				t_{M+1} = & t_M+H_M\\
				\leq & 2t_M+|S|\\
				\leq & 3(T+t_1),\text{ since }t_M\leq T+t_1, |S|\leq t_1= H_0.
			\end{aligned}
		\end{equation*}
		Here $|S|\leq H_0$ because all nodes must be visited once during initialization.
	\end{proof}
	
	\begin{lemma}\label{lemma:M}
		$M$ is logarithmic in $T$. Specifically,
		\begin{equation*}
			M\leq |S|\frac{\log(3(T+t_1))}{\log(2)}
		\end{equation*}
	\end{lemma}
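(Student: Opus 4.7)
The plan is to exploit the doubling operation in line 12 of Algorithm~\ref{alg:G-UCB}, which forces the sample count at $dest_m$ to double at the end of each episode $m\geq 1$. Because the count at any fixed node can double only a logarithmic number of times before exhausting the budget $t_{M+1}$, summing over the $|S|$ nodes will yield the claimed bound.

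Concretely, for each $s\in S$ let $k_s$ denote the number of episodes in $[1:M]$ for which $s$ is the destination, i.e. $s=dest_m$. Since every episode has exactly one destination, we have $M=\sum_{s\in S} k_s$. After initialization we have $n_0(s)\geq 1$ for every $s$ (every node is visited at least once), and each time $s$ is the destination its sample count is doubled. Therefore, after $s$ has served as destination $k_s$ times, the count at $s$ satisfies $n_M(s)\geq 2^{k_s}$. On the other hand, the total number of samples collected by the end of episode $M$ is $t_{M+1}$, so certainly $n_M(s)\leq t_{M+1}$. Combining these gives $2^{k_s}\leq t_{M+1}$, i.e. $k_s\leq \log_2(t_{M+1})$.

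Summing over $s\in S$,
\begin{equation*}
M=\sum_{s\in S} k_s \leq |S|\log_2(t_{M+1}) \leq |S|\,\frac{\log(3(T+t_1))}{\log 2},
\end{equation*}
where the last inequality uses Lemma~\ref{lemma:tM+1}, which was already established just above. This completes the argument.

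There is no real obstacle here; the only subtlety is keeping track of which step causes each doubling (so that one correctly identifies $k_s$ with the number of episodes terminating at $s$) and making sure the initialization guarantees $n_0(s)\geq 1$ so that the doubling lower bound $n_M(s)\geq 2^{k_s}$ applies uniformly. Both points follow directly from the algorithm's specification.
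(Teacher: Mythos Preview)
Your proof is correct and follows essentially the same route as the paper: both exploit that the doubling rule forces $n_M(s)\geq 2^{k_s}$ whenever $s$ has been a destination $k_s$ times, then combine with $t_{M+1}\leq 3(T+t_1)$. The only cosmetic difference is that the paper sums the inequalities $n_{m_{k(s)}}(s)\geq 2^{k(s)}$ over $s$ and applies Jensen to $\sum_s 2^{k(s)}$, whereas you bound each $k_s\leq \log_2 t_{M+1}$ individually before summing; both arrive at the same stated bound.
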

	\begin{proof}
		
		Consider any node $s\in S$. Let the episodes where $s$ is the terminal node be $$(M,m_2,...,m_{k(s)})= \{i\in [1:M]:dest_{m}= s\}$$, where $k(s)$ is the number of times $s$ becomes the destination node.  Line 12 of Algorithm \ref{alg:G-UCB} is designed so that if $s$ is $dest_m$, then it will be visited at least $n_{m-1}(s)$ times in episode $m$, doubling the number of its total visitation; otherwise, $s$ will be visited at most once in that episode. So for any $i\in [2:k(s)-1]$ there is
		\begin{equation*}
			n_{m_i}(s)= 2n_{m_{i}-1}(s) \geq 2 n_{m_{i-1}}(s)
		\end{equation*}
		And thus
		\begin{equation*}\label{eq:second-term-doubling}
			\begin{aligned}
				&\therefore n_{m_{i}}\geq 2^{i-1}n_{M}(s)\geq 2^{i}\\
				&\therefore n_{m_{k(s)}}\geq 2^{k(s)}
			\end{aligned}
		\end{equation*}
		where we used the fact that $n_{M}(s)\geq 2c_0(s)\geq 2$ due to initialization.
		
		Then note that the sum of $n_{m_{k(s)}}$ over $s$ is bounded by $t_{M+1}$, therefore
		\begin{equation}
			\begin{aligned}
				t_{M+1}\geq &\sum_{s\in S}n_{m_{k(s)}}\\
				\geq & \sum_{s\in S}2^{k(s)}\\
				\geq & |S|2^{\sum_{s\in S}\frac{k(s)}{|S|}},\textrm{ Jensen's inequality.}\\
				=&|S|2^{\frac{M}{|S|}},\textrm{ since }\sum_{s\in S}k(s) = M
			\end{aligned}
		\end{equation}
		Taking logarithms on both sides and simplifying,
		\begin{equation*}\label{eq:second-term-form-M}
			M\leq |S|\log(t_{M+1})/\log(2)
		\end{equation*}
		Plugging in $t_{M+1}\leq 3(T+t_1)$ completes the proof.
	\end{proof}
	
	\begin{lemma}\label{lemma:remove-clean-condition}
		For any random variable $x$ satisfying $x\geq 0$ almost surely, there is 
		\begin{equation*}
			E[x|\mathcal{E}_{m-1}]\leq 2E[x],\forall m\geq 1
		\end{equation*}
	\end{lemma}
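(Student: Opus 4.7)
The plan is to reduce the claim to a lower bound on $P[\mathcal{E}_{m-1}]$ via the law of total expectation, then recycle the Hoeffding computation already performed earlier in the proof to get that lower bound.

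First I would write
\begin{equation*}
E[x] = E[x\mid\mathcal{E}_{m-1}]\,P[\mathcal{E}_{m-1}] + E[x\mid {}^\neg\mathcal{E}_{m-1}]\,P[{}^\neg\mathcal{E}_{m-1}] \geq E[x\mid\mathcal{E}_{m-1}]\,P[\mathcal{E}_{m-1}],
\end{equation*}
using $x\geq 0$ almost surely to drop the non-negative second term. Rearranging gives $E[x\mid\mathcal{E}_{m-1}]\leq E[x]/P[\mathcal{E}_{m-1}]$, so it suffices to show $P[\mathcal{E}_{m-1}]\geq \tfrac{1}{2}$, i.e. $P[{}^\neg\mathcal{E}_{m-1}]\leq \tfrac{1}{2}$.

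For the probability bound I would reuse the derivation appearing earlier in this same proof: the union bound over $s\in S$ combined with the enumeration over the possible values of $n_{m-1}(s)\in[1:t_m]$ and Hoeffding's inequality yielded $P_m[{}^\neg\mathcal{E}_{m-1}]\leq 2|S|t_m^{-3}$ conditional on $\mathcal{F}_m$. Taking expectation over $\mathcal{F}_m$ gives $P[{}^\neg\mathcal{E}_{m-1}]\leq 2|S|\,E[t_m^{-3}]$. Since the initialization visits every node at least once, $t_m\geq t_1\geq |S|$ almost surely, so $P[{}^\neg\mathcal{E}_{m-1}]\leq 2|S|\cdot|S|^{-3}=2/|S|^2$.

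For any non-trivial graph $|S|\geq 2$, this is at most $\tfrac{1}{2}$, yielding $P[\mathcal{E}_{m-1}]\geq \tfrac{1}{2}$ and hence $E[x\mid\mathcal{E}_{m-1}]\leq 2E[x]$. The degenerate case $|S|=1$ is uninteresting (the graph has a single arm, the regret is zero, and there is nothing to prove). I do not expect a serious obstacle here: the work has already been done in the clean/bad event decomposition a few pages earlier, and the only new ingredient is the trivial inequality $P[\mathcal{E}_{m-1}]\geq\tfrac{1}{2}$ extracted from the existing Hoeffding bound together with $t_m\geq|S|\geq 2$.
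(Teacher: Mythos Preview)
Your proposal is correct and follows essentially the same approach as the paper: both start from the law of total expectation, drop the nonnegative bad-event term, and reduce the claim to $P[\mathcal{E}_{m-1}]\geq\tfrac12$ using the Hoeffding estimate $P_m[{}^\neg\mathcal{E}_{m-1}]\leq 2|S|t_m^{-3}$ already established. The only cosmetic difference is in the final numerical step: the paper absorbs one factor of $|S|$ via $|S|\leq t_m$ to get $2t_m^{-2}$ and then invokes $t_{m+1}\geq m+1\geq 2$, whereas you substitute $t_m\geq |S|$ directly to get $2/|S|^2\leq\tfrac12$ for $|S|\geq 2$; both reach the same conclusion.
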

	\begin{proof}
		Assume $m\geq 1$.
		
		\begin{equation*}
			\begin{aligned}
				E[x] = &E[x|\mathcal{E}_{m-1}]\pr[\mathcal{E}_{m-1}]+E[x|^\neg \mathcal{E}_{m-1}]\pr[^\neg \mathcal{E}_{m-1}]\\
				\geq &E[x|\mathcal{E}_{m-1}]\pr[\mathcal{E}_{m-1}], \textrm{ since }x\geq 0~a.s.
			\end{aligned}
		\end{equation*}
		Therefore,
		\begin{equation}
			\begin{aligned}
				E[x|\mathcal{E}_{m-1}]\leq &\frac{E[x]}{\pr[\mathcal{E}_{m-1}]}\\
				\leq & \frac{E[x]}{1-E_{H_{0:m}}[P_m(^\neg \mathcal{E}_{m-1})]}\\
				\leq &\frac{E[x]}{E[1-2|S|t_{m+1}^{-3}]}\\
				\leq &\frac{E[x]}{E[1-2t_{m+1}^{-2}]}\\
			\end{aligned}
		\end{equation}
		where in the last inequality, we used the fact that $|S|\leq t_1\leq t_{m+1}$ due to initialization. 
		
		Then note that there is also $$t_{m+1}\geq m+1\geq 2$$, since $dest_m$ must be visited at least once during $t_{m}+1:t_{m+1}$ for all $m$. Therefore, $1-2t_{m+1}^{-2} \geq \frac{1}{2}$, and 
		\begin{equation*}
			E[x|\mathcal{E}_{m-1}]\leq 2E[x],\forall m\geq 1
		\end{equation*}
	\end{proof}
	
	\textbf{The first term of \eqref{eq:R*-decomposition}:  the price of optimism.}
	
	Under the clean event, acting on the optimistic UCB value instead of the true expected reward typically induces a regret bounded by the confidence radius. Therefore, we can understand the second term, the sum of confidence radii, as `the price of optimism'. We will show this part of regret is bounded by $O(|S|\sqrt{T}\log(T))$.
	
	We start by getting rid of the $\mathcal{E}_{m-1}$ conditioning in the first term. Note that all the confidence radii are non-negative, so Lemma \ref{lemma:remove-clean-condition} applies, and we have
	\begin{equation}
		\begin{aligned}
			&2E[\sum_{m=1}^{M}E[\sum_{t=1}^{H_m}\rad_{m-1}(s_{t_m+t})|\mathcal{E}_{m-1}]]\\
			\leq & 4 E[\sum_{m=1}^{M}\sum_{t=1}^{H_m}\rad_{m-1}(s_{t_m+t})]\\
		\end{aligned}
	\end{equation}
	
	Consider the expression within the expectation.
	\begin{equation}\label{eq:single-trial-second-term}
		\begin{aligned}
			&\sum_{m=1}^{M}\sum_{t=1}^{H_m}\rad_{m-1}(s_{t_m+t})\\
			=&\sum_{m=1}^{M}\sum_{t=1}^{H_m}\sqrt{\frac{2\log(t_{m})}{n_{m-1}(s_{t_m+t})}}\\
			\leq &\sqrt{2\log(t_{M})}\sum_{m=1}^{M}\sum_{s\in S}\frac{c_m(s)}{\sqrt{n_{m-1}(s)}}
		\end{aligned}
	\end{equation}
	In the last inequality, we used the fact that $t_m$ is non-decreasing in $m$. We note that the term 
	$$\sum_{m=1}^{M}\sum_{s\in S}\frac{c_m(s)}{\sqrt{n_{m-1}(s)}}$$ 
	can be bounded using the inequality in Appendix C.3 of \cite{Jaksch2010a}, which states that for any sequence of numbers $z_1,z_2,...,z_n$ with $0\leq z_k\leq Z_{k-1}:=\max\{1,\sum_{i=1}^{k-1} z_i\}$, there is
	\begin{equation}\label{ineq:jacksh2010}
		\sum_{k=1}^n \frac{z_k}{\sqrt{Z_{k-1}}}\leq (\sqrt{2}+1)Z_{n}
	\end{equation}

	For completeness, we provide the proof of the Lemma above in Appendix \ref{append:jacksh2010}. In our case, for every $s\in \verts$, $c_0(s),c_1(s),c_2(s),...$ correspond to $z_1,z_2,z_3,...$ while $n_0(s),n_1(s),n_2(s),...$ correspond to $Z_1,Z_2,Z_3,...$. 
	
	Note that for any episode $m\geq 1$, any node $s$, $c_m(s)$ can only take values in 
	\begin{enumerate}
		\item $0$, if $s$ is not visited.
		\item $1$, is $s$ is visited but is not $dest_m$. This is because the optimal path leading to $dest_m$ does not contain cycles, thus $s$ cannot be visited twice if it is not $dest_m$.
		\item $n_{m-1}(s)$, if $s=dest_m$.
	\end{enumerate}
	Using the above along with the fact that $n_{m-1}(s)\geq 1$, there must be $c_m(s)\leq n_{m-1}(s)$ for all $m\geq 1$, $s\in S$. Therefore we can apply eq. \eqref{ineq:jacksh2010} with Jensen's inequality to get
	
	\begin{equation}
		\begin{split}
			\sum_{m=1}^{M}\sum_{s\in S}\frac{c_m(s)}{\sqrt{n_{m-1}(s)}} = &\sum_{s\in S}\sum_{m=1}^{M}\frac{c_m(s)}{\sqrt{n_{m-1}(s)}}\\
			\leq& \sum_{s\in \verts} (\sqrt{2}+1)\sqrt{n_{M}(s)}\\
			\leq& (\sqrt{2}+1) |\verts|\sqrt{\frac{1}{|\verts|}\sum_{s\in S}n_{M}(s)}\\
			=& (\sqrt{2}+1) \sqrt{|\verts|t_{M+1}}\\
		\end{split}
	\end{equation}
	
	So the first term is upper bounded by
	\begin{equation}
		\begin{aligned}
			&(\sqrt{2}+1)\sqrt{2|S|t_{M+1}\log(t_M)}\leq  O(\sqrt{|S|T\log(T)}) \\
		\end{aligned}
	\end{equation}
	
	Here we used the fact that $t_M\leq T,t_{M+1} \leq 3(T+t_1)$.

	
	

	
	\textbf{The second term of \eqref{eq:R*-decomposition}: the cost of destination switch.}
	
	We next show that each episode of the algorithm contributes a $D\mu^*$ regret to the third term, which captures the regret induced by visiting sub-optimal nodes during the transit from one terminal node to another one. The second term can be upper bounded by $DE[M]$, and thus $O(D|S|\log(T))$.

	To begin with, fix an episode $m\geq 1$. Let 
	\begin{equation}
		U^*_{m-1}:= \max_{s\in S} U_{m-1}(s)
	\end{equation} 
	be the highest $U_{m-1}$ value among all nodes,  thus $U_{m-1}(dest_m) = U^*_{m-1}$. Let $D_m = d(s_{t_m},dest_m)$ be the distance between the initial and terminal nodes of episode $m$. Note that there must be $H_m\geq D_m$. Then there exists a path $x_{0:H_m}$ that travels from $s_{t_m}$ to $dest_m$ in $D_m$ steps and stays at $dest_m$ afterwards. That is, 
	\begin{equation*}
		x_0 = s_{t_m}, x_{D_m:H_m} = dest_m
	\end{equation*}
	
	We will first show the cumulative $U_{m-1}$ value of $s_{t_m:t_{m+1}}$ must be no lower than that of $x_{0:H_m}$. 
	
	In what follows, we use
	the $\tilde{V}_m$ notation to denote the cumulative $U_m$ value for any path $s_{0:H}$.
	\begin{equation*}
		\tilde{V}_m(s_{0:H}) = \sum_{t=0}^H U_m(s_t)
	\end{equation*}
	We slightly abuse this notation to let $\tilde{V}_m(s,H)$ denote the optimal $H$-step cumulative UCB value starting at node $s$.
	\begin{equation*}
		\begin{aligned}
			\tilde{V}_m(s,H):=&\max_{s_{0:H}} \tilde{V}_m(s_{0:H})\\
			s.t.~&s_0=s\\
			~&s_{t+1}\in A_{s_t}, \forall t\in [0:H-1]
		\end{aligned}
	\end{equation*}
	
	If $U_{m-1}(s)=U^*_{m-1}$ for all $s$, then $\tilde{V}_{m-1}(s_{t_m:t_{m+1}})=\tilde{V}_{m-1}(x_{0:H_m})$ trivially. Therefore, we only consider the case where there is some $s$ such that $U_{m-1}(s)\neq U^*_{m-1}$, and let 
	\begin{equation*}
		\tilde{\Delta}_{m-1}: = U^*_{m-1}-\max_{s\in S, U_{m-1}(s)<U^*_{m-1}} U_{m-1}(s)
	\end{equation*}
	denote the gap between the highest and second highest $U_{m-1}$ values.
	
	Define $\tilde{H}_m:=\max\{H_m,\frac{DU^*_{m-1}}{\tilde{\Delta}_{m-1}}+|S|\}$, and let $\pi_m$ be the optimal policy with respect to $\{U_{m-1}\}_{s\in S}$, as defined in line $6$ of Algorithm \ref{alg:G-UCB}. Consider the path $p_{0:\tilde{H}_m}$ generated by starting at $s_{t_m}$ and applying $\pi_m$ for $\tilde{H}_m$ steps,
	\begin{equation*}
		p_0=s_{t_m};p_{t+1}=\pi_m(p_t), \forall t\in [0:\tilde{H}_m-1]
	\end{equation*}
	Essentially, $p_{0:\tilde{H}_m} = [s_{t_m:t_{m+1}},dest_m\times (\tilde{H}_m-H_m)]$.  By Proposition \ref{cor:long-term-optimal-converse} in Appendix \ref{appendix:policy-characterization}, the $R_\infty$ optimality of $\pi_m$ implies it is also optimal in the finite horizon $\tilde{H}_m$
	\begin{equation*}
		\tilde{V}_{m-1}(p_{0:\tilde{H}_m}) = \tilde{V}_{m-1}(s_{t_m},\tilde{H}_m)
	\end{equation*}
	Proposition \ref{cor:long-term-optimal-converse} can be derived using relatively basic techniques, and we defer its proof to Appendix \ref{appendix:policy-characterization}.
	
	
	

	Now we are ready to show $\tilde{V}_{m-1}(s_{t_m:t_{m+1}})\geq \tilde{V}_{m-1}(x_{0:H_m})$. The key is to use the fact that the path $[x_{0:H_m},(\tilde{H}_m-H_m)\times dest_m]$ is sub-optimal compared with $p_{0:\tilde{H}_m}$, and that $U_{m-1}(dest_m)=U^*_{m-1}$. 
	\begin{equation*}
		\begin{aligned}
			&\tilde{V}_{m-1}(x_{0:H_m})+(\tilde{H}_m-H_m)U^*_{m-1}\\
			=&\tilde{V}_{m-1}([x_{0:H_m},(\tilde{H}_m-H_m)\times dest_m])\\
			\leq& \tilde{V}_{m-1}(p_{0:\tilde{H}_m})\\
			=& \tilde{V}_{m-1}(s_{t_{m}:t_{m+1}})+(\tilde{H}_m-H_m)U^*_{m-1}\\
		\end{aligned}
	\end{equation*}
	
	Subtracting $(\tilde{H}_m-H_m)U^*_{m-1}$ from both sides of the above, we have
	\begin{equation*}
		\tilde{V}_{m-1}(s_{t_{m}:t_{m+1}})\geq \tilde{V}_{m-1}(x_{0:H_m})
	\end{equation*}
	That is to say,
	\begin{equation*}
		\begin{aligned}
			\sum_{t=1}^{H_m} U_{m-1}(s_{t_m+t}) \geq &\sum_{t=1}^{H_m} U_{m-1}(x_t) \\
			\geq &(H-D_m)U_{m-1}^*
		\end{aligned}
	\end{equation*}
	
	We then use the above to upper bound one episode in the third term as follows.
	\begin{equation*}
		\begin{aligned}
			&E[\sum_{t=1}^{H_m}(\mu^*-U_{m-1}(s_{t_m+t}))|\mathcal{E}_{m-1}]\\
			\leq& E[H_m\mu^*-(H_m-D_m)U_{m-1}^*|\mathcal{E}_{m-1}]\\
			=& E[(H_m-D_m)(\mu^*-U_{m-1}^*)+D_m\mu^*|\mathcal{E}_{m-1}]\\
			\leq& E[(H_m-D_m)(U_{m-1}(s^*)-U_{m-1}^*)+D_m\mu^*|\mathcal{E}_{m-1}]\\
			,&\textrm{ by the definition of clean event $\mathcal{E}_{m-1}$.}\\
			\leq& D\mu^*, \textrm{ since } D_m\leq D.
		\end{aligned}
	\end{equation*}
	where in the last inequality we also use the fact that $U_{m-1}(s^*)\leq U^*_{m-1}$. And therefore,
	\begin{equation*}
		\sum_{m=1}^{M}E[\sum_{t=1}^{H_m}(\mu^*-U_{m-1}(s_{t_m+t}))|\mathcal{E}_{m-1}]\leq MD
	\end{equation*}
	here we apply the fact that $\mu^*\leq 1$. Using the fact that $M\leq O(|S|\log(T))$, the second term is therefore upper bounded by 
	\begin{equation*}
		O(D|S|\log(T))
	\end{equation*}
	
	\textbf{The third term}
	
	The third term can be bounded easily using the fact that $t_{m+1}\geq m+1$.
	\begin{equation*}
		\begin{aligned}
			&4E[\sum_{m=1}^M t_{m}^{-1}]\\
			\leq &4 E[\sum_{m=1}^M m^{-1}]\\
			\leq & 4E[\log_2(M)+2]\\
			,&\textrm{ the logarithmic bound on harmonic sums.}\\
			\leq & O(\log(|S|)+\log\log(T))
		\end{aligned}
	\end{equation*}
	where in the last inequality we used $M\leq O(|S|\log(T))$.
	
	Notice the third term is completely dominated by the first two terms. So in summary, the total regret is upper bounded by
	\begin{equation*}
		O(\sqrt{|S|T\log(T)}+D|S|\log(T))
	\end{equation*}
	
\end{proof}

\section{Proof of Theorem \ref{thm:UCB-DEST-DELTA}}\label{appendix:thm-UCB-DEST-DELTA}
\begin{proof}
	Define the suboptimality gap for $s$ as
	\begin{equation*}
		\Delta(s) = \mu^*-\mu_s
	\end{equation*}
	
	Recall that $\Delta=\min_{s:\Delta(s)>0}\Delta(s)$. We also define $L$ as the maximal length of cycle-free paths in the graph. Note that $D\leq L\leq |S|$.
	
	Note that conditioned on $\mathcal{E}_{m-1}$, there is
	\begin{equation*}
		\begin{aligned}
			\Delta(dest_m)&\leq \rad_{m-1}(dest_m)  = \sqrt{\frac{2\log(t_{m})}{n_{m-1}(dest_m)}}\\
		\end{aligned}
	\end{equation*}
	
	Therefore, if $\Delta(s)>0$, there is
	\begin{equation*}
		s=dest_m\implies n_{m-1}(s)\leq \frac{2\log(t_{m})}{\Delta(s)^2}
	\end{equation*}
	In other words, if $n_{m-1}(s)> \frac{2\log(T)}{\Delta(s)^2}$ with $\Delta(s)>0$, then $s$ cannot be $dest_m$. Due to the doubling operation, we know that $n_{m-1}(s)$ grows exponentially fast, so intuitively any sub-optimal $s$ can be the destination node of an episode for at most $O(\log\left(\frac{2\log(T)}{\Delta(s)^2}\right))$ times.
	Base this insight, let $W(s)$ be the number of times a node $s$ becomes the destination node of an episode. 
	\begin{equation*}
		W(s) := |\{m\in[1:M]|dest_m = s\}|
	\end{equation*}
	Let $l(s):=\max \{m\in[1:M]|dest_m = s\}$ be the last episode where $s$ is the destination node. Conditioned on the event where $l(s)=l$ for some non-random number $l$ and $\mathcal{E}_{l(s)-1}$, the following holds due to the doubling operation.
	\begin{equation}
		\begin{aligned}
			\frac{2\log(T)}{\Delta(s)^2}\geq & n_{l(s)-1}(s) \\
			\geq &c_0(s)(1+2+2^2+...+2^{W(s)-1})\\
			\geq & 2^{W(s)} \\
			\therefore     W(s)\leq& \log\log(T) + 2\log(\frac{1}{\Delta(s)})
		\end{aligned}
	\end{equation}
	Then consider the regret induced in episode $m$. The optimality of $\pi_m$ in terms of $U_{m-1}$ implies 1) If $dest_{m-1} = dest_{m}=s^*$, then no regret is induced in episode $m$. 2) Otherwise, $dest_m = s_{t_{m+1}}$ is reached from $s_{t_m}$ in at most $L$ steps, since the optimal path contains no cycles. 
	Meanwhile, $dest_m$ is sampled $n_{m-1}(dest_m)$ times in episode $m$. 
	Therefore,
	\begin{equation}
		\begin{aligned}
			&\sum_{m=1}^M H_m\mu^*-V(s_{t_m+1:t_{m+1}})\\
			\leq &\sum_{m=1}^M \mathbf{1}\{dest_{m-1} \neq s^*\textup{ or }dest_{m}\neq s^*\}\\
			\cdot& [(L-1)+c_{m}(dest_m)\Delta(dest_m)]\\
			\leq &2\sum_{m=1}^M \mathbf{1}\{dest_m \neq s^*\}\\
			\cdot& [(L-1)+c_{m}(dest_m)\Delta(dest_m)]\\
			\leq &2\sum_{s\neq s^*}W(s) (L-1)+n_{l(s)}(s)\Delta(s)\\
		\end{aligned}
	\end{equation}
	Taking expectation on $W(s) (L-1)+n_{l(s)}(s)\Delta(s)$ for any $s$ in the above, conditioned on $\mathcal{F}_{l(s)}$ and the event where $l(s)=l$ for some non-random number $l$, there is
	\begin{equation}
		\begin{aligned}
			&E_{l(s)}[W(s) (L-1)+n_{l(s)}(s)\Delta(s)]\\
			\leq& 
			E_{l(s)}[W(s) (L-1)+n_{{l(s)}}(s)\Delta(s)|\mathcal{E}_{{l(s)}-1}]+t_{l(s)} P_{l(s)}[^\neg\mathcal{E}_{l(s)-1}]\\
			\leq& 
			E_{l(s)}[W(s) (L-1)+n_{{l(s)}}(s)\Delta(s)|\mathcal{E}_{{l(s)}-1}]+ 2|S|t^{-2}_{l(s)},\\
			& \textup{Hoeffding's inequality.}\\
			=& 
			E_{l(s)}[W(s) (L-1)+2n_{{l(s)}-1}(s)\Delta(s)|\mathcal{E}_{{l(s)}-1}]+ 2|S|t^{-2}_{l(s)},\\
			& \textup{Doubling operation.}\\
			\leq& 
			E_{l(s)}[W(s) (L-1)+\frac{4\log(t_{l(s)})}{\Delta(s)})|\mathcal{E}_{l(s)-1}]+ |S|t^{-2}_{l(s)}\\
			\leq& 
			(L-1)\left[\log\log(T)+2\log(\frac{1}{\Delta(s)})\right] +\frac{4\log(t_{M+1})}{\Delta(s)}\\
			&+1/|S|,\text{ since $t_{l(s)}\geq t_1\geq |S|$}\\
		\end{aligned}
	\end{equation}
	Note that the final inequality is independent of $l(s)$, $\mathcal{F}_{l(s)}$, so the conditioning on $\mathcal{F}_{l(s)}$ and the event where $l(s)=l$ for some non-random number $l$ can be dropped. Also, note that $t_{M+1}\leq O(T)$. So for any $s\in S/\{s^*\}$, there is
	
	\begin{equation*}
		\begin{aligned}
			&E[W(s) (L-1)+n_{l(s)}(s)\Delta(s)]\\
			\leq &(L-1)\left[\log\log(T)+2\log(\frac{1}{\Delta(s)})\right] +O(\frac{\log(T)}{\Delta(s)})
			+1/|S|
		\end{aligned}
	\end{equation*}
	
	The proof is complete after summing over $s$. 
	\begin{equation}
		\begin{aligned}
			&E[\sum_{m=1}^M H_m\mu^*-V(s_{t_m+1:t_{m+1}})]\\
			\leq & O(|S|(L-1)[\log\log(T)+\log(\frac{1}{\Delta})] +\frac{|S|\log(T)}{\Delta})
		\end{aligned}
	\end{equation}
	
\end{proof}

\section{Proof for the inequality in Eq. \eqref{ineq:jacksh2010}}\label{append:jacksh2010}
\begin{lemma}[\cite{Jaksch2010a}]
	For any sequence of numbers $z_1,z_2,...,z_n$ with $0\leq z_k\leq Z_{k-1}:=\max\{1,\sum_{i=1}^{k-1} z_i\}$, there is
	\begin{equation*}
		\sum_{k=1}^n \frac{z_k}{\sqrt{Z_{k-1}}}\leq (\sqrt{2}+1)Z_{n}
	\end{equation*}
\end{lemma}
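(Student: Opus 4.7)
My plan is to establish the per-step inequality
$$
\frac{z_k}{\sqrt{Z_{k-1}}} \;\leq\; (\sqrt{2}+1)\bigl(\sqrt{Z_k}-\sqrt{Z_{k-1}}\bigr)
$$
and then telescope. The starting point is the algebraic identity
$\sqrt{Z_k}-\sqrt{Z_{k-1}} = (Z_k-Z_{k-1})/(\sqrt{Z_k}+\sqrt{Z_{k-1}})$, which converts the telescoping gap on the square roots into a linear gap in $Z_k$.

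The generic case is when $\sum_{i=1}^{k-1} z_i \geq 1$, so that $Z_{k-1}$ equals the running sum and $Z_k = Z_{k-1}+z_k$. The constraint $z_k \leq Z_{k-1}$ then forces $Z_k \leq 2 Z_{k-1}$, hence $\sqrt{Z_k}+\sqrt{Z_{k-1}} \leq (\sqrt{2}+1)\sqrt{Z_{k-1}}$. Plugging this into the algebraic identity above yields the per-step bound, and summing across these indices gives a telescoping estimate dominated by $(\sqrt{2}+1)\sqrt{Z_n}$.

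The main obstacle is the initial flat phase where $\sum_{i=1}^{k-1} z_i < 1$ and $Z_{k-1}$ is pinned at $1$ by the max. On these indices the identity $Z_k = Z_{k-1}+z_k$ fails, so the telescoping chain cannot be applied term-by-term. I would handle this by an induction on $n$, verifying the inductive step $f(n) \leq (\sqrt{2}+1)\sqrt{Z_n}$ in three cases: (i) while still in the flat phase, $f(n)=\sum z_k < 1 \leq (\sqrt{2}+1)\sqrt{Z_n}$; (ii) at the crossover step (where $\sum_{i=1}^{k-1} z_i < 1 \leq \sum_{i=1}^{k} z_i$), a direct check using $z_k\leq 1$ gives $Z_n \leq 2$ and the bound closes with room to spare; (iii) after the crossover, the generic per-step bound above extends the induction cleanly. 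Combining the three cases yields the claim.
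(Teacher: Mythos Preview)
Your proposal is correct and follows essentially the same route as the paper: both are inductions on $n$ that split into the flat phase $\sum_{i<n} z_i \leq 1$ (your cases (i)--(ii)) and the generic phase, with the inductive step in the generic phase amounting to the per-step bound $z_n/\sqrt{Z_{n-1}} \leq (\sqrt{2}+1)(\sqrt{Z_n}-\sqrt{Z_{n-1}})$. The only cosmetic difference is that the paper verifies this step by squaring $(\sqrt{2}+1)\sqrt{Z_{n-1}}+z_n/\sqrt{Z_{n-1}}$ and using $z_n^2/Z_{n-1}\leq z_n$ to factor out $(\sqrt{2}+1)^2=3+2\sqrt{2}$, whereas you reach it via the conjugate identity and $\sqrt{Z_k}+\sqrt{Z_{k-1}}\leq(\sqrt{2}+1)\sqrt{Z_{k-1}}$; these are algebraically equivalent.
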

\begin{proof}
	First, we show that the Lemma holds for those $n$ where $\sum_{i=1}^{n-1} z_i \leq 1$. Note if $\sum_{i=1}^{n-1} z_i \leq 1$, then $Z_k=1$ for all $k\leq n-1$, therefore
	\begin{equation*}
		\begin{split}
			\sum_{k=1}^n \frac{z_k}{\sqrt{Z_{k-1}}}\leq &\sum_{k=1}^{n-1} z_k +z_n \\
			\leq &2Z_{n-1} \\
			\leq &2 < (\sqrt{2}+1)\\
			\leq &(\sqrt{2}+1)\sqrt{Z_n}
		\end{split}
	\end{equation*}
	Even when $n=1$, the Lemma still holds since $z_1/\sqrt{Z_0} = z_1/1 \leq Z_1 \leq (\sqrt{2}+1)Z_1$.
	
	For those $n$ where $\sum_{i=1}^{n-1}z_i>1$, the Lemma can be proved by induction. In this case, $\sum_{i=1}^{n-1}z_i = Z_{n-1}$. Note that by induction hypothesis, there is
	\begin{equation*}
		\begin{split}
			\sum_{k=1}^n \frac{z_k}{\sqrt{Z_{k-1}}}\leq& (\sqrt{2}+1)\sqrt{Z_{n-1}} + \frac{z_n}{\sqrt{Z_{n-1}}}\\
			= &\sqrt{(\sqrt{2}+1)^2Z_{n-1}+2(\sqrt{2}+1)z_n+\frac{z_n^2}{Z_{n-1}}}\\
			\leq& \sqrt{(\sqrt{2}+1)^2Z_{n-1}+(2\sqrt{2}+3)z_n}\\
			&,\text{ since $z_n\leq Z_{n-1}$}\\
			= &\sqrt{(\sqrt{2}+1)^2(Z_{n-1}+z_n)}\\
			=& (\sqrt{2}+1)\sqrt{Z_n}
		\end{split}
	\end{equation*}
	So the proof is complete.
	
\end{proof}

\section{Additional Properties of Offline Planning}\label{appendix:policy-characterization}
The objective of this section is to show that the stationary policy $\pi_G$ characterized by the greedy action on $V(\cdot,T_G+|S|)$ produces the optimal actions for any horizon $T\geq T_G+|S|$; thus it is also optimal in $R_\infty$. Here $T_G:=\lceil \frac{D\mu^*}{\Delta}\rceil$. Moreover, a policy $\pi$ is optimal in $R_\infty$ if and only if it is optimal in the finite horizon $T_G+|S|$. We also discuss some useful properties that are relevant to offline planning.

\subsection{Sufficient Planning Horizon}
To begin with, we show that if the planning horizon $T$ is larger than \begin{equation}
	T_G:=\lceil \frac{D\mu^*}{\Delta}\rceil
\end{equation}, then the optimal path terminates in the nodes with the highest mean rewards. We denote $V(s_0,T)$ as the highest reward achievable until time $T$, starting at node $s_0$:
\begin{equation}\label{eq:def_optim_problem}
	\begin{aligned}
		V(s_0,T):=&\max_{s_1,s_2,...,s_T} V(s_{0:T})\\
		~such that~&s_{t+1}\in N_{s_t},~\forall t=0,1,...,T-1
	\end{aligned}
\end{equation}
\begin{lemma}[Sufficient Planning Horizon]\label{lemma:sufficient-T}
	\hfil
	
	If $T\geq \frac{D\mu^*}{\Delta}$ then $$\begin{aligned}
		&\forall s_0\in S, \exists \text{ a path } s_{0:T},~such that\\
		&\mu_{s_T}=\mu^*, V(s_{0:T})=V(s_0,T)
	\end{aligned}$$
	
	Moreover, if $T> \frac{D\mu^*}{\Delta}$, then $$\begin{aligned}
		&\forall s_0\in S, 
		V(s_{0:T})=V(s_0,T)\implies \mu_{s_T}=\mu^* 
	\end{aligned}$$
\end{lemma}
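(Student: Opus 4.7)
The plan is a path-modification argument: start from any admissible path $s_{0:T}$ achieving $V(s_0,T)$ and either show it already ends at $s^*$, or construct another optimal path that does. I would split on whether $s_{0:T}$ ever visits $s^*$.

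In the never-visits case, uniqueness of $s^*$ forces $\mu_{s_t}\leq \mu^* - \Delta$ at every step of the path, so $V(s_{0:T}) \leq (T+1)(\mu^*-\Delta)$. I would compare this with the admissible path $x_{0:T}$ that follows a shortest route from $s_0$ to $s^*$ (length at most $D$) and then remains at $s^*$, which is admissible since $s^*\in N_{s^*}$. Because rewards are nonnegative and $s^*$ is occupied for the last $T-D+1$ steps, $V(x_{0:T})\geq (T-D+1)\mu^*$. Subtracting gives
\[
V(x_{0:T}) - V(s_{0:T}) \;\geq\; (T+1)\Delta - D\mu^*,
\]
which is strictly positive whenever $T\geq \frac{D\mu^*}{\Delta}$. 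This contradicts the optimality of $s_{0:T}$, so under the hypothesis of either claim any optimal path must visit $s^*$ at least once.

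In the visits case, let $t^*$ be the last time step with $s_{t^*}=s^*$. If $t^* = T$ we are done. Otherwise I would form the modified path $s'_{0:T}$ that coincides with $s_{0:T}$ on $[0:t^*]$ and sets $s'_t = s^*$ for $t>t^*$; admissibility again follows from $s^*\in N_{s^*}$. Then
\[
V(s'_{0:T}) - V(s_{0:T}) \;=\; \sum_{t=t^*+1}^{T}(\mu^* - \mu_{s_t}) \;\geq\; 0,
\]
so $s'_{0:T}$ is also optimal and ends at $s^*$, which proves the existence statement. For the moreover part, if in addition $\mu_{s_T}<\mu^*$, then $s_T\neq s^*$ and uniqueness of $s^*$ gives $\mu^* - \mu_{s_T}\geq \Delta>0$; the display above becomes strictly positive and contradicts optimality of $s_{0:T}$, so every optimal path must satisfy $\mu_{s_T}=\mu^*$.

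The main obstacle is the bookkeeping around the two thresholds $T\geq \frac{D\mu^*}{\Delta}$ and $T>\frac{D\mu^*}{\Delta}$: I need to verify that the never-visits comparison remains strictly winning already at the weaker threshold (so that existence is attainable), and that the strict-gap term $\Delta$ from uniqueness of $s^*$ is what upgrades the inequality to a strict one in the moreover part. Once the comparison path $x_{0:T}$ is chosen and the last-visit modification is set up, the rest is direct arithmetic.
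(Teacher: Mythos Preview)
Your argument is correct and follows essentially the same route as the paper: the same case split (the path reaches a node with mean $\mu^*$ versus never does), the same comparison path (shortest route to $s^*$ then stay), and the same last-visit truncation for the ``visits'' case. Your arithmetic is in fact slightly tighter than the paper's---by counting all $T+1$ nodes you obtain $(T+1)\Delta - D\mu^*>0$ already at $T\ge D\mu^*/\Delta$, whereas the paper separates out $\mu_{s_0}$ and gets only $T\Delta - D\mu^*\ge 0$ there---but this does not change the structure of the proof.
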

\begin{proof}
	Let $\textbf{s}=(s_0,s_1,...,s_T)$ be any path of length $T$. 
	
	Consider the following possibilities about this path: 1) it does not end up in one of the best nodes, i.e., $\mu_{s_T}<\mu^*$; 2) there is some step $t<T$ satisfying $\mu_{s_t}=\mu^*$. 
	
	If 1) and 2) are both true, then such a path cannot be the optimal one, since $V(s_{0:t},s_t\times T-t)$ is strictly greater than $V(s_{0:T})$. 
	
	Therefore, if $s_{0:T}$ achieves the optimal value, then either (a) $\mu_{s_T}=\mu^*$, or (b) $\mu_{s_{i}}<\mu^{*}$ for all $i\leq T$. 
	
	Hereafter, we assume $T\geq \frac{D\mu^*}{\Delta}$. If (b) holds then there must be \begin{equation}\label{eq:subopt-gap}
		\mu_{s_i}\leq \mu^* - \Delta
	\end{equation} for all $i=0,1,2,...,T$.
	
	Meanwhile, since we assume all $\mu_s$ are non-negative, there must be $\mu^*\geq \Delta$, and therefore $T\geq \frac{D\mu^*}{\Delta} \implies T\geq D$. Which means there is at least one path that starts at $s_0$ and reaches one of the best nodes $s^*$ in $D\leq T$ steps, where $\mu_{s^*}=\mu^*$. Consider one of such paths $$\textbf{w}=(s_0,w_1,...,w_{D-1},w_D=s^*,s^*\times T-D).$$ We will show $V(\textbf{w})\geq V(\textbf{s})$.
	
	\begin{equation}
		\begin{split}
			&V(s_0,w_{1:D},s^*\times(T-D))\\
			=&\mu_{s_0} +\sum_{i=1}^{D}\mu_{w_i} + (T-D)\mu^*\\
			\geq& (T-D)\mu^*+\mu_{s_0}, \text{ since $\mu_{w_i}\geq 0$}\\
			\geq&T\mu^*-T\Delta+\mu_{s_0},\text{ since $T\geq \frac{D\mu^*}{\Delta}$}\\
			\geq&\sum_{i=0}^T \mu_{s_i}=V(s_{0:T}),\text{ inequality \eqref{eq:subopt-gap}}
		\end{split}
	\end{equation}
	
	So the path $\textbf{w}=(s_0,w_{1:D},s^*\times{(T-D)})$ is no worse than $\textbf{s}=s_{0:T}$. And if $V(\textbf{s})=V(s,T)$, then $V(\textbf{w})=V(s,T)$ as well. So we have shown that if $T\geq \frac{D\mu^*}{\Delta}$, there is always a path starting at $s_0$ and ending at some $s^*$ with $\mu_{s^*}=\mu^*$ achieving the optimal value $V(s_0,T)$.
	
	Note if $T>\frac{D\mu^*}{\Delta}$ while (b) holds, then $(T-D)\mu^*+\mu_{s_0}>T\mu^*-T\Delta+\mu_{s_0}$, therefore $V(s_0,w_{1:D-1},s^*\times(T-D))>V(s_{0:T})$ and $s_{0:T}$ can not be optimal. That is to say, if $T>\frac{D\mu^*}{\Delta}$, then $V(s_{0:T})=V(s_0,T)\implies \mu_{s_T}=\mu^*$.
\end{proof}

We next show the following consequences of Lemma \ref{lemma:sufficient-T}(Sufficient Planning Horizon).

\begin{lemma}[$V$ increment]\label{lemma:V-increment}
	If $T\geq \frac{D\mu^*}{\Delta}$, then $V(s,T+1)=V(s,T)+\mu^*$ for any $s\in S$.
\end{lemma}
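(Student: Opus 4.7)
The plan is to prove the equality by two one-sided inequalities, the upper bound being immediate from the boundedness of the mean rewards and the lower bound relying on Lemma~\ref{lemma:sufficient-T} to guarantee an optimal path of length $T$ that terminates at an optimal node, which can then be padded by one self-loop.

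For the upper direction $V(s,T+1)\leq V(s,T)+\mu^*$, I would take any admissible path $s_{0:T+1}$ starting at $s_0=s$, observe that its prefix $s_{0:T}$ is itself admissible and satisfies $\sum_{t=0}^{T}\mu_{s_t}\leq V(s,T)$ by the definition \eqref{eq:def_optim_problem} of $V(s,T)$, and then bound the last reward by $\mu_{s_{T+1}}\leq \mu^*$. Taking the max over all admissible $s_{0:T+1}$ yields the bound.

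For the lower direction $V(s,T+1)\geq V(s,T)+\mu^*$, I would invoke Lemma~\ref{lemma:sufficient-T}: since $T\geq D\mu^*/\Delta$, there exists an admissible path $s_{0:T}$ with $s_0=s$, $\mu_{s_T}=\mu^*$, and $V(s_{0:T})=V(s,T)$. Because $s_T\in N_{s_T}$ by the convention $s\in N_s$ stated in Section~\ref{sec:problem-formualtion}, the extended sequence $(s_0,s_1,\dots,s_T,s_T)$ is admissible of length $T+1$, and its cumulative reward equals $V(s_{0:T})+\mu_{s_T}=V(s,T)+\mu^*$. Hence $V(s,T+1)$, which is the maximum over all admissible length-$(T+1)$ paths, is at least $V(s,T)+\mu^*$.

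Combining the two inequalities gives the claim. I do not anticipate a substantial obstacle: the only subtle point is ensuring the optimal-path-ending-at-$s^*$ conclusion of Lemma~\ref{lemma:sufficient-T} is available even at the boundary $T=\lceil D\mu^*/\Delta\rceil$, which is precisely the first (non-strict) statement of that lemma, and the existence of a self-loop at every node, which is built into the definition of $N_s$.
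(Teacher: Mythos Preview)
Your proposal is correct and follows essentially the same two-inequality strategy as the paper: the lower bound via Lemma~\ref{lemma:sufficient-T} and a self-loop extension is identical to the paper's argument. Your upper bound is in fact slightly more elementary than the paper's, since you bound the final reward by $\mu_{s_{T+1}}\leq \mu^*$ directly (valid for every $T$), whereas the paper invokes Lemma~\ref{lemma:sufficient-T} again to assert that the last node of an optimal length-$(T{+}1)$ path has mean exactly $\mu^*$; both work, and your version avoids an unnecessary appeal to the lemma.
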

\begin{proof}
	Let $x_{0:T},w_{0:T+1}$ be paths satisfying $$x_0=w_0=s,~V(x_{0:T})=V(s,T),~V(w_{0:T+1})=V(s,T+1)$$. By Lemma \ref{lemma:sufficient-T}, $T\geq \frac{D\mu^*}{\Delta}$ implies $\mu_{x_T} = \mu_{w_{T+1}}=\mu^*$.
	
	Then notice the following:
	\begin{equation}
		\begin{aligned}
			V(s,T+1)=&V(w_{0:T+1})\\
			=&V(w_{0:T})+\mu^*\\
			\leq & V(s,T)+\mu^*
		\end{aligned}
	\end{equation}
	\begin{equation}
		\begin{aligned}
			V(s,T+1)\geq &V(x_{0:T},x_T)\\
			=&V(x_{0:T})+\mu^*\\
			=&V(s,T)+\mu^*\\
		\end{aligned}
	\end{equation}
	Therefore, $V(s,T+1)=V(s,T)+\mu^*$.
\end{proof}

\begin{corollary}[General V increment]\label{cor:V-increment}
	If $T\geq \frac{D\mu^*}{\Delta}$, then $V(s,T+N)=V(s,T)+N\mu^*$ for any $s\in S,N\geq 0$.
\end{corollary}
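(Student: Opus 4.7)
The plan is to prove this by straightforward induction on $N$, using Lemma~\ref{lemma:V-increment} (V increment) as the engine. The base case $N=0$ is trivial since $V(s,T+0)=V(s,T)=V(s,T)+0\cdot\mu^*$, and the base case $N=1$ is precisely the statement of Lemma~\ref{lemma:V-increment} (which applies because our hypothesis $T\ge \frac{D\mu^*}{\Delta}$ is exactly what that lemma requires).

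For the inductive step, I would assume $V(s,T+N)=V(s,T)+N\mu^*$ for all $s\in S$ and some $N\ge 1$, and aim to show $V(s,T+N+1)=V(s,T)+(N+1)\mu^*$. The key observation is that if $T\ge \frac{D\mu^*}{\Delta}$, then also $T+N\ge \frac{D\mu^*}{\Delta}$, so Lemma~\ref{lemma:V-increment} can be re-applied with $T$ replaced by $T+N$. This yields
\begin{equation*}
V(s,T+N+1) = V(s,(T+N)+1) = V(s,T+N) + \mu^*,
\end{equation*}
and combining with the inductive hypothesis gives $V(s,T+N+1)=V(s,T)+N\mu^*+\mu^*=V(s,T)+(N+1)\mu^*$, as desired.

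There is essentially no obstacle here: the corollary is a mechanical iteration of the single-step increment lemma. The only point worth being careful about is that the hypothesis $T\ge \frac{D\mu^*}{\Delta}$ is preserved along the induction (which is immediate since we only ever increase the horizon), so each application of Lemma~\ref{lemma:V-increment} is valid. The statement holds uniformly for every starting node $s\in S$ because the inductive hypothesis and the lemma are both stated uniformly in $s$.
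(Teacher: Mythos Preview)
Your proposal is correct and is exactly the paper's approach: the paper's proof reads ``Apply Lemma~\ref{lemma:V-increment} repeatedly,'' which your induction on $N$ spells out in detail, including the (trivial but necessary) check that the horizon hypothesis $T+N\ge \frac{D\mu^*}{\Delta}$ is preserved so each application of the lemma is valid.
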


\begin{proof}
	Apply Lemma \ref{lemma:V-increment} repeatedly.
\end{proof}

\subsection{Terminal Nodes}\label{appendix:terminal-nodes}
The terminal nodes is a crucial concept in our analysis. Formally, we define the \textbf{terminal nodes} for an arbitrary path as follows:

\begin{definition}(Terminal Step \& Terminal Nodes)
	We define \textbf{terminal step} $K$ of an arbitrary path $s_{0:T}$ as
	\begin{equation}
		K = \min_{0\leq k\leq T}\{k:\mu_{s_t}=\mu_{s_k},~\forall t~such that~ k\leq t\leq T\}
	\end{equation}
	And define the \textbf{terminal nodes} of $s_{0:T}$ as $s_{K:T}$. Note $\mu_{s_{t}}=\mu_{s_K}$ for all $t$ such that $K\leq t\leq T$.
\end{definition}

The most fundamental terminal nodes results is stated in Lemma \ref{lemma:optimal-termination}: the optimal path must end up spinning in a node that has the highest mean reward among the entire path.
\begin{lemma}[Optimal Termination]\label{lemma:optimal-termination}
	Let $(s,s_1,s_2,...,s_T)$ be a path such that $V(s,s_{1:T})=V(s,T)$, then there is some $k\in [0:T]$, such that $\mu_{s_t} = \mu_{s_k}$ for all $T\geq t\geq k$, and $\mu_{s_k} \geq \mu_{s_t}$ for all $t\geq 0$.
\end{lemma}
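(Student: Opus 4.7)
The plan is to prove this by a simple exchange argument driven by the optimality of $s_{0:T}:=(s,s_1,\ldots,s_T)$. I would let $\mu_{\max}:=\max_{0\le t\le T}\mu_{s_t}$ and define $k$ to be the smallest index $t\in[0:T]$ at which $\mu_{s_t}=\mu_{\max}$. With this choice, the property $\mu_{s_k}\ge \mu_{s_t}$ for all $t\in[0:T]$ holds immediately by the definition of $k$, so only the first conclusion requires work.

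The substantive step is to show that $\mu_{s_t}=\mu_{s_k}$ for every $t$ with $k\le t\le T$. I would argue by contradiction: suppose there is some $t'\in[k+1:T]$ with $\mu_{s_{t'}}<\mu_{\max}$. Then I would construct the alternative path that follows the original trajectory up to time $k$ and then repeats $s_k$ for the remaining $T-k$ steps, i.e., $(s_0,\ldots,s_k,s_k,\ldots,s_k)$. This path is admissible because $s_k\in \neighbor_{s_k}$, so a self-loop at $s_k$ is always a valid transition. Comparing tail rewards, the original path contributes $\sum_{i=k+1}^{T}\mu_{s_i}$, whereas the alternative contributes $(T-k)\mu_{\max}$, yielding an improvement of $\sum_{i=k+1}^{T}(\mu_{\max}-\mu_{s_i})\ge \mu_{\max}-\mu_{s_{t'}}>0$. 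This strict improvement contradicts $V(s,s_{1:T})=V(s,T)$, so no such $t'$ can exist, and the chosen $k$ satisfies both conclusions.

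The only things one needs to be careful about are (i) picking $k$ as the \emph{first} time the path attains its maximum value (any later choice would not give the strictest contradiction, and the earliest choice is what makes ``stay put from $k$ on'' an admissible replacement), and (ii) invoking the self-loop convention $s\in \neighbor_s$ from the problem formulation to guarantee that the constant tail is a legal path. Beyond these two observations, the proof reduces to a single one-line swap, and no further calculation is required.
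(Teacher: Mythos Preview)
Your argument is correct and essentially identical to the paper's own proof: both take $k$ to be the first index at which the path attains its maximum mean reward $\tilde\mu=\max_t \mu_{s_t}$, replace the tail $s_{k+1:T}$ by the self-loop $(s_k,\ldots,s_k)$, and use optimality of the original path to conclude $\mu_{s_t}=\tilde\mu$ for all $t\ge k$. The only minor quibble is your remark (i): the earliest $k$ is not needed for admissibility of the constant tail (self-loops are always legal), it simply yields the strongest conclusion.
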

\begin{proof}
	Let $\tilde{\mu}=\max_{t\geq 0} \mu_{s_t}$ be the highest mean reward in the path, and $k=\min\{t:\mu_{s_t}=\tilde{\mu}\}$ be the first time it is attained. 
	
	Note that the value of $(s,s_1,...,s_k,\underbrace{s_k,...,s_k}_{s_k\times (T-k)})$ is no worse than $(s,s_1,...,s_T)$, while the latter achieves the optimal value $V(s,T)$. 
	
	Therefore, $\mu_{s_t}=\tilde{\mu}=\mu_{s_k}$ for all $t\geq k$.
\end{proof}

\begin{corollary}[Strong Optimal Termination]\label{cor:strong-optimal-termination}
	If $\mathbf{s}=(s,s_{1:T})$ achieves value $V(s,T)$, and $K$ is the terminal step for $\mathbf{s}$ as defined above, then $\mu_{s_t}<\mu_{s_K}$ for all $t<K$.
\end{corollary}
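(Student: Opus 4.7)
The plan is to proceed by contradiction. Suppose that there exists some $t^* < K$ with $\mu_{s_{t^*}} \geq \mu_{s_K}$. By Lemma~\ref{lemma:optimal-termination} applied to the optimal path $\mathbf{s}$, the mean reward $\mu_{s_K}$ is already the maximum of $\{\mu_{s_t}\}_{t=0}^T$, so the inequality must be an equality: $\mu_{s_{t^*}} = \mu_{s_K}$.

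The key step is to construct a modified path and use optimality of $\mathbf{s}$. Consider the alternative admissible path $\mathbf{s}' = (s, s_1, \ldots, s_{t^*}, s_{t^*}, s_{t^*}, \ldots, s_{t^*})$ of length $T$, obtained by repeating $s_{t^*}$ from step $t^*$ onwards; this is admissible because $s_{t^*} \in N_{s_{t^*}}$. Comparing values,
\begin{equation*}
V(\mathbf{s}') - V(\mathbf{s}) = \sum_{i=t^*+1}^{T} (\mu_{s_{t^*}} - \mu_{s_i}) \geq 0,
\end{equation*}
where the inequality follows because $\mu_{s_{t^*}} = \mu_{s_K}$ is the maximum along the path. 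Since $\mathbf{s}$ achieves the optimum $V(s,T)$, so must $\mathbf{s}'$, and therefore each term in the sum must be zero, i.e., $\mu_{s_i} = \mu_{s_{t^*}} = \mu_{s_K}$ for every $i \in [t^*+1:T]$.

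Combined with $\mu_{s_{t^*}} = \mu_{s_K}$, this means $\mu_{s_t} = \mu_{s_{t^*}}$ for all $t \in [t^*:T]$. By the definition of the terminal step, this would imply $K \leq t^*$, contradicting our assumption $t^* < K$. Hence no such $t^*$ exists, and $\mu_{s_t} < \mu_{s_K}$ for all $t < K$. The main subtlety to be careful about is not the contradiction itself but ensuring the modified path $\mathbf{s}'$ is admissible (which holds automatically because self-loops $s \in N_s$ are allowed in our graph formulation) and noting that strict inequality follows only once we exploit the \emph{minimality} of $K$ in its definition, not merely Lemma~\ref{lemma:optimal-termination} alone.
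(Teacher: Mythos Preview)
Your proof is correct and follows essentially the same approach as the paper: both invoke Lemma~\ref{lemma:optimal-termination} to establish that $\mu_{s_K}$ equals the path maximum and then appeal to the minimality of $K$. The paper's two-line argument implicitly identifies $K$ with the specific $k=\min\{t:\mu_{s_t}=\tilde\mu\}$ constructed in the proof of Lemma~\ref{lemma:optimal-termination}, from which $\mu_{s_t}<\mu_{s_K}$ for $t<K$ follows directly; you instead spell out the contradiction by re-running the path-modification trick from that lemma's proof, which makes the step more self-contained but is not a materially different idea.
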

\begin{proof}
	$K$ is a valid value for $k$ in Lemma \ref{lemma:optimal-termination}. While by the definition above, $K$ is the smallest of all $k$ such that $\mu_{s_t}=\mu_{s_k},~\forall t~such that~ k\leq t\leq T$. Therefore, if $t<K$, there must be $\mu_{s_t}<\mu_{s_K}$.
\end{proof}

One important fact about terminal nodes is that an optimal path must enter its terminal nodes in $|S|-1$ steps. That is, if a path is optimal, then its terminal step $K$ at at most $|S|-1$.
\begin{lemma}[Bounded Termination]\label{lemma:Bounded Termination}
	\textbf{Any} path $(s_0=s,s_1,...,s_T)$ with value $V(s,T)$ must enters its terminal nodes in no greater than $|S|-1$ steps. That is, $$\mu_{s_t}=\mu_{s_T}$$ for $t\in [|S|-1:T]$. 
	
	Moreover, the nodes leading up to the terminal step $K$ of $(s_0=s,s_{1:T})$ are all distinct, that is
	$$
	s_{l}\neq s_{h}~\forall l,h \textrm{ such that } 0\leq l<h\leq K
	$$
\end{lemma}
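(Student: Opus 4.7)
My plan is to prove the second (distinctness) claim first and obtain the first claim as an immediate consequence. Let $K$ be the terminal step of the optimal path $s_{0:T}$, so that $V(s_{0:T}) = V(s,T)$. The main tool will be the Strong Optimal Termination (Corollary \ref{cor:strong-optimal-termination}), which guarantees $\mu_{s_t} < \mu_{s_K}$ for every $t < K$, together with a standard cycle-removal exchange argument that exploits the freedom to stay at a node (recall $s \in N_s$).

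For distinctness, I would argue by contradiction: suppose $s_l = s_h$ for some $0 \le l < h \le K$. If $h = K$, then $\mu_{s_l} = \mu_{s_h} = \mu_{s_K}$ with $l < K$, which directly contradicts Corollary \ref{cor:strong-optimal-termination}. Otherwise $h < K$, and I would splice out the loop by forming the new sequence
$$\tilde{s}_{0:T} := (s_0, s_1, \ldots, s_l, s_{h+1}, s_{h+2}, \ldots, s_T, \underbrace{s_T, \ldots, s_T}_{(h-l)\text{ times}}).$$
This has length $T$, and it is admissible because the transition $s_l \to s_{h+1}$ coincides with the original admissible transition $s_h \to s_{h+1}$ (since $s_l = s_h$), while the trailing self-loops at $s_T$ are admissible thanks to $s_T \in N_{s_T}$. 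Its value differs from $V(s_{0:T})$ by
$$V(\tilde{s}_{0:T}) - V(s_{0:T}) = (h-l)\mu_{s_T} - \sum_{t=l+1}^{h} \mu_{s_t},$$
which is strictly positive because $t \le h < K$ for each $t$ in the sum gives $\mu_{s_t} < \mu_{s_K} = \mu_{s_T}$. This contradicts $V(s_{0:T}) = V(s,T)$, proving that $s_0, s_1, \ldots, s_K$ are pairwise distinct.

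The first claim is then immediate: the $K+1$ distinct nodes all lie in $S$, so $K + 1 \le |S|$, i.e., $K \le |S|-1$. By the definition of terminal step, $\mu_{s_t} = \mu_{s_K} = \mu_{s_T}$ for all $t \in [K : T]$, and in particular for every $t \in [|S|-1 : T]$.

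I anticipate the only delicate point to be bookkeeping in the splicing step: verifying that the new sequence has exactly length $T$, that each transition is admissible (both the spliced junction $s_l \to s_{h+1}$ and the tail self-loops at $s_T$), and that Corollary \ref{cor:strong-optimal-termination} applies with strict inequality to every index in the removed segment. The rest of the argument is a straightforward counting step.
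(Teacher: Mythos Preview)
Your proposal is correct and follows essentially the same cycle-removal/transplant argument as the paper: assume a repetition $s_l=s_h$ with $l<h\le K$, excise the loop, pad with terminal-reward nodes, and invoke Corollary~\ref{cor:strong-optimal-termination} to obtain a strictly better admissible path, then finish with the pigeonhole principle. The only cosmetic differences are that you pad with copies of $s_T$ (self-loops) rather than the paper's $p_K$, and you dispose of the case $h=K$ directly via the corollary; both are harmless variants of the same idea.
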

\begin{proof}
	
	Assume the path $\mathbf{p}=p_{0:T}$, with $p_0=s$, achieves $V(s,T)$. We will show that it is impossible for $p_{0:T}$ to enter the terminal nodes in more than $|S|-1$ steps.
	
	Define the following operation on $\mathbf{p}$:
	\begin{equation}
		\text{TRANSPLANT}(\mathbf{p},l,h) = (p_{0:l-1},p_{h:T},p_{K}\times(h-l))
	\end{equation}

	Suppose we can find $l<h\leq K$, such that $s_l = s_h$.	Consider the sequence of nodes $\mathbf{s}=\text{TRANSPLANT}(\mathbf{p},l,h)$.
	Divide $\mathbf{s,p}$ in the following manner.
	\begin{equation}
		\begin{split}
			\mathbf{s} &= ([p_{0:l-1}],[p_{h:T}],[p_K\times(h-l)])\\
			\mathbf{p}& =  ([p_{0:l-1}],[p_{l:h-1}],[p_{h:T}])
		\end{split}
	\end{equation}
	
	Then notice that
	\begin{equation}
		\begin{split}
			V(\mathbf{s})&= V(p_{0:l-1})+ V(p_{h:T})+ V(p_K\times(h-l))\\
			V(\mathbf{p})&= V(p_{0:l-1})+ V(p_{l:h-1})+ V(p_{h:T})
		\end{split}
	\end{equation}
	Note that, by Corollary \ref{cor:strong-optimal-termination}(Strong Optimal Termination), $l<h\leq K$ implies $V(p_{l:h-1})<V(p_K\times(h-l))$, which implies $V(\mathbf{s})>V(\mathbf{p})$ and contradicts with the optimality of $\mathbf{p}$.  Therefore, we have shown by contradiction that if $V(\mathbf{p})=V(s,T)$, then there are no $l<h\leq K$ satisfying $p_l=p_h$; in other words, the nodes in $p_{0:K}$ are all distinct.
	
	Finally, if terminal step $K$ is greater than $|S|-1$, then since there are $1+K>|S|$ nodes in $s_{0:K}$, by the Pigeon Hole principle, there must exist $l<h\leq K$ satisfying $p_l=p_h$, contradicting the distinctness property as shown above. Therefore, the terminal step $K$ of any optimal $\mathbf{p}$ must be no greater than $|S|-1$.
\end{proof}

\subsection{The equivalence of finite- and infinite-time optimal policies}

\begin{proposition}\label{thm:policy-characterization}
	Consider the following mapping $\pi_G$
	$$
	\pi_G(s) = \argmax_{w\in \neighbor(s)}V(w,T_G+|S|).
	$$
	Then for any $T\geq T_G+|S|$, $s_0\in S$, the path $s_{0:T}$ constructed by 
	\begin{equation}\label{eq:pi_G_trajectory}
		s_{t} \in \pi_G(s_{t-1}),t=1,2,...,T
	\end{equation}
	achieves the optimal value $V(s_0,T)$, where \eqref{eq:pi_G_trajectory} means $s_t$ can be any element in $\pi_G(s_{t-1})$.
\end{proposition}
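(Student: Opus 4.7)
The plan is to combine Bellman's principle of optimality with the $V$-increment property (Corollary~\ref{cor:V-increment}) and Bounded Termination (Lemma~\ref{lemma:Bounded Termination}) established above. The first observation is that for every $T' \geq T_G$ and $w \in S$,
\[
V(w, T') = V(w, T_G) + (T' - T_G)\mu^*,
\]
so $V(\cdot, T')$ and $V(\cdot, T_G + |S|)$ differ only by an additive constant independent of $w$. In particular,
\[
\pi_G(s) = \argmax_{w \in \neighbor(s)} V(w, T')
\]
for every $T' \geq T_G$, so $\pi_G$ coincides with the greedy policy with respect to $V(\cdot, T')$ for any such horizon.

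Combining this invariance with the deterministic Bellman equation $V(s, T+1) = \mu_s + \max_{w \in \neighbor(s)} V(w, T)$, whenever the remaining horizon $T - t$ satisfies $T - t - 1 \geq T_G$ (equivalently $t \leq T - T_G - 1$), any $s_{t+1} \in \pi_G(s_t)$ is an optimal first action for the $(T - t)$-step sub-problem starting at $s_t$. Chaining the Bellman identity for $t = 0, 1, \ldots, T - T_G - 1$ along the $\pi_G$-generated prefix then yields
\[
V(s_0, T) = \sum_{t=0}^{T - T_G - 1} \mu_{s_t} + V(s_{T - T_G}, T_G).
\]

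The heart of the argument is to handle the final $T_G$ steps, where the $V$-increment trick no longer applies. I first argue $\mu_{s_{T - T_G}} = \mu^*$: extending the $\pi_G$-prefix by any truly optimal $T_G$-step tail produces a full trajectory $\tilde{s}_{0:T}$ attaining $V(s_0, T)$ with $\tilde{s}_t = s_t$ for $t \leq T - T_G$, and since $T \geq T_G + |S| > D\mu^*/\Delta$ strictly, Lemma~\ref{lemma:sufficient-T} forces $\mu_{\tilde{s}_T} = \mu^*$ and Lemma~\ref{lemma:Bounded Termination} then forces $\mu_{\tilde{s}_t} = \mu^*$ for all $t \geq |S| - 1$; since $T - T_G \geq |S|$, this yields $\mu_{s_{T - T_G}} = \mu^*$. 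Next I verify that $\pi_G$ is absorbing on the $\mu^*$-set: if $\mu_s = \mu^*$, staying at $s$ already attains $V(s, T_G + |S|) = (T_G + |S| + 1)\mu^*$, whereas any neighbor $w$ with $\mu_w < \mu^*$ satisfies $V(w, T_G + |S|) \leq \mu_w + (T_G + |S|)\mu^* < (T_G + |S| + 1)\mu^*$, so every element of $\pi_G(s)$ must have mean $\mu^*$. Inducting from $s_{T - T_G}$ gives $\mu_{s_t} = \mu^*$ for all $t \geq T - T_G$, and substituting $V(s_{T - T_G}, T_G) = (T_G + 1)\mu^*$ into the displayed identity collapses it to $V(s_{0:T}) = V(s_0, T)$.

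The main obstacle is precisely this tail handling: Bellman's recursion only directly certifies the first $T - T_G$ choices of $\pi_G$, and the $V$-increment argument silently breaks down for the final $T_G$ steps. The resolution uses Bounded Termination to guarantee that the $\mu^*$-set has already been reached by step $T - T_G$, after which the specific choices made by $\pi_G$ in the tail are immaterial as long as they preserve the $\mu^*$ property, which is exactly the content of the absorbing property verified above.
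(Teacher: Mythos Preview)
Your proof is correct and relies on the same three ingredients as the paper's argument: Corollary~\ref{cor:V-increment} to make $\pi_G$ greedy for any remaining horizon $\geq T_G$, Lemma~\ref{lemma:Bounded Termination} to guarantee the optimal trajectory has reached a $\mu^*$-node by step $|S|-1$, and the absorbing property of $\mu^*$-nodes under $\pi_G$ for the tail. The only organisational difference is that the paper starts from a trajectory generated by the time-varying Bellman policy $\pi_t(s)=\argmax_{w\in N(s)}V(w,T-t)$ and shows $\pi_t(s_{t-1})=\pi_G(s_{t-1})$ along it (splitting the cases at $t=|S|$), whereas you start from an arbitrary $\pi_G$-trajectory, chain Bellman down to horizon $T_G$, and then patch the last $T_G$ steps via the absorbing argument (splitting at $t=T-T_G$); both routes are equivalent.
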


\begin{proof}
	Fix a $s_0\in S$. Consider the potentially time-varying policy $\pi_t$
	\begin{equation}
		\pi_t(s) := \argmax_{w\in \neighbor(s)} V(w,T-t)
	\end{equation}
	
	Let $s_{0:T}$ be a path constructed by the standard value episode, which can be written as
	\begin{equation}
		s_t \in \pi_t(s_{t-1}) = \argmax_{w\in \neighbor(s_{t-1})} V(w,T-t),
	\end{equation} 
	thus $V(s_{0:T})=V(s_0,T)$. We will show that if $T\geq T_G+|S|$, then $\pi_t(s_{t-1})=\pi(s_{t-1})$ for all $1\leq t\leq T$, thus proving the theorem.
	
	Since $T\geq T_G+|S|> \frac{D\mu^*}{\Delta}$, by Lemma \ref{lemma:sufficient-T}(Sufficient Planning Horizon), there must be $\mu_{s_T}=\mu^*$. Furthermore, using Lemma \ref{lemma:Bounded Termination}(Bounded Termination), the $s_{0:T}$ enters the terminal nodes in $|S|-1$ steps. 
	Therefore,
	\begin{equation}
		\mu_{s_{t}}=\mu_{s_T} = \mu^*,
		\forall  t\in [|S|-1:T]
	\end{equation}
	
	which means for any $k\geq 0$, $ t\in [|S|:T]$, there is
	\begin{equation}
		\begin{split}
			&\argmax_{w\in \neighbor(s_{t-1})} V(w,k) = \{w\in \neighbor(s_{t-1}):\mu_w = \mu^*\}\\
		\end{split}
	\end{equation}
	In particular, when $k=T-t$ and $k=T_G+|S|$, 
	\begin{equation}
		\begin{aligned}
			&\argmax_{w\in \neighbor(s_{t-1})} V(w,T-t)
			=\argmax_{w\in \neighbor(s_{t-1})} V(w,T_G+|S|) 
		\end{aligned}
	\end{equation}
	Therefore for all $t\geq |S|$, $\pi_t(s_{t-1})=\pi(s_{t-1})$.
	
	Next, we consider the case where $1\leq t\leq|S|-1$. Using using Corollary \ref{cor:V-increment},  the following holds for any subset of nodes $B\in S$,
	\begin{equation}
		\begin{split}
			&\argmax_{w\in B} V(w,T-t)\\
			=&\argmax_{w\in B} V(w,T_G)+(T-t-T_G)\mu^*,\text{ Corollary \ref{cor:V-increment}}\\
			=&\argmax_{w\in B} V(w,T_G)\\
			=&\argmax_{w\in B} V(w,T_G)+|S|\mu^*,\text{ Corollary \ref{cor:V-increment}}\\
			=&\argmax_{w\in B} V(w,T_G+|S|)
		\end{split}
	\end{equation}
	
	Taking $B=\neighbor(s_{t-1})$, we have $\pi_t(s_{t-1})=\pi(s_{t-1})$ for $1\leq t\leq |S|-1$.
	
	Therefore, we have shown $\pi_t(s_{t-1})=\pi(s_{t-1})$ for all $t$, that is to say $s_{0:T}$ can be equivalently generated by the stationary policy $s_t \in \pi(s_{t-1})$. Since all the argument above holds for arbitrary $s_0$, the proof is complete.
\end{proof}

By letting $T\rightarrow\infty$, we can easily deduce that the policy $\pi_G$ stated in the Proposition is a policy achieving the smallest $R_\infty$ for any initial node $s$. 

\begin{proposition}
	\label{cor:long-term-optimal-equivalence}
	$$R_\infty(\gb,\pi_G,s) = \min_{\pi'} R_\infty(\gb,\pi',s),~\forall s\in S$$
\end{proposition}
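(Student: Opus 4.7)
The plan is to reduce the infinite-horizon claim to the finite-horizon optimality that Proposition \ref{thm:policy-characterization} already establishes, by rewriting $R_\infty$ in terms of the cumulative reward $V$ and then taking a limit. Recall $R_\infty(\pi,s_0) = \lim_{T\to\infty}\sum_{t=0}^T c_{s_t}$ with $c_s = \mu^* - \mu_s$, so along any admissible path $s_{0:T}$ we have $\sum_{t=0}^T c_{s_t} = (T+1)\mu^* - V(s_{0:T})$. Minimizing $R_\infty$ is therefore equivalent to maximizing $V(s_{0:T})$ in the limit.

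First I would fix an arbitrary $s_0\in S$ and let $s_{0:\infty}$ be the trajectory generated by $\pi_G$ started at $s_0$, and let $s'_{0:\infty}$ be the trajectory generated by an arbitrary competing stationary policy $\pi'$ started at the same $s_0$. For every horizon $T\geq T_G+|S|$, Proposition \ref{thm:policy-characterization} yields $V(s_{0:T}) = V(s_0,T)$, i.e.\ the $\pi_G$-trajectory attains the finite-horizon optimum. Meanwhile, by the definition of $V(s_0,T)$ as the maximum cumulative reward among all admissible $T$-step paths from $s_0$, we have $V(s'_{0:T})\leq V(s_0,T)$. Subtracting from $(T+1)\mu^*$ gives
\begin{equation*}
\sum_{t=0}^T c_{s_t} \;=\; (T+1)\mu^* - V(s_0,T) \;\leq\; (T+1)\mu^* - V(s'_{0:T}) \;=\; \sum_{t=0}^T c_{s'_t}
\end{equation*}
for every $T\geq T_G+|S|$.

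Next I would take $T\to\infty$. For $\pi_G$ the limit exists and is finite: Lemma \ref{lemma:sufficient-T} and Lemma \ref{lemma:Bounded Termination} guarantee that $s_t$ enters the set of optimal nodes within $|S|-1$ steps and stays there, so $c_{s_t}=0$ for all sufficiently large $t$ and the partial sums stabilize. For $\pi'$ the limit either exists and is finite (if $\pi'$ eventually reaches and remains at an optimal node) or equals $+\infty$; in either case the inequality above is preserved in the limit. This produces $R_\infty(\pi_G, s_0)\leq R_\infty(\pi', s_0)$, and since $\pi'$ and $s_0$ were arbitrary, the proposition follows.

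I do not expect a serious obstacle: the heavy lifting has been done in Proposition \ref{thm:policy-characterization} and in the terminal-node machinery of Appendix \ref{appendix:terminal-nodes}. The only small care needed is handling the $R_\infty = \infty$ case for $\pi'$ (handled trivially by the monotone convergence of the partial sums) and verifying that the finite-horizon inequality holds for all sufficiently large $T$, not merely in some averaged sense, so that the pointwise limit comparison is legitimate.
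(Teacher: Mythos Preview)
Your proposal is correct and follows essentially the same route as the paper: invoke Proposition~\ref{thm:policy-characterization} to get that $\pi_G$ attains the finite-horizon optimum $V(s_0,T)$ for every $T\geq T_G+|S|$, compare this against the value of an arbitrary competing policy (which is at most $V(s_0,T)$), and pass to the limit. The paper's proof is terser about the limit step and frames things via the intermediate quantity $R(\gb,\pi',s,T):=T\mu^*-E[V(s_{1:T})]$, but the logical skeleton is identical to yours.
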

\begin{proof}
	We use Proposition \ref{thm:policy-characterization} to show $\pi_G$ achieves the minimum of $R_\infty$. Consider any $T\geq T_G+|S|$, any $s\in S$, and any policy $\pi'$. Define 
	\begin{equation}
		R(\gb,\pi',s,T):= T\mu^* - E_{s_0=s,s_{t+1}\sim \pi'(s_t)}[V(s_{1:T})]
	\end{equation}
	Then 
	\begin{equation}
		\begin{aligned}
			&R(\gb,\pi',s,T)\\
			=&T\mu^* - E_{s_0=s,s_{t+1}\sim \pi'(s_t)}[V(s_{1:T})]\\
			\geq &T\mu^*-(V(s,T)-\mu_s),\textrm{ since }V(s_{1:T})\leq V(s,T)-\mu_s\\
			=& T\mu^*-(E_{s_0=s,s_{t+1}\in \pi_G(s_t)}[V(s_{0:T})]-\mu_s),\textrm{ Proposition \ref{thm:policy-characterization}}\\
			=& T\mu^*-E_{s_0=s,s_{t+1}\in \pi_G(s_t)}[V(s_{1:T})]\\
			=& R(\gb,\pi_G,s,T)
		\end{aligned}
	\end{equation}
	
	In other words, $R(\gb,\pi',s,T)\geq R(\gb,\pi_G,s,T)$ for any policy $\pi'$, any $s\in S$, and $T\geq T_G+|S|$. Letting $T\rightarrow \infty$ and taking the minimum on both sides gives us 
	\begin{equation}
		\min_{\pi'} R_\infty(\gb,\pi',s)=R_\infty(\gb,\pi_G,s),~\forall s\in S
	\end{equation}

\end{proof}

Finally, if a policy $\pi$ is optimal for $R_\infty$, then it has to be optimal for any finite horizon $T\geq T_G+|S|$, thus we establish an equivalence between $\pi_G$ and the $R_\infty$-optimal policy.
\begin{proposition}
	\label{cor:long-term-optimal-converse}
	Consider any $s\in S$. If $\pi$ satisfies $R_\infty(\gb,\pi,s) = \min_{\pi'} R_\infty(\gb,\pi',s)$, then $\pi$ is optimal for any finite time horizon $T\geq T_G+|S|$
\end{proposition}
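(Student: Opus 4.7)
The plan is to show that any $R_\infty$-optimal policy $\pi$ from $s$ and the policy $\pi_G$ from Proposition \ref{thm:policy-characterization} generate finite-horizon trajectories with identical total reward at every horizon $T \ge T_G + |S|$. The key identity is that for any such $\pi$ starting at $s$, the $T$-step trajectory value equals $(T+1)\mu^* - R_\infty(\gb, \pi, s)$ provided $T$ is at least the hitting time of $s^*$. Since $\pi$ and $\pi_G$ share the same $R_\infty$ value by Proposition \ref{cor:long-term-optimal-equivalence}, their trajectory values coincide, and Proposition \ref{thm:policy-characterization} identifies $\pi_G$'s trajectory value with $V(s, T)$, which then forces $\pi$ to attain the same optimum.

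To establish the identity, I first argue that the trajectory $s_0 = s, s_1, s_2, \ldots$ produced by $\pi$ reaches $s^*$ in some $\tau \le |S| - 1$ steps and stays there forever. By Lemma \ref{lem:bound_Rinfty}, $R_\infty(\gb, \pi, s) \le D < \infty$, so only finitely many cost terms $c_{s_t} = \mu^* - \mu_{s_t}$ can be positive. Because $\pi$ is a deterministic stationary map, any repeated pre-$s^*$ node $s_i = s_j$ with $i < j$ forces the trajectory to enter a cycle avoiding $s^*$, making infinitely many $c_{s_t}$ positive --- a contradiction. Hence $s_0, \ldots, s_\tau$ are distinct, giving $\tau \le |S| - 1$; the same cycle argument forces $\pi(s^*) = s^*$, so $s_t = s^*$ and $c_{s_t} = 0$ for every $t \ge \tau$, and in particular $R_\infty(\gb, \pi, s) = \sum_{t=0}^{\tau - 1} c_{s_t}$.

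Given this structure, for any $T \ge \tau$,
\begin{align*}
\sum_{t=0}^T \mu_{s_t} &= \sum_{t=0}^{\tau - 1} \mu_{s_t} + (T - \tau + 1)\mu^* \\
&= (T+1)\mu^* - \sum_{t=0}^{\tau - 1}(\mu^* - \mu_{s_t}) \\
&= (T+1)\mu^* - R_\infty(\gb, \pi, s).
\end{align*}
Applying this identity to both $\pi$ and $\pi_G$ (both of which satisfy hitting time $\le |S| - 1 \le T$ whenever $T \ge T_G + |S|$) and using Proposition \ref{cor:long-term-optimal-equivalence} to equate their $R_\infty$ values yields equal $T$-step trajectory totals; Proposition \ref{thm:policy-characterization} then identifies $\pi_G$'s total with $V(s, T)$, completing the proof. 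The main obstacle is the cycle-free prefix claim, but it falls out cleanly from stationarity and determinism of $\pi$ together with finiteness of $R_\infty$; the remaining steps are purely algebraic.
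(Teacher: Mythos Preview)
Your proof is correct and takes a route that is close in spirit to the paper's but with a different emphasis. The paper argues by a sandwich of inequalities: using monotonicity of the finite-horizon regret it writes $R_\infty(\gb,\pi,s)\ge T\mu^*-V_\pi(x_{1:T})\ge T\mu^*-V_{\pi_G}(s_{1:T})=R_\infty(\gb,\pi_G,s)$, where the last equality uses that $\pi_G$'s finite-horizon regret has already stabilised at $R_\infty$ by time $T_G+|S|$ (a consequence of Proposition~\ref{thm:policy-characterization}); then equality of the two $R_\infty$ values forces all inequalities to collapse. You instead prove directly the structural fact that \emph{every} $R_\infty$-optimal deterministic stationary policy reaches $s^*$ within $|S|-1$ steps and stays, via a clean cycle-versus-finiteness argument, and then read off the identity $\sum_{t=0}^T\mu_{s_t}=(T+1)\mu^*-R_\infty$ for both $\pi$ and $\pi_G$. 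Your approach is more self-contained (it does not need to borrow the $R_T(\pi_G)=R_\infty(\pi_G)$ fact from the earlier proof) and yields the stronger intermediate conclusion that $\pi$ itself hits $s^*$ quickly; the paper's squeeze is shorter because it only needs this structure for $\pi_G$, which is already on record. Both arguments implicitly use the uniqueness of $s^*$ assumed in Section~\ref{sec:problem-formualtion}.
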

\begin{proof}
	
	For the other direction, fix any $s\in S$, and take any $\pi\in \arg\min_{\pi'} R_\infty(\gb,\pi',s)$. Fix any time horizon $T\geq T_G+|S|$. Consider the path $x_{0:T}$ with $x_0=s$, generated by executing $\pi$ for $T$ steps. Similarly, let $s_{0:T}$ with $s_0=s$ be the path generated by $\pi_G$. Since $\pi$ is not necessarily optimal for finite horizon $T$, there is
	\begin{equation}\label{eq:73}
		E_\pi[V(x_{1:T})]\leq E_{\pi_G}[V(s_{1:T})]=V(s,T)-\mu_s,\text{Proposition \ref{thm:policy-characterization}}
	\end{equation}
	
	So 
	\begin{equation}\label{eq:74}
		\begin{aligned}
			&R_\infty(\gb,\pi,s)\\
			\geq&T\mu^*-E_\pi[V(x_{1:T})]\\
			\geq &T\mu^*-E_{\pi_G}[V(s_{1:T})]\\
			= & R_\infty(\gb,\pi_G,s)
		\end{aligned}
	\end{equation}
	The last equality holds since $R(\gb,\pi_G,s,T_G+|S|)=R_\infty(\gb,\pi_G,s)$ as shown in Proposition \ref{cor:long-term-optimal-equivalence}. Note that $$R_\infty(\gb,\pi,s)=R_\infty(\gb,\pi_G,s)$$ since $\pi,\pi_G$ are both optimal in $R_\infty$. So the inequalities in \eqref{eq:74} are equalities, thus $E_\pi[V(x_{1:T})]=E_{\pi_G}[V(s_{1:T})]=V(s,T)-\mu_s$, meaning $\pi$ achieves the optimal value for finite time horizon $T$. Finally, the proof is complete since $s$ and $T$ are arbitrarily selected.
\end{proof}
\section{The implementation of UCRL2 under graph bandit setting}\label{append:UCRL2-GB-Detail}

This section discusses how we modify the original UCRL2 algorithm in the graph bandit setting. The pseudocode of our implementation is in Alg \ref{alg:UCRL2-GB} and \ref{alg:VI-offline}. 

The original algorithm UCRL2\cite{Jaksch2010a} consists of two key steps. The first step is offline planning based on Extended Value Iteration(EVI), where optimism is applied to compute a policy for the agent. The second step is a doubling scheme slightly different from \texttt{G-UCB}. When applying UCRL2 to graph bandit as in Alg \ref{alg:UCRL2-GB}, the original doubling scheme is kept(line 11-14); only the EVI is changed to standard value iteration(\textbf{VI}) defined in Alg. \ref{alg:VI-offline}(line 10), with the merit for each node being the UCB value defined in line 9.
\begin{algorithm}[ht]
	\caption{Implementation of UCRL2 under graph bandit setting}\label{alg:UCRL2-GB}
	\begin{algorithmic}[1]
		\renewcommand{\algorithmicrequire}{\textbf{Input:}}
		\renewcommand{\algorithmicensure}{\textbf{Result:}}
		
		\REQUIRE
		The initial node $s_0$. Confidence parameter $\delta$ for the UCB. The value iteration planning algorithm $\textbf{VI}$ defined in Alg. \ref{alg:VI-offline}, which computes an $\epsilon$-optimal policy given the graph $G$ and a set of reward values $\{\hat{\mu}_s\}_{s\in S}$ for each node: $\hat{\pi} \gets \textbf{VI}(G,\{\hat{\mu}_s\}_{s\in S},\epsilon)$. 
		
		\STATE $m\gets 0$
		\STATE  Follow any path that visits all nodes at least once.\texttt{//Initialize $\bar{\mu}_0,\rad_0$.}
		
		\STATE Place the agent at $s_0$. $s_{curr}\gets s_0$.
		
		\WHILE{The agent hasn't received a stopping signal}
		
		\STATE $t_m\gets$ the total number of steps before this episode. 
		\STATE $n_{m-1}(s)\gets $the number of times $s$ is visited before this episode, for all $s\in S$.
		\STATE $c_m(s)\gets 0$ for all $s\in S$.
		\STATE  $m\gets m+1$. 
		
		\STATE Calculate the UCB values $$\tilde{U}_{m-1}(s) = \bar{\mu}_{m-1}(s)+\sqrt{\frac{7\log(SA t_m/\delta)}{2n_{m-1}(s)}}$$ for all $s\in S$.
		
		\STATE $\pi_m\gets \textbf{VI}(G,\{\tilde{U}_{m-1}(s)\}_{s\in S},\frac{1}{\sqrt{t_m}})$ 
		
		\texttt{//The optimality threshold at iteration $m$ is $\frac{1}{\sqrt{t_m}}$.}

		\WHILE{$c_m(s_{curr})< n_{m-1}(s_{curr})$}

		\STATE Collect reward at $s_{curr}$.
		\STATE $c_m(s_{curr})\gets c_m(s_{curr})+1$.
		\ENDWHILE
		
		\texttt{//The above inner loop ensures the number of samples at some node is doubled in this episode.}
		\STATE Execute $\pi_m$ for one step. $s_{curr}\gets \pi_m(s_{curr})$
		\ENDWHILE
	\end{algorithmic}
\end{algorithm}

\begin{algorithm}[ht]
	\caption{\textbf{VI}: Value Iteration planning algorithm 
	}
	\label{alg:VI-offline}
	\textbf{Input:} Graph $G=(S,\edges)$, mean reward vector $\mu=(\mu_1,\mu_2,\dots,\mu_{|S|})$, optimality threshold $\epsilon$.
	
	\textbf{Output:} Policy $\pi: S\rightarrow S$ 
	\begin{algorithmic}[1]
		\STATE $u_0(s)\gets 0$ for all $s\in S$.
		\STATE $i\gets 0$
		\REPEAT
		\STATE $i\gets i+1$
		\STATE For all $s\in S$, $$u_i(s)\gets \mu_s + \max_{s'\in N_s} u_{i-1}(s').$$
		\UNTIL{$$\max_{s\in S}(u_{i}(s)-u_{i-1}(s)) - \min_{s\in S}(u_{i}(s)-u_{i-1}(s))<\epsilon$$}    
		\STATE Return $\pi$ defined as $$\pi(s) \in \argmax_{s'\in N_s} u_i(s'),~\forall s\in S $$
	\end{algorithmic}
\end{algorithm}
\section{Full Numerical Experiment Results}\label{append:additional-simulations}

The code for the numerical experiments can be found at \cite{Zhang_Graph-Bandit_2022}.

\subsection{All benchmarks on every type of graph}\label{append:all-benchmark-all-graph}
\begin{figure}[ht]
	\centering
	\subfloat[FC.]{\includegraphics[width=0.16\linewidth]{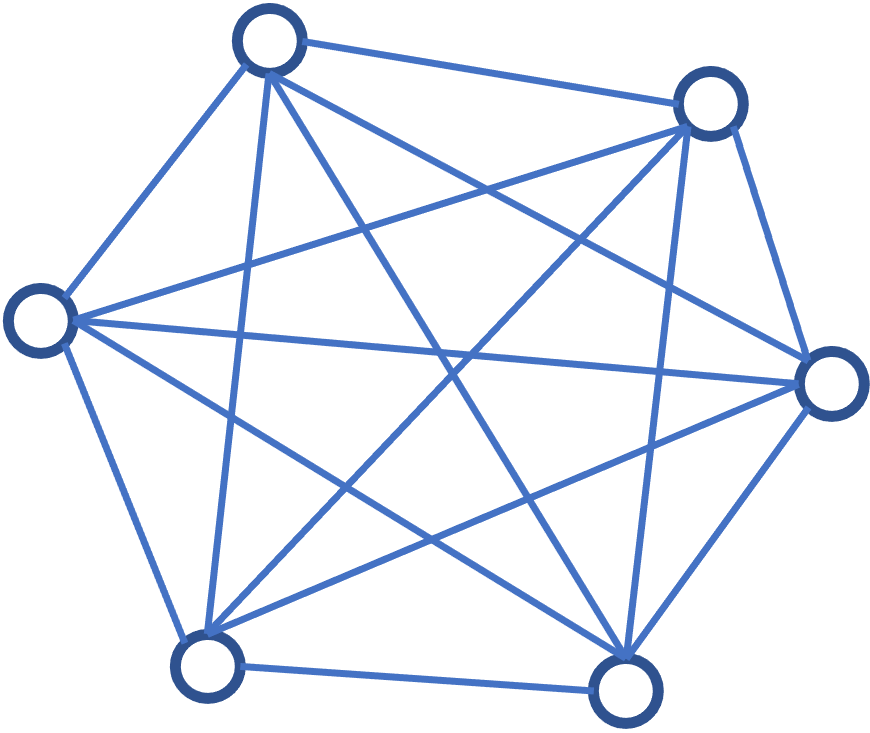}}
	\subfloat[Line.]{\includegraphics[width=0.16\linewidth]{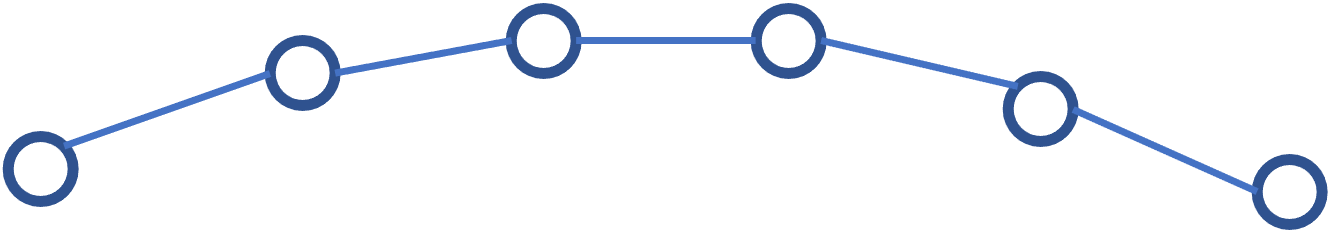}}
	\subfloat[Circle.]{\includegraphics[width=0.16\linewidth]{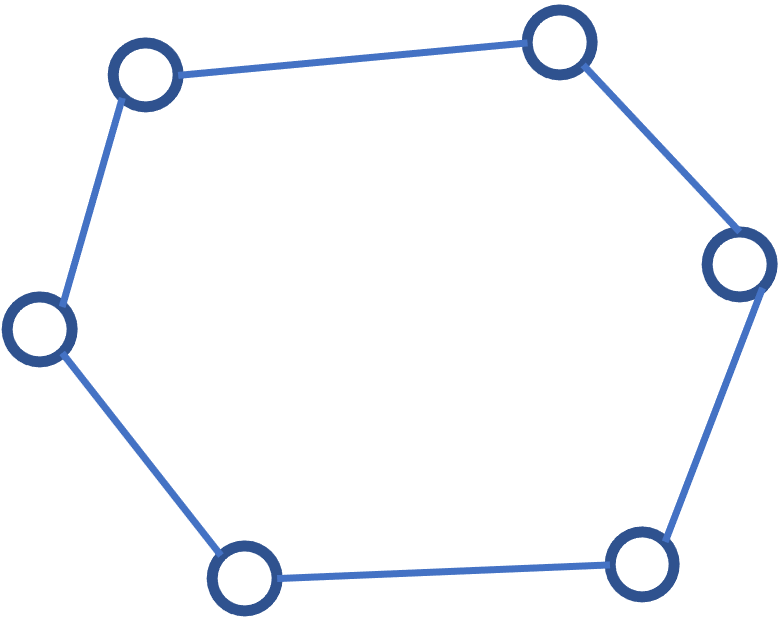}} 
	\subfloat[Star.]{\includegraphics[width=0.16\linewidth]{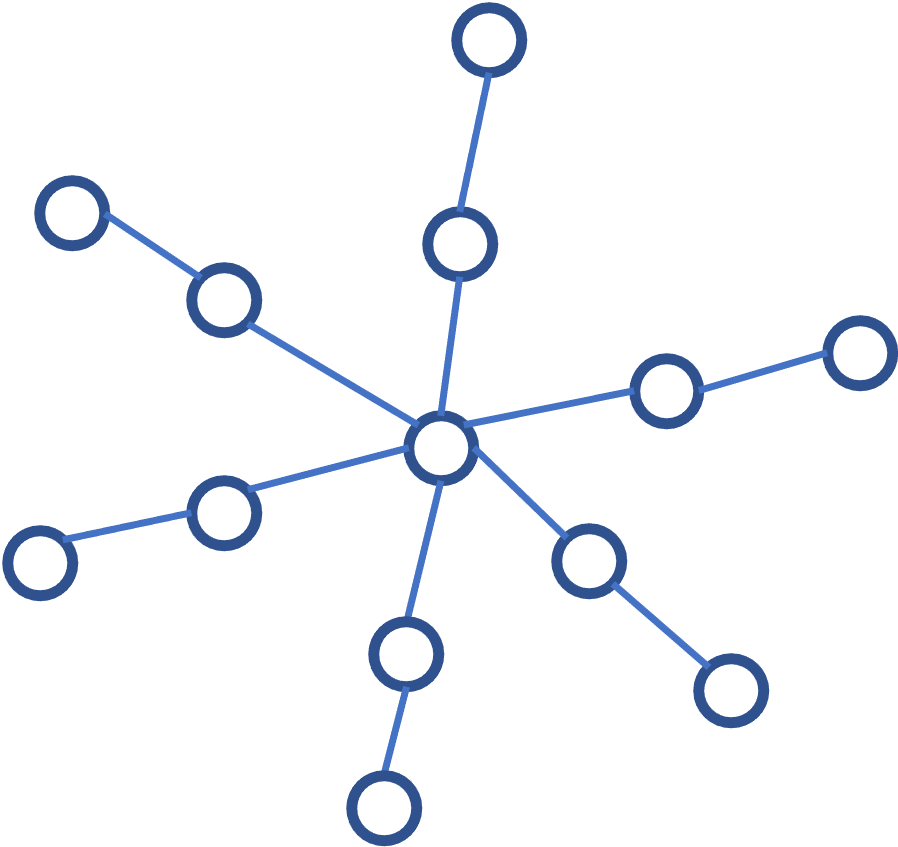}}\hspace{0.5mm}
	\subfloat[Tree.]{\includegraphics[width=0.16\linewidth]{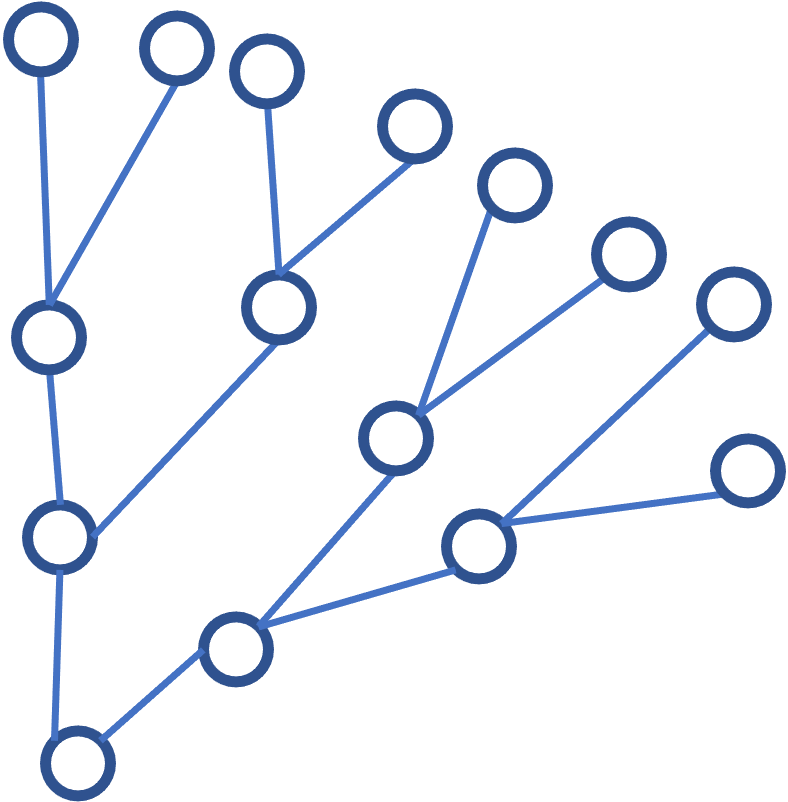}}\hspace{0.5mm}
	\subfloat[Grid.]{\includegraphics[width=0.16\linewidth]{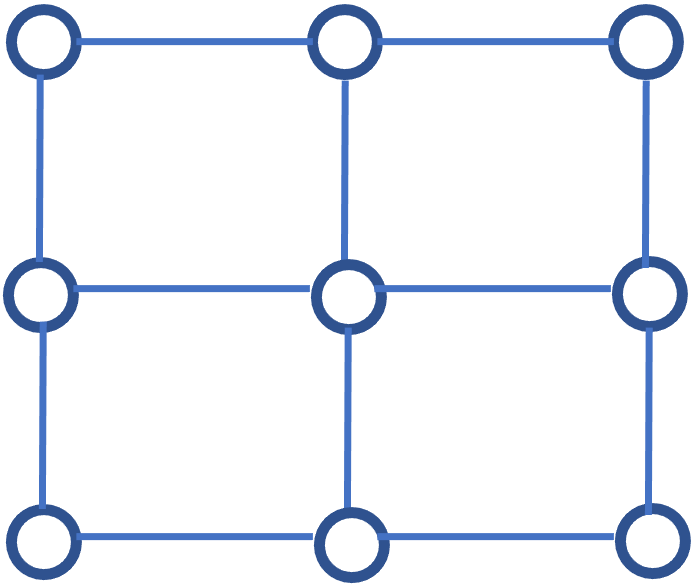}}
	\caption{The graph types in the experiments. FC means `fully connected' in (a).}
	\label{fig:graphs}
\end{figure}
Fig. \ref{fig:graphs} illustrates the six graph types we considered in the numerical experiments. We run 100 simulations for each algorithm on every graph, with $T=2\times 10^4$ steps per simulation. For the fully connected graph, we initialize the mean rewards from $\mu_s \sim\mathcal{U}(0.5,1.5)$, for each arm $s\in S$; for the remaining graphs, we initialize the mean rewards from $\mu_s \sim\mathcal{U}(0.5,9.5)$, for each node $s\in S$. In the simulations the reward distributions are defined as $P(s)=\mathcal{U}(\mu_s-0.5,\mu_s +0.5)$, $\forall s\in S$. 

Fig. \ref{fig:ALL} shows the regrets for all benchmark algorithms on every type of graph. Overall, \texttt{G-UCB} shows a clear advantage over the benchmarks.  We observe that local UCB and TS often get trapped in the `local maxima of rewards' due to a lack of foresight, making their regrets grow almost linearly. The two Q-learning benchmarks achieve sub-linear regrets, but these model-free algorithms need more samples than \texttt{G-UCB} to learn the optimal policy because they do not use the graph structure. \texttt{G-UCB} thus outperforms these benchmarks by mitigating the `local maxima' with foresight while staying sample efficient using problem knowledge. 

Fig. \ref{fig:GUCB-vs-UCRL2} shows a head-to-head comparison  between \texttt{G-UCB} and UCRL2. The regret of UCRL2 within one standard deviation to \texttt{G-UCB} on the line and circle graph, which is rather close, but the advantage of \texttt{G-UCB} is prominent on the remaining graphs, where the two algorithms are separated by two SD. As shown next in Section \ref{append:improvement-by-UCB}, the improvement of \texttt{G-UCB} from UCRL2 is predominantly brought by our more proper definition of UCB using the understanding of the graph bandit problem structure.
\begin{figure}[ht]
	\centering
	\subfloat[Line]{\includegraphics[width=\benchmarkwidth]{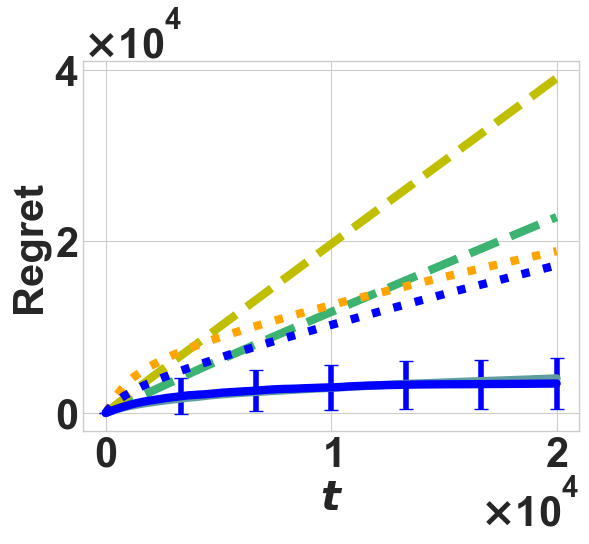} }
	\subfloat[Circle]{\includegraphics[width=\benchmarkwidth]{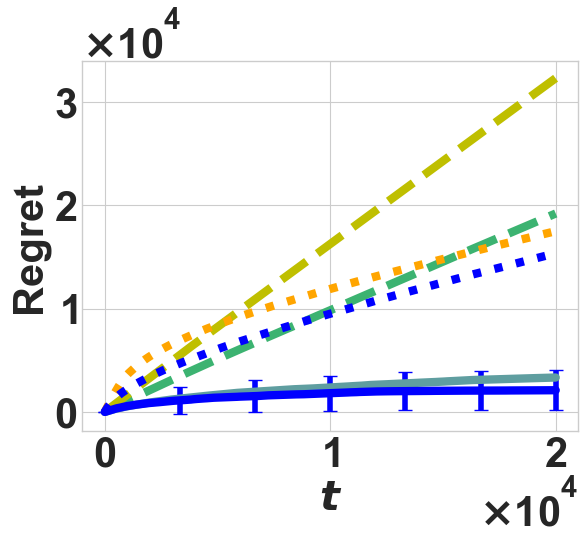} }  
	\subfloat[Grid]{\includegraphics[width=\benchmarkwidth]{Figures/benchmarks/grid.png}}
	\\
	\subfloat[Tree]{\includegraphics[width=\benchmarkwidth]{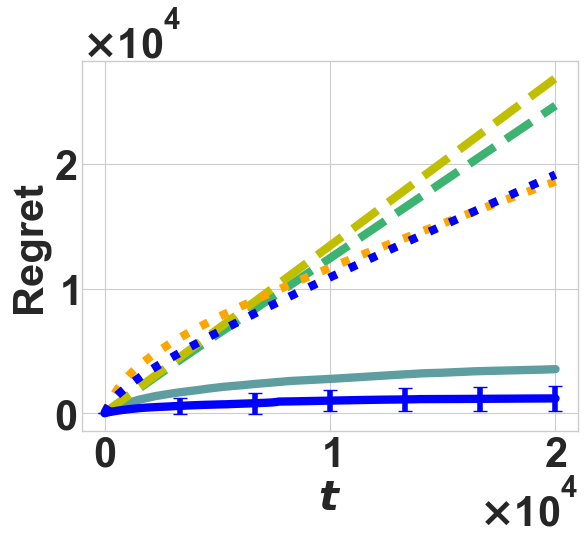}} 
	\subfloat[Star]{\includegraphics[width=\benchmarkwidth]{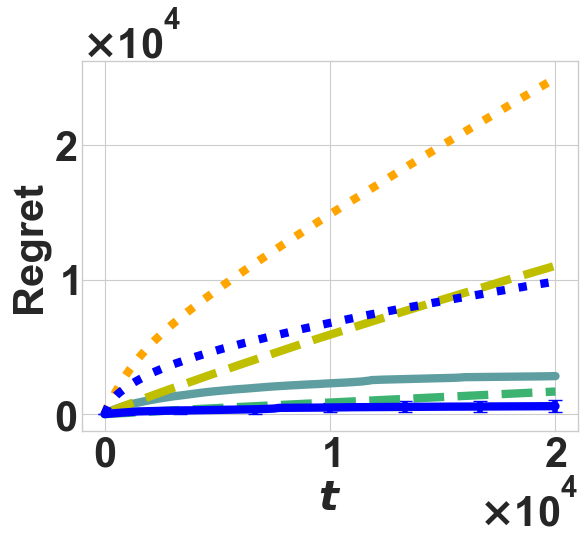}}
	\subfloat[Fully connected]{\includegraphics[width=\benchmarkwidth]{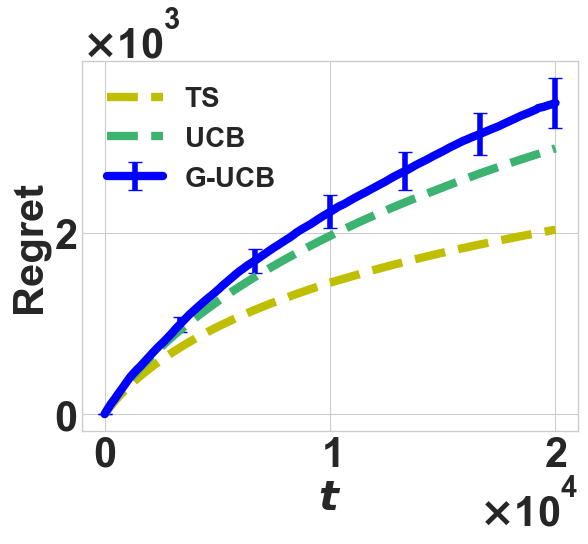}\label{subfig:FC}}
	\caption{
		Regret as a function of time for three benchmark algorithms and the proposed algorithm on six different graph structures. The captions of the subfigures indicate the types of the graph.  Each regret curve is the average over 100 simulations.  For visualization clarity, we plot the error bars showing one standard deviation only for \texttt{G-UCB}.}
	\label{fig:ALL}
\end{figure}

\begin{figure}
	\centering
	\subfloat[Line]{\includegraphics[width=\benchmarkwidth]{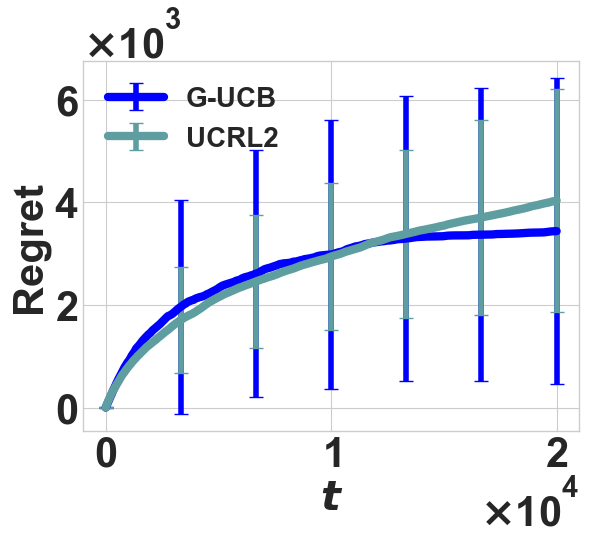} \label{subfig:line}}
	\subfloat[Circle]{\includegraphics[width=\benchmarkwidth]{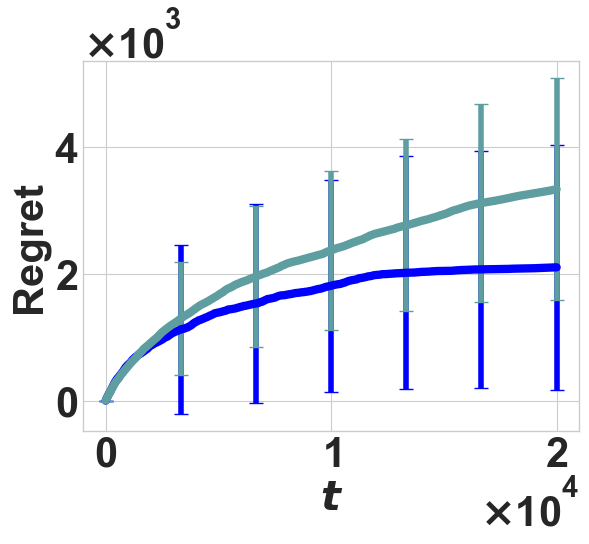} \label{subfig:circle}}  
	\subfloat[Grid]{\includegraphics[width=\benchmarkwidth]{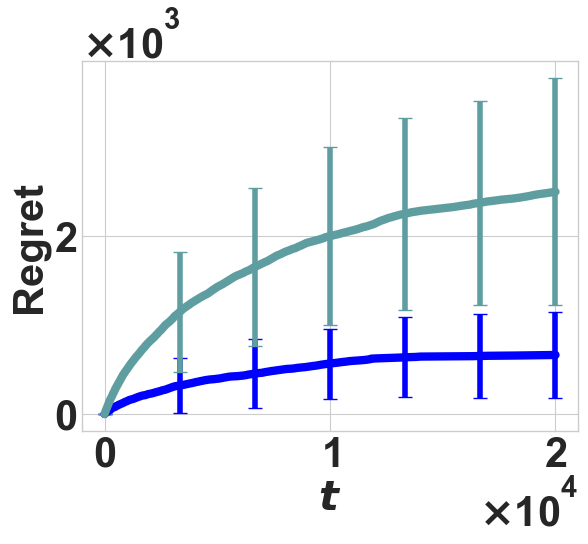}\label{subfig:grid}}
	\\
	\subfloat[Tree]{\includegraphics[width=\benchmarkwidth]{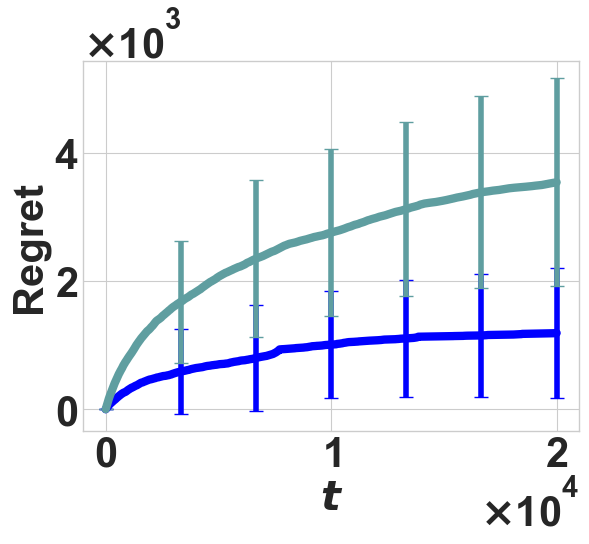}\label{subfig:tree}} 
	\subfloat[Star]{\includegraphics[width=\benchmarkwidth]{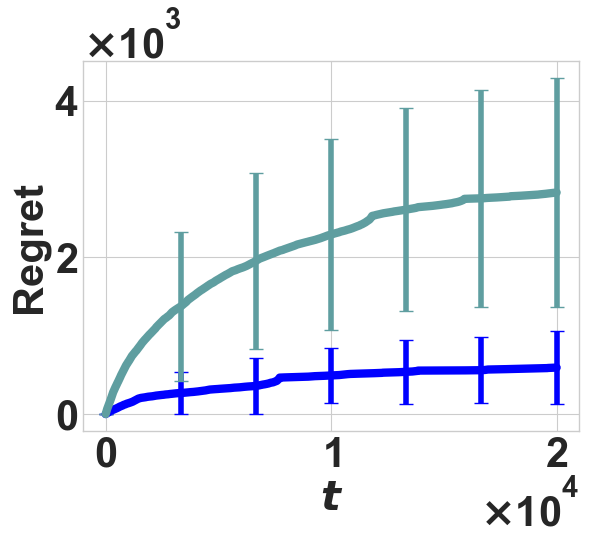}\label{subfig:star}}
	\subfloat[Fully connected]{\includegraphics[width=\benchmarkwidth]{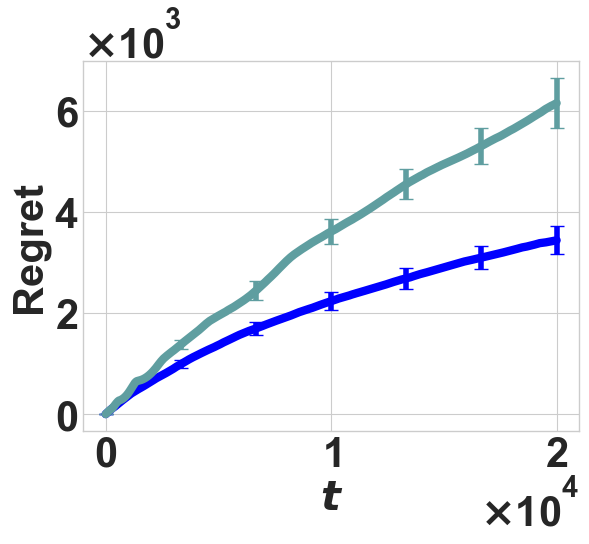}}
	\caption{
		Regret comparison for \texttt{G-UCB} and UCRL2 on six different graph structures. }
	\label{fig:GUCB-vs-UCRL2}
\end{figure}

The experiment on the fully connected graph in Fig. \ref{subfig:FC} demonstrates how \texttt{G-UCB} performs in the original multi-armed bandit problem.
\modified{The result shows that empirically \texttt{G-UCB} performs slightly worse than the classical bandit algorithms, although our regret guarantee for \texttt{G-UCB} is $O(\sqrt{ST\log(T)})$ which matches the regrets of original UCB and Thompson Sampling. This phenomenon arises since the doubling operation in \texttt{G-UCB} makes the algorithm explore more than the classical UCB, which results in worse empirical performances.  We note that \texttt{G-UCB} on a fully-connected graph is equivalent to the phased UCB algorithm, and the reader may consult exercise 7.5 in \cite{lattimore2020bandit} for further information on this phenomenon. }

\subsection{Simulation Efficiency}\label{append:Computation}

This subsection demonstrates the qualitative improvement in computational efficiency of \texttt{G-UCB} over UCRL2 in terms of running time per simulation.  We run the following simulations on a 13" Apple Macbook Air with an M1 chip, 8GB RAM, 256 GB disk space, and macOS Ventura 13.0. The Python version is 3.10.4. No parallelism or any software/hardware acceleration is used. For efficiency, we simulate \texttt{G-UCB} and UCRL2 algorithms under the same graph bandit settings as in Appendix \ref{append:all-benchmark-all-graph}: $|S|=100$ for each of the six different graphs, $T=2\times 10^4$ steps per simulation, with $100$ simulation per algorithm per graph. We record the times for all simulations and present them in Fig. \ref{fig:Computation}. The results show the running time of \texttt{G-UCB} is consistently lower than UCRL2 over different graphs. The median run times per simulation of UCRL2 range from $1.68$ to $11.88$ times the median run time of \texttt{G-UCB} on all graphs. Since offline planning is the computational bottleneck for both algorithms, the significantly less running time of \texttt{G-UCB} per iteration indicates an impressive computational efficiency improvement from using the SP planning algorithm instead of value iteration. This example again demonstrates understanding graph bandit structure brings improvement over directly applying generic RL algorithms.

\begin{figure}[ht]
	\centering
	\subfloat[Run times]{\includegraphics[width=0.45\linewidth]{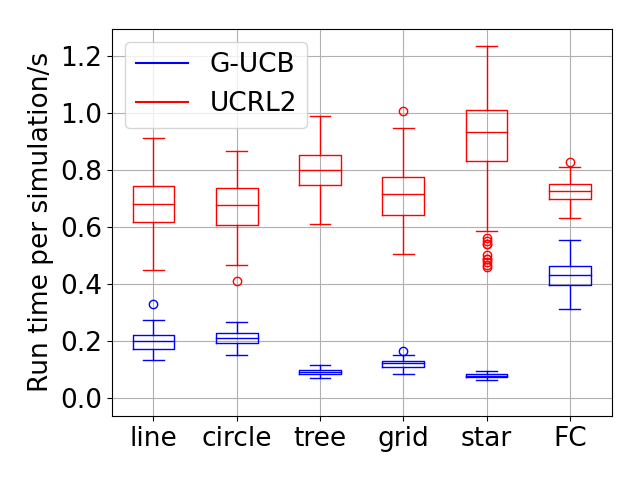}\label{subfig:run-times}}
	\subfloat[Run time ratios]{\includegraphics[width=0.45\linewidth]{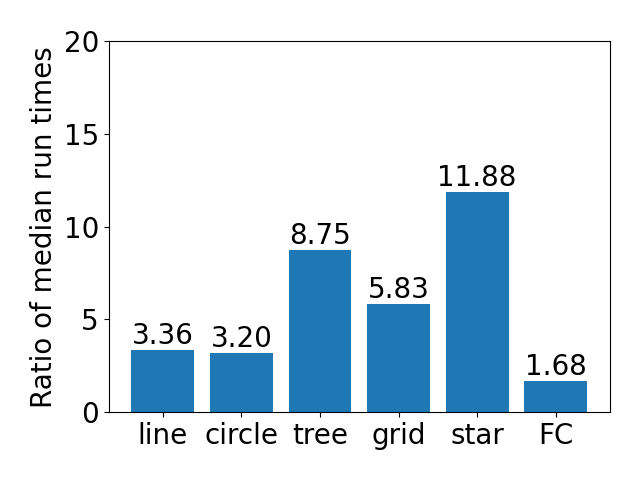}\label{subfig:run-time-ratios}}
	\caption{Computation efficiency of \texttt{G-UCB} and UCRL2 regarding running time per simulation on different graphs. Fig. \ref{subfig:run-times}: The fully connected graph is labeled FC. The run times for UCRL2 are in red, and those for \texttt{G-UCB} are in blue. Fig. \ref{subfig:run-time-ratios}: The ratios values are the median run times of UCRL2 over that of \texttt{G-UCB}.} 
	\label{fig:Computation}
\end{figure}

\subsection{Performance Improvement Brought by our UCB}\label{append:improvement-by-UCB}

In this subsection, we show that our different UCB definition from \cite{Jaksch2010a} leads to a prominent improvement of \texttt{G-UCB} over UCRL2 in empirical performance. The UCB in \cite{Jaksch2010a} is in defined as \begin{equation*}
	\begin{aligned}
		\tilde{U}_{m-1}(s) &= \bar{\mu}_{m-1}(s)+\sqrt{\frac{7\log(SA t_m/\delta)}{2n_{m-1}(s)}},
	\end{aligned}
\end{equation*}  where $S, ~A$ are the number of states and actions, and $\delta\in(0,1]$ is a confidence parameter. The UCB definition above has an explicit dependence on $S$ and $A$, which is absent in our UCB(Eq. \eqref{eq:UCB}). This dependence on $S$ and $A$ arises in the confidence radius due to the analysis under the general RL setting, but our analysis under the graph bandit setting does not hint at such dependence. Directly applying this UCB to graph bandit should result in more exploration and higher regret than our UCB. This intuition is confirmed in the experiment below.

We create a modified \texttt{G-UCB} by replacing the $U_{m-1}$ values in line 6 of \texttt{G-UCB} with the UCB $\Tilde{U}_{m-1}$ above and compare the resulting regret with the original \texttt{G-UCB}. The experiments are conducted under the same graph bandit environment setting as in Appendix \ref{append:all-benchmark-all-graph}. The results in Fig. \ref{fig:UCB_comparison} show that the regret when using our UCB is consistently lower than the UCB in UCRL2, with a margin of more than two standard deviations on all graphs. This impressive improvement indicates that directly using the UCB in \cite{Jaksch2010a} on graph bandit would result in more exploration and higher regret, but with a good understanding of the structures of graph bandit, we have shown that a more appropriate UCB can be used to achieve much better empirical learning results. 

\begin{figure}[ht]
	\centering
	\subfloat[Line]{\includegraphics[width=0.33\linewidth]{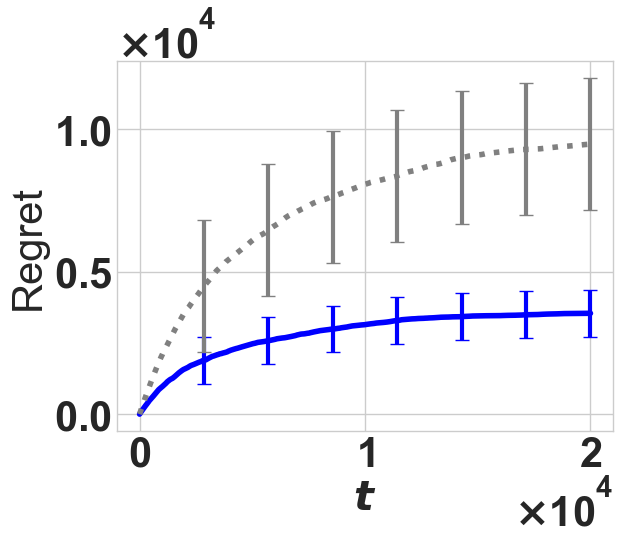}}
	\subfloat[Circle]{\includegraphics[width=\ucbcomparewidth]{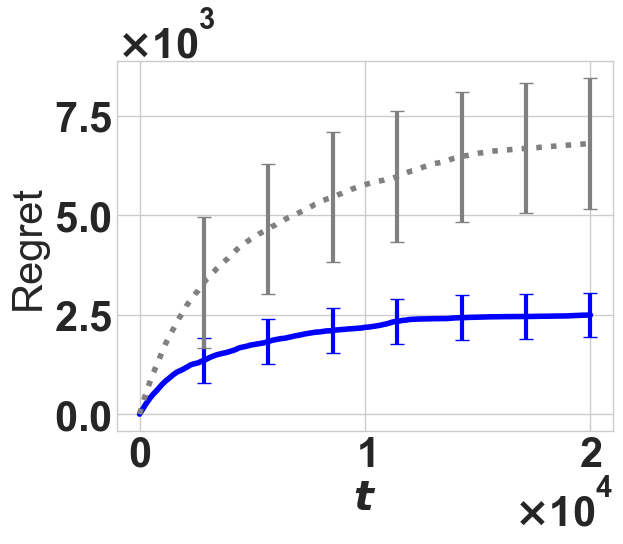}}  
	\subfloat[Grid]{\includegraphics[width=\ucbcomparewidth]{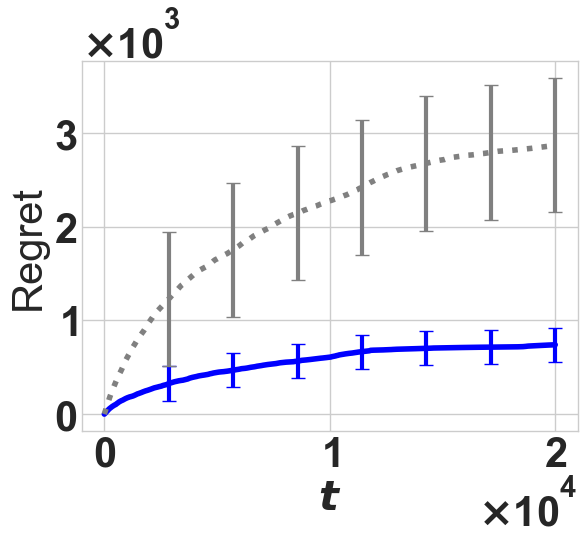}}
	\\
	\subfloat[Tree]{\includegraphics[width=\ucbcomparewidth]{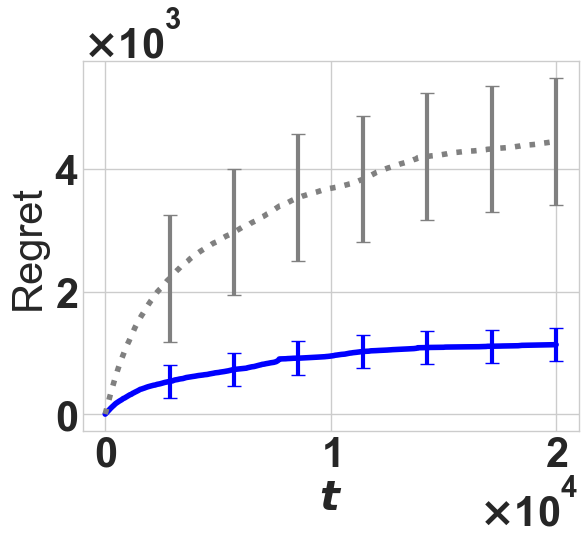}} 
	\subfloat[Star]{\includegraphics[width=\ucbcomparewidth]{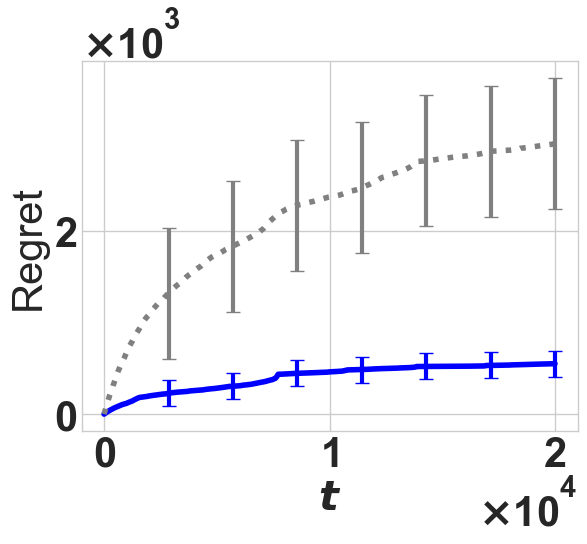}}
	\subfloat[Fully connected]{\includegraphics[width=\ucbcomparewidth]{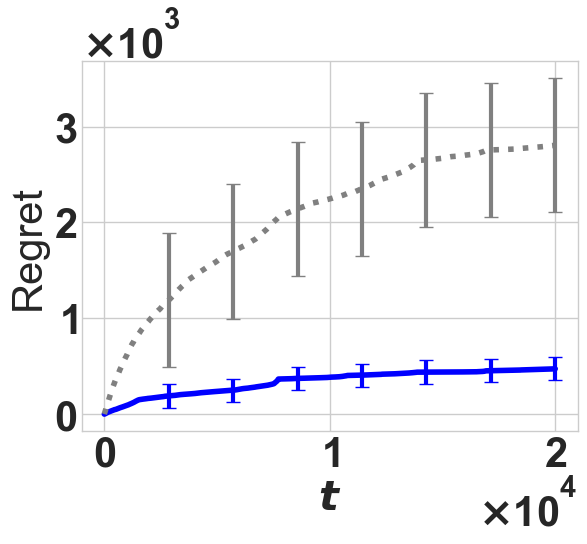}}
	\caption{
		Regret of \texttt{G-UCB} with two different definitions of UCB.  Gray dotted curves: regrets when using the UCB defined in the UCRL2 algorithm\cite{Jaksch2010a}. Blue solid curves: regrets of the original \texttt{G-UCB}. Using our UCB in Eq. \eqref{eq:UCB}  shows a clear advantage over the UCB in UCRL2\cite{Jaksch2010a}.
	}
	\label{fig:UCB_comparison}
\end{figure}

\subsection{Comparison with the doubling scheme of UCRL2}\label{append:Doubling-Comparison}

In the following experiments, we aim to show if the small difference in the doubling schemes of \texttt{G-UCB} and UCRL2 contributes to the difference in their learning regret. The two doubling schemes are only different in the terminal conditions of one episode: in \texttt{G-UCB}, the episode ends when the samples of the destination node are doubled; in comparison, there is not a destination node in UCRL2, and the episode terminates as long as the samples of \textit{some} node in the graph is doubled. We note that these two doubling schemes coincide if the time horizon is large enough and the optimality of the policies is ensured; therefore, they should result in similar regrets.

In the experiments, the \texttt{G-UCB} algorithm is used as the control group, represented by blue solid curves. The comparison group is a modification of \texttt{G-UCB} by replacing its doubling scheme with the doubling scheme of UCRL2, which corresponds to the gray dotted curves. We run the learning experiments for these two groups using the same default experiment setting specified in Appendix \ref{append:all-benchmark-all-graph}. Results from Fig. \ref{fig:Doubling_comparison} show that the differences in regrets of the two groups are all within one standard deviation except for the experiment on the tree graph, where the doubling scheme of \texttt{G-UCB} shows a slight advantage over UCRL2. Overall, no decisive evidence shows that different doubling schemes lead to significant differences in learning regrets. The result implies the main contributor to the improvement of \texttt{G-UCB} over UCRL2 in empirical learning regret is the more appropriate definition of UCB derived using the structure of graph bandit rather than the doubling scheme.

\begin{figure}[ht]
	\centering
	\subfloat[Line]{\includegraphics[width=0.33\linewidth]{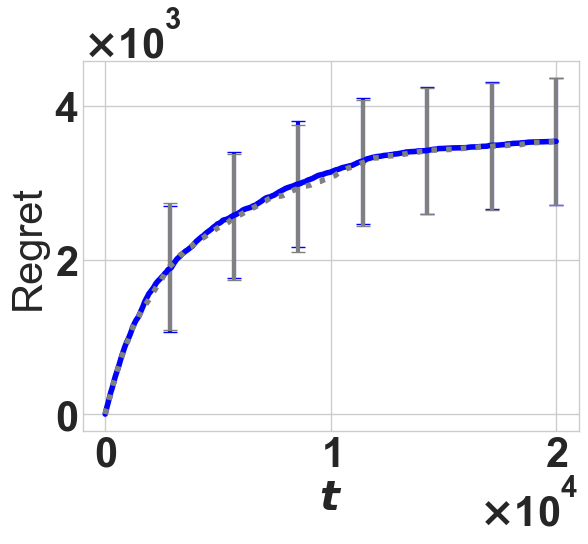}}
	\subfloat[Circle]{\includegraphics[width=\ucbcomparewidth]{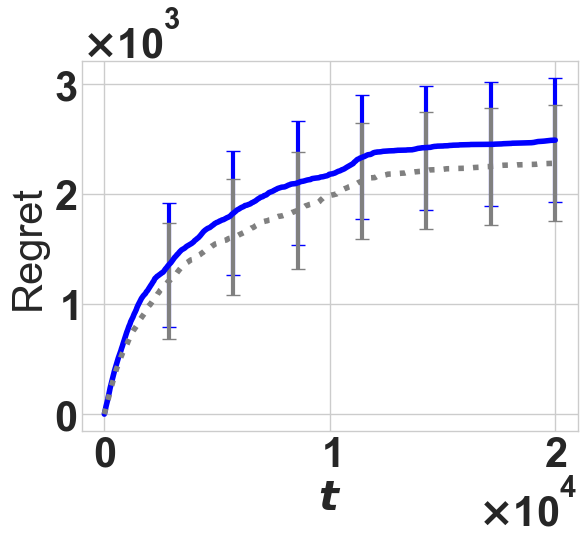}}  
	\subfloat[Grid]{\includegraphics[width=\ucbcomparewidth]{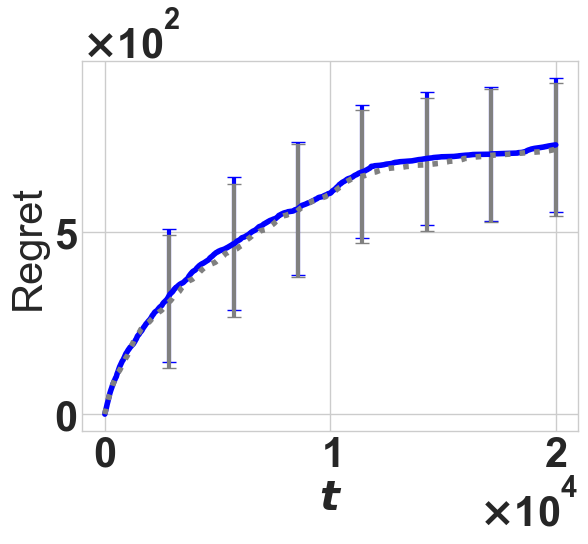}}
	\\
	\subfloat[Tree]{\includegraphics[width=\ucbcomparewidth]{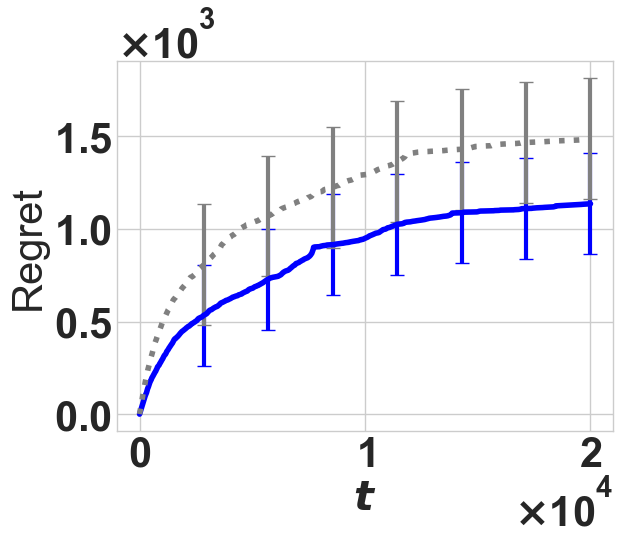}} 
	\subfloat[Star]{\includegraphics[width=\ucbcomparewidth]{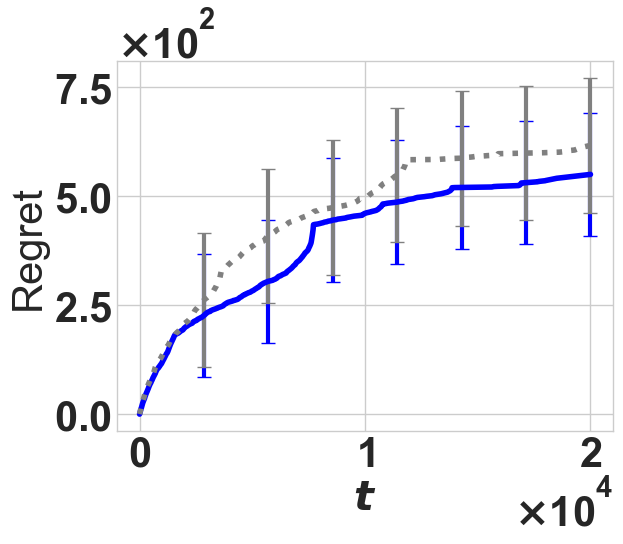}}
	\subfloat[Fully connected]{\includegraphics[width=\ucbcomparewidth]{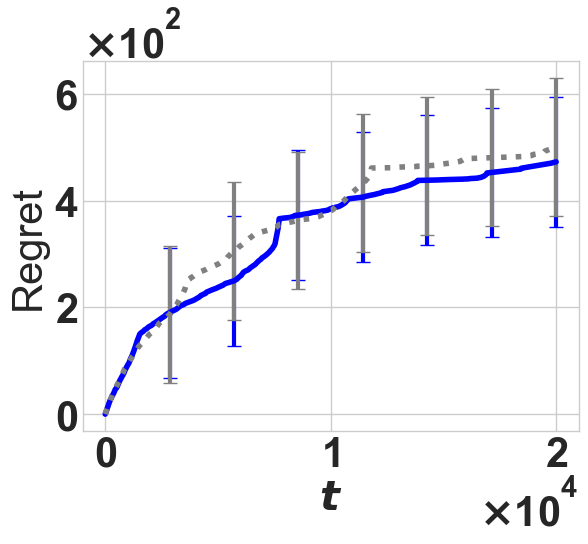}}
	\caption{
		Regret under different doubling schemes. The gray dotted curve corresponds to the doubling scheme from the UCRL2 algorithm\cite{Jaksch2010a}; the blue solid curve corresponds to the doubling scheme from \texttt{G-UCB} algorithm. The error bars plot one standard deviation across 100 trials. All experiments use the UCB and offline planning as in the \texttt{G-UCB} algorithm. There is no decisive difference between the doubling schemes in \texttt{G-UCB} and UCRL2.
	}
	\label{fig:Doubling_comparison}
\end{figure}

\subsection{Sensitivity to Environment Parameters}
In the following experiments, we study the effect of $|S|,D,$ and $\Delta$ on the regret of our algorithm. Unless specified otherwise, the total time $T$ is $1000$ in these experiments, the mean rewards are initialized in the same way as the previous experiments, and the regret is averaged over 100 simulations. 
\textbf{Dependence on $|S|$.}
We experiment with the effect of the number of nodes $|S|$ by running the experiments on the star graphs, with a fix diameter $D$ but an increasing number of ``branches''. Fig. \ref{subfig:RCurves|S|} shows an increasing trend of regret as $|S|$ increases. The regret at time $t=1000$ is shown in Fig. \ref{subfig:Rvs|S|}. With the total time fixed, we can see regret is indeed sublinear in $|S|$, which agrees with the bound in Theorem \ref{thm:G-UCB} that is proportional to $\sqrt{|S|}$. In particular, the regret at time $T$ flattens when $|S|$ is approximately $T$.

\begin{figure}[ht]
	\centering
	\subfloat[Regret curves]{\includegraphics[width=0.45\linewidth]{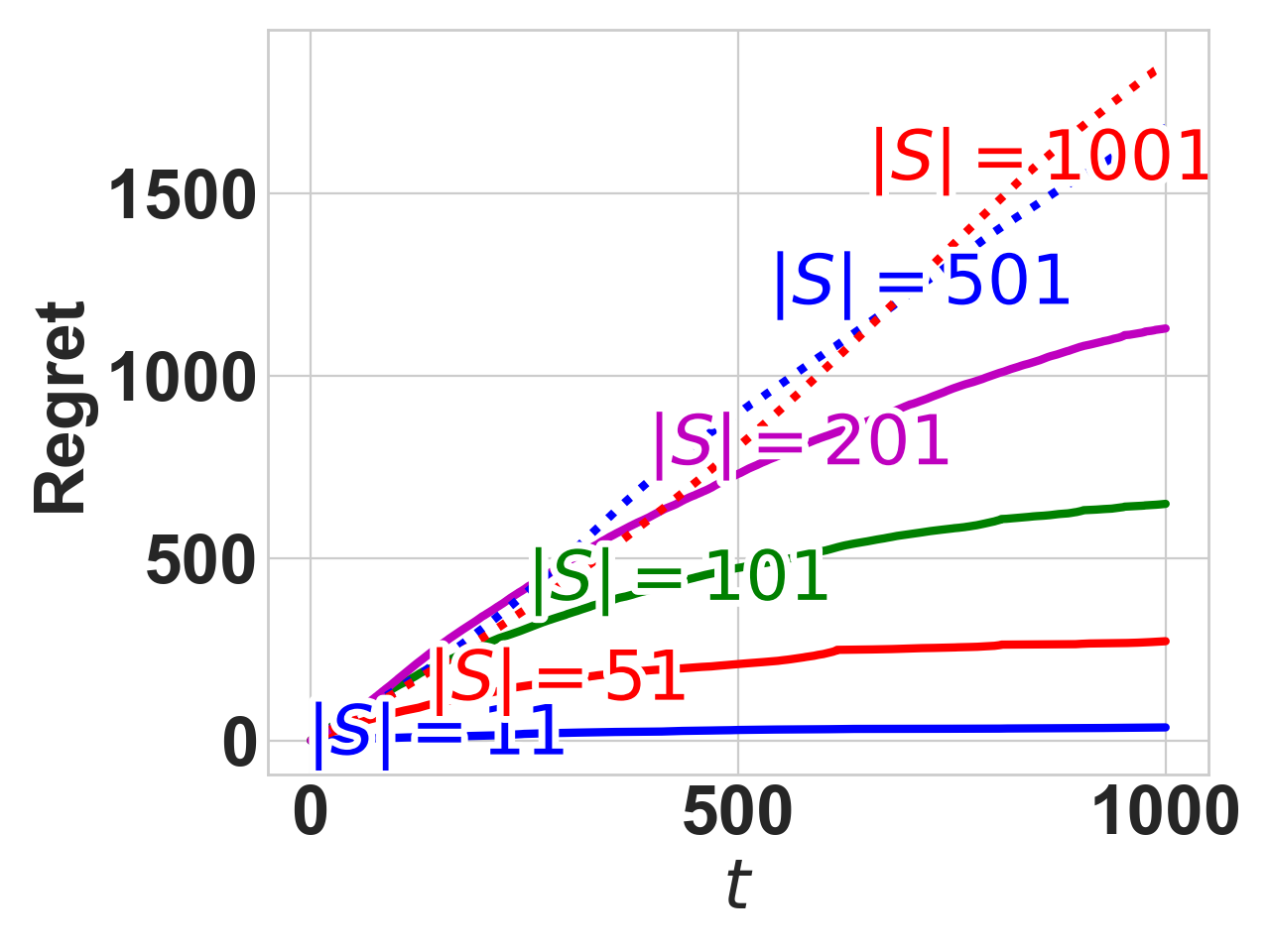}\label{subfig:RCurves|S|}}
	\subfloat[Regrets at $t$=1000. ]{\includegraphics[width=0.45\linewidth]{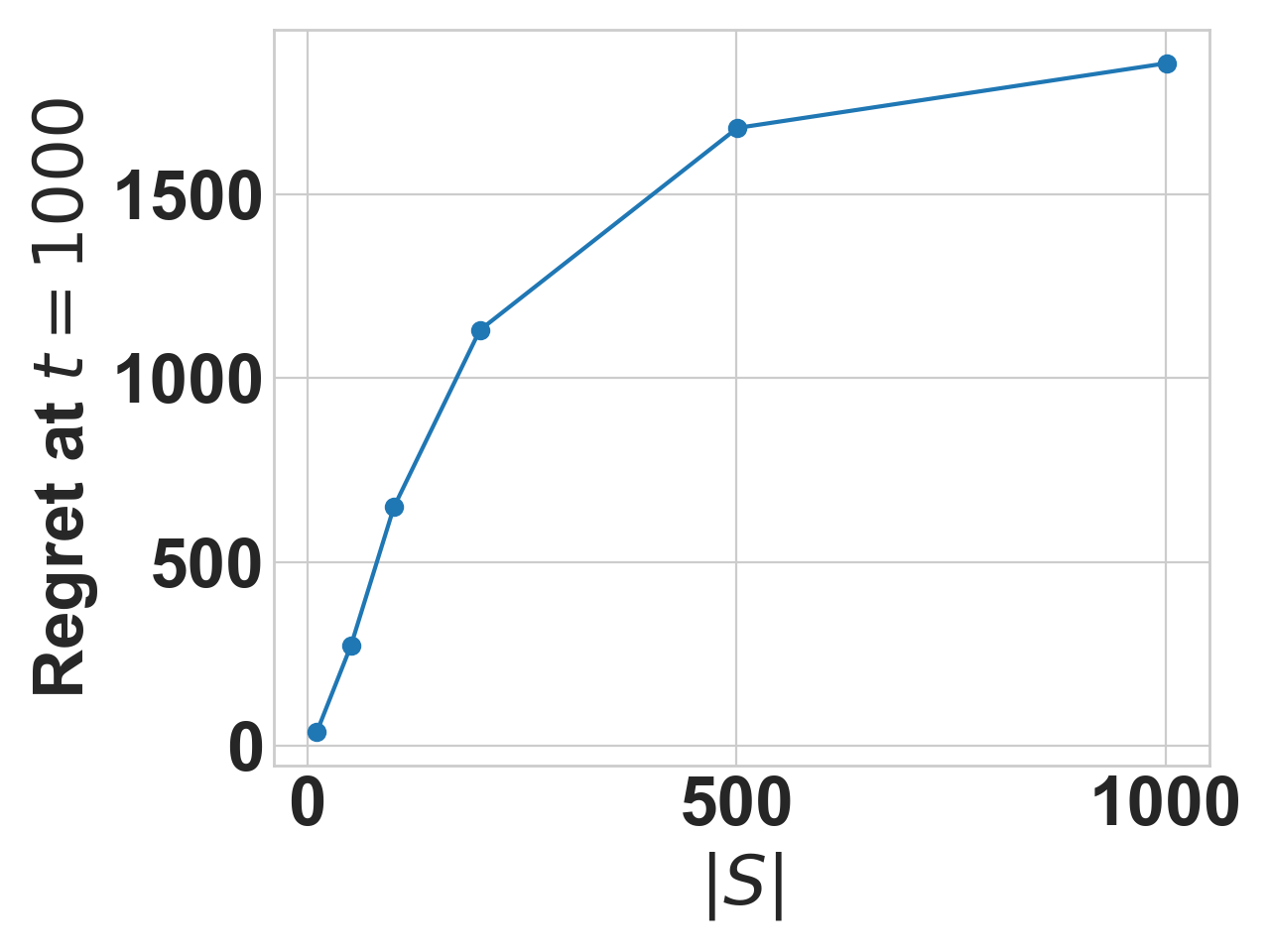}\label{subfig:Rvs|S|}}
	\caption{The algorithm's dependence on  $|S|$.}
	\label{fig:num_nodes_test}
	
\end{figure}

\textbf{Dependence on $D$.} We also experiment with the influence of graph diameter $D$ on our algorithm. We fix the number of nodes $|S|=50$ and systematically increase the diameter from $D=2$ to $D=|S|$(see Appendix \ref{appendix:IncreasingD} for the detailed information). Fig. \ref{fig:D_test} shows that overall the regret grows linearly in $D$, which agrees with the regret bound in Theorem \ref{thm:G-UCB}. The regret stops growing when $D$ is very close to $|S|$, as the graph has almost become a line graph.

\begin{figure}[ht]
	\centering
	\subfloat[Regret curves]{\includegraphics[width=0.45\linewidth]{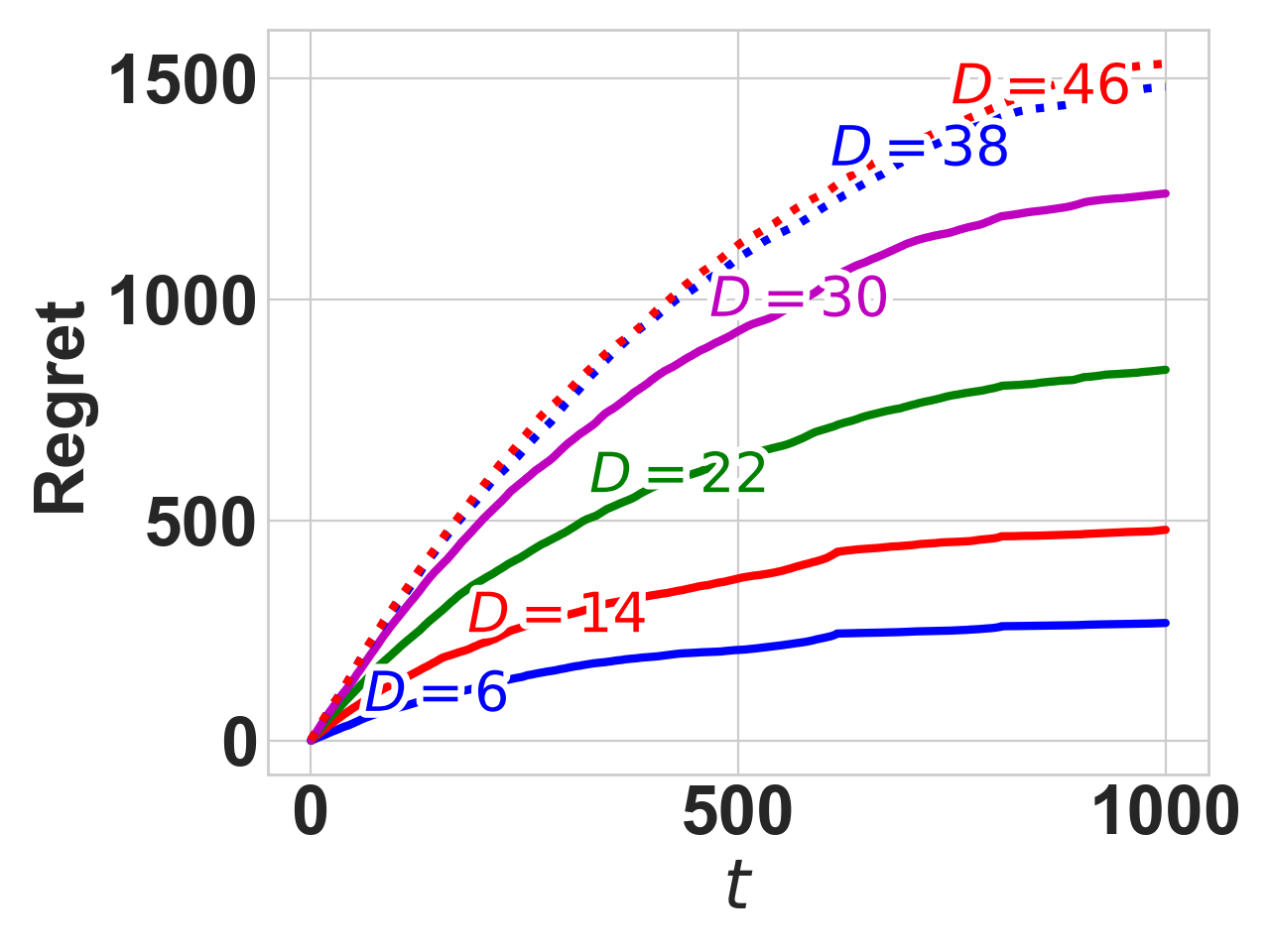}\label{subfig:D_test}}
	\subfloat[Regrets at $t$=1000. ]{\includegraphics[width=0.45\linewidth]{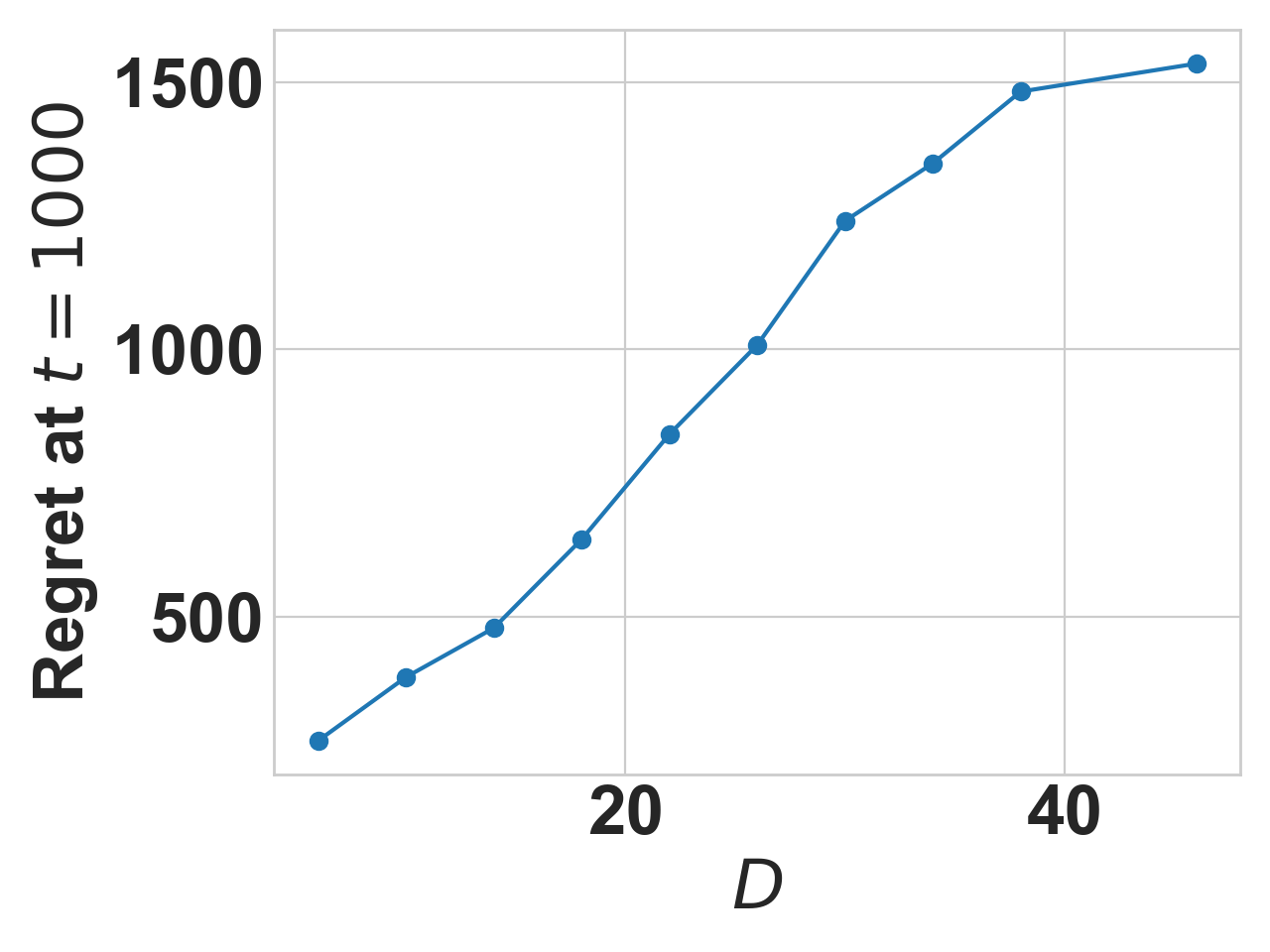}\label{subfig:RvsD}}
	\caption{The algorithm's dependence on $D$.}
	\label{fig:D_test}
\end{figure}
\textbf{Dependence on $\Delta$.}
We evaluated the effect of the gap $\Delta$ on a line graph of $|S|=10$ nodes so that $|S|$ and $D$ are fixed. The first node in the graph has a mean of $9.5$, the last node has a mean of $9.5-\Delta$, and all remaining nodes yield zero rewards. The empirical regrets from our simulations are illustrated in Fig. \ref{fig:delta_test}. The regrets for $\Delta = 0.1,0.01,10^{-3},10^{-4}$ are collectively labeled as $\Delta\leq 0.1$.

Like MAB, the graph bandit becomes more difficult to learn as $\Delta$ decreases. We can see that the regret increases as $\Delta$ decreases, which agrees with the $1/\Delta$ dependence in our instance-dependent bound in Theorem \ref{thm:UCB-DEST-DELTA}. But interestingly, the regret stops increasing as $\Delta$ becomes less than $0.1$,  which may imply that the regret has reached the minimax bound stated in Theorem \ref{thm:G-UCB}. 
\begin{figure}[ht]
	\centering
	\includegraphics[width=0.5\linewidth]{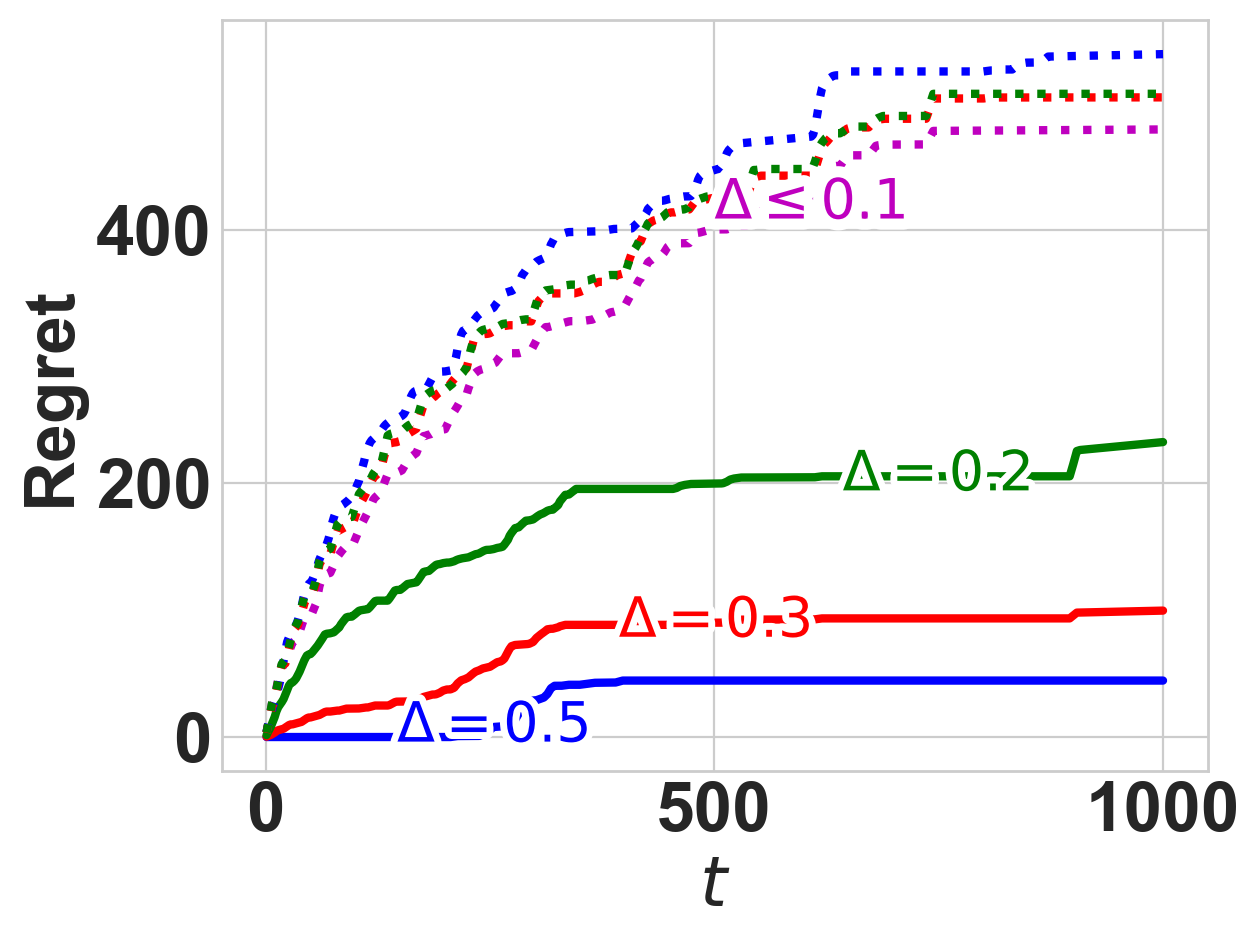}
	\caption{The dependence on the gap $\Delta$.}
	\label{fig:delta_test}
\end{figure}

\subsection{Variation in SP Offline Planning.} 

\begin{figure}[ht]
	\centering
	\subfloat[Line]{\includegraphics[width=0.30\linewidth]{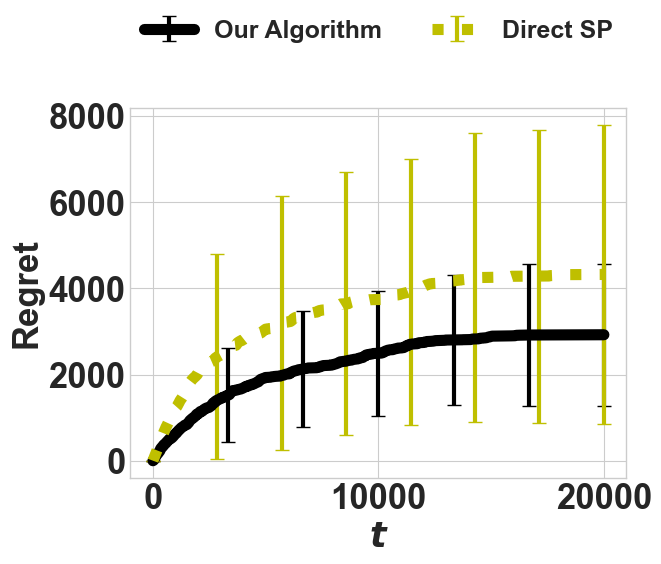} \label{subfig:line_direct}}
	\subfloat[Grid]{\includegraphics[width=0.30\linewidth]{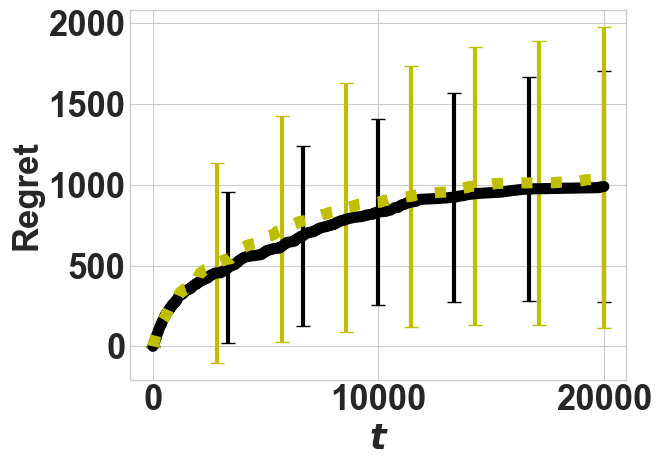}\label{subfig:grid_direct}}
	\\
	\subfloat[Circle]{\includegraphics[width=0.30\linewidth]{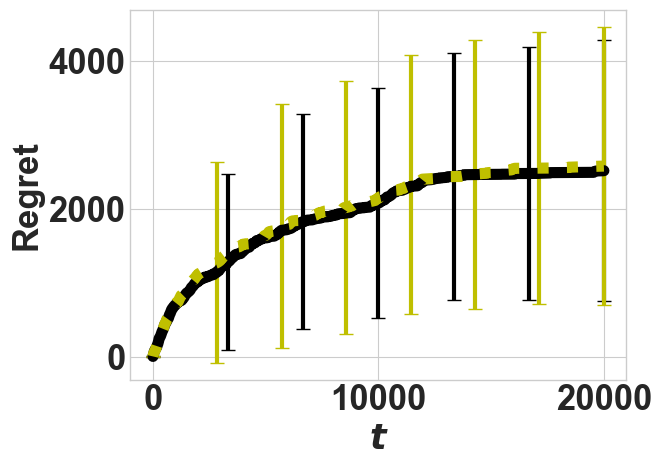} \label{subfig:circle_direct}}  
	\subfloat[Tree]{\includegraphics[width=0.30\linewidth]{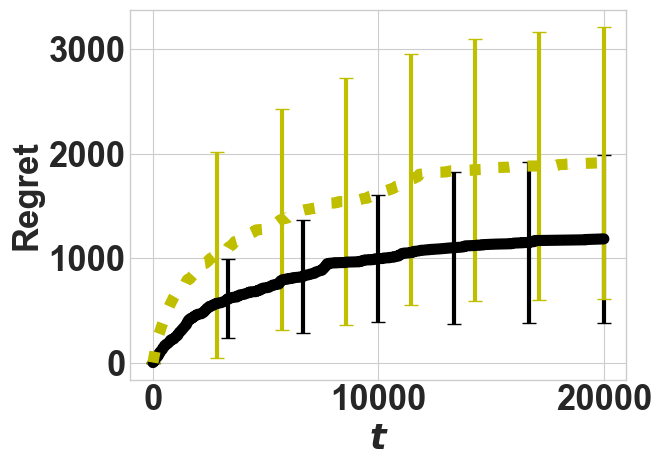}\label{subfig:tree_direct}} 
	\subfloat[Star]{\includegraphics[width=0.30\linewidth]{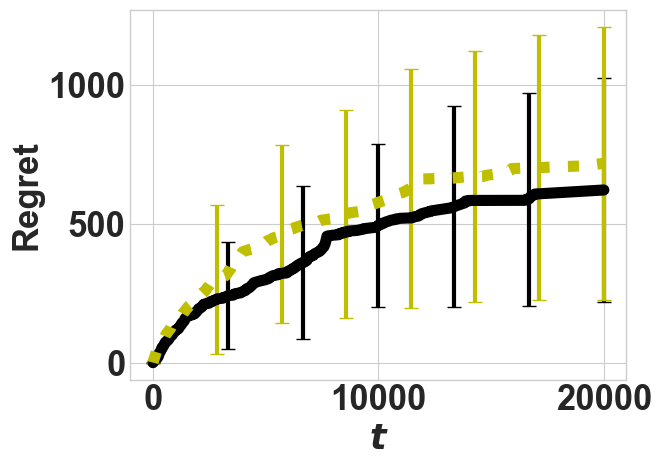}\label{subfig:star_direct}}
	\caption{
		Regret as a function of time for two different ways of reaching $dest_m$. The error bars show one standard deviation.}
	\label{fig:directSP}
\end{figure}

We note that the result of Theorem \ref{thm:G-UCB} still holds if the agent follows the path with the shortest length to $dest_m$ instead of executing the optimal $\pi_m$ policy. Although the theoretical guarantees are unchanged, there may be differences in empirical performance, and we present the corresponding simulation results in Fig. \ref{fig:directSP}. The experiments use the same configurations as in the first set of experiments in Section 5.1 of the main paper, with the fully-connected graph omitted for simplicity. The black solid curves correspond to the regrets of our original algorithm, and the yellow dotted curves(labeled as `Direct SP') correspond to replacing the execution of $\pi_m$ with following the path of shortest length to $dest_m$. 

The results show that the performance of the original algorithm and the modification are overall similar. The difference in average regrets is smaller than one standard deviation. Nevertheless, our original algorithm does perform better on graphs with a higher $D/|S|$ ratio, such as line, grid, and circle graphs. This result suggests that while the doubling operation is the predominant mechanism that ensures the $\sqrt{T}$ regret, following $\pi_m$ is beneficial empirically on graphs where $D$ is large such that the cost of destination switch is high.

\section{The method of increasing $D$ while keeping $|S|$ unchanged}\label{appendix:IncreasingD}

An illustration of how we increase $D$ while keeping $|S|$ unchanged in the $D$-dependence experiment is shown in Fig. \ref{fig:IncreasingD}. The code can be found in our supplementary materials.
\begin{figure}[ht]
	\centering
	\includegraphics[width=0.6\linewidth]{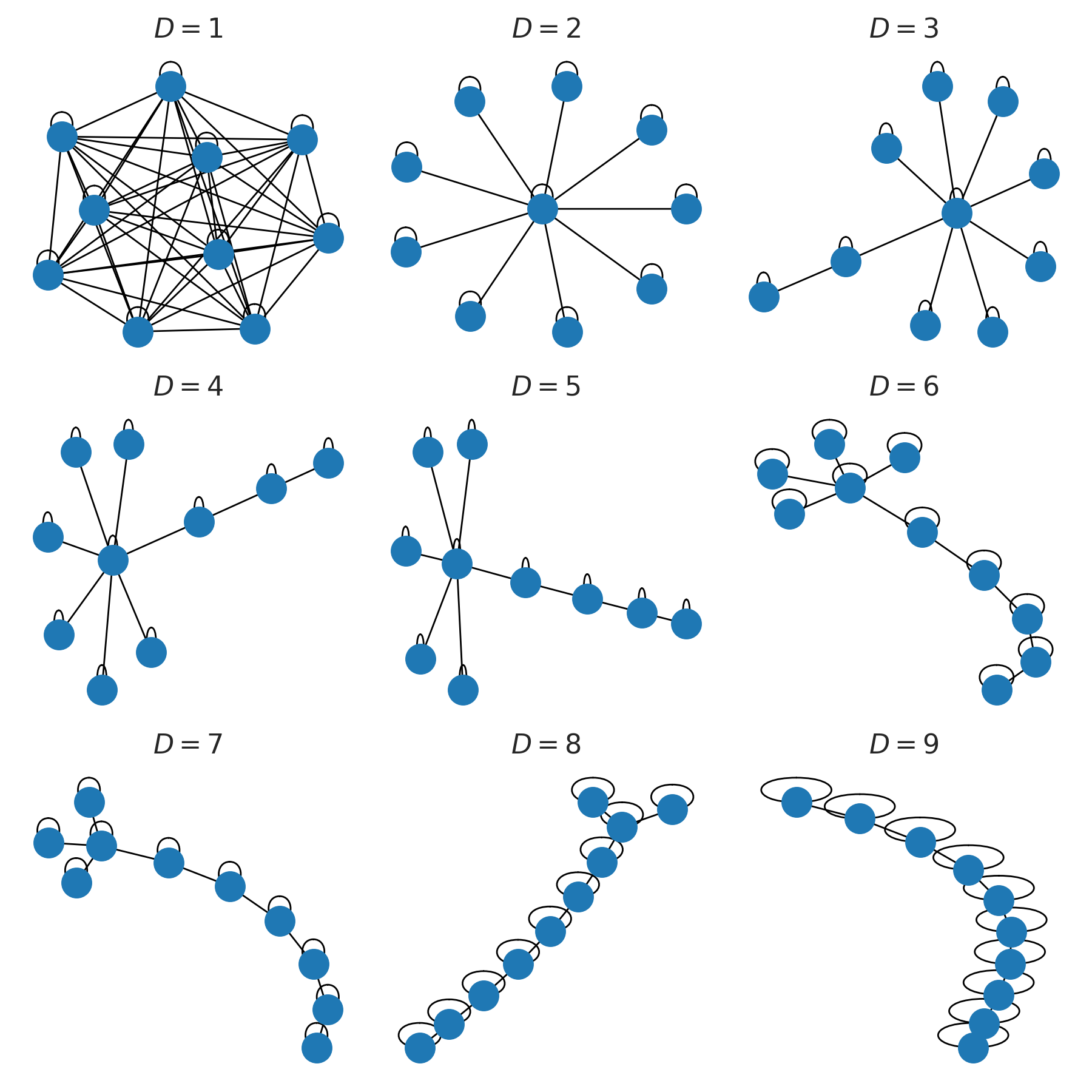}
	\caption{An illustration increasing $D$ while keeping $|S|=10$ unchanged.}
	\label{fig:IncreasingD}
\end{figure}

\fi
\end{document}